\def\eqref#1{equation~(\ref{#1})}
\def\1{\bm{1}}
\def\rd{{\textnormal{d}}}
\def\rt{{\textnormal{t}}}
\def\rx{{\textnormal{x}}}
\def\ry{{\textnormal{y}}}
\def\vmu{{\bm{\mu}}}
\def\vtheta{{\bm{\theta}}}
\def\vb{{\bm{b}}}
\def\vd{{\bm{d}}}
\def\vf{{\bm{f}}}
\def\vg{{\bm{g}}}
\def\vl{{\bm{l}}}
\def\vr{{\bm{r}}}
\def\vx{{\bm{x}}}
\def\vy{{\bm{y}}}
\def\mC{{\bm{C}}}
\def\mD{{\bm{D}}}
\def\mI{{\bm{I}}}
\def\mP{{\bm{P}}}
\def\mR{{\bm{R}}}
\def\mW{{\bm{W}}}
\def\mX{{\bm{X}}}
\def\mY{{\bm{Y}}}
\def\mZ{{\bm{Z}}}
\def\mSigma{{\bm{\Sigma}}}
\DeclareMathAlphabet{\mathsfit}{\encodingdefault}{\sfdefault}{m}{sl}
\SetMathAlphabet{\mathsfit}{bold}{\encodingdefault}{\sfdefault}{bx}{n}
\newcommand{\tens}[1]{\bm{\mathsfit{#1}}}
\def\tX{{\tens{X}}}
\newcommand{\E}{\mathbb{E}}
\newcommand{\R}{\mathbb{R}}
\DeclareMathOperator*{\argmax}{arg\,max}
\DeclareMathOperator*{\argmin}{arg\,min}
\newtheorem{theorem}{Theorem}[section]
\newtheorem{lemma}[theorem]{Lemma}
\title{Optimizing Mode Connectivity via Neuron Alignment}
\author{%
  N. Joseph Tatro
%   \thanks{Use footnote for providing further information
%     about author (webpage, alternative address)---\emph{not} for acknowledging
%     funding agencies.} 
    \\
  Dept. of Mathematical Sciences\\
  Rensselaer Polytechnic Institute \\
  Troy, NY \\
  \texttt{tatron@rpi.edu} \\
  % examples of more authors
   \And
   Pin-Yu Chen \\
   IBM Research \\
   Yorktown Heights, NY \\
   \texttt{pin-yu.chen@ibm.com} \\
   \And
   Payel Das \\
   IBM Research \\
   Yorktown Heights, NY \\
   \texttt{daspa@us.ibm.com} \\
   \And
   Igor Melnyk \\
   IBM Research \\
   Yorktown Heights, NY \\
   \texttt{igor.melnyk@ibm.com} \\
   \And
   Prasanna Sattigeri \\
   IBM Research \\
   Yorktown Heights, NY \\
   \texttt{psattig@us.ibm.com} \\
   \And
   Rongjie Lai \\
   Dept. of Mathematical Sciences \\
   Rensselaer Polytechnic Institute \\
   Troy, NY \\
   \texttt{lair@rpi.edu} \\
}
\begin{document}

\maketitle

\begin{abstract}
%   The loss landscapes of deep neural networks are not well understood due to their high nonconvexity. Empirically, the local minima of these loss functions can be connected by a learned curve in model space, along which the loss remains nearly constant; a feature known as mode connectivity. Yet, current path finding algorithms do not consider the influence of symmetry in the loss surface created by model weight permutations. We propose a framework to investigate the effect of symmetry on landscape connectivity by directly optimizing the weight permutations of the networks being connected. To learn a locally optimal permutation, we introduce both a proximal alternating minimization scheme with some convergence guarantees as well as an inexpensive heuristic referred to as neuron alignment. Empirically, optimizing the weight permutation is critical for efficiently learning a simple, planar, low-loss curve between networks that successfully generalizes. Surprisingly, our alignment method can significantly alleviate the recently identified robust loss barrier on the path connecting two adversarial robust models and find more robust and accurate models on the path.
The loss landscapes of deep neural networks are not well understood due to their high nonconvexity. Empirically, the local minima of these loss functions can be connected by a learned curve in model space, along which the loss remains nearly constant; a feature known as mode connectivity. Yet, current curve finding algorithms do not consider the influence of symmetry in the loss surface created by model weight permutations. We propose a more general framework to investigate the effect of symmetry on landscape connectivity by accounting for the weight permutations of the networks being connected. To approximate the optimal permutation, we introduce an inexpensive heuristic referred to as neuron alignment. Neuron alignment promotes similarity between the distribution of intermediate activations of models along the curve. We provide theoretical analysis establishing the benefit of alignment to mode connectivity based on this simple heuristic. We empirically verify that the permutation given by alignment is locally optimal via a proximal alternating minimization scheme. Empirically, optimizing the weight permutation is critical for efficiently learning a simple, planar, low-loss curve between networks that successfully generalizes. Our alignment method can significantly alleviate the recently identified robust loss barrier on the path connecting two adversarial robust models and find more robust and accurate models on the path. Code is available at \url{https://github.com/IBM/NeuronAlignment}.
% \footnote{Code is available at \url{https://github.com/IBM/NeuronAlignment}}

\end{abstract}
\section{Introduction}
\label{sec:intro}

Loss surfaces of neural networks have been of recent interest in the deep learning community both from a numerical \citep{dauphin2014identifying, sagun2014explorations} and a theoretical \citep{choromanska2014loss, safran2015quality} perspective. Their optimization yields interesting examples of a high-dimensional non-convex problem, where counter-intuitively gradient descent methods successfully converge to non-spurious minima. Practically, recent advancements in several applications have used insights on loss surfaces to justify their approaches. For instance, \citet{moosavi2019robustness} investigates regularizing the curvature of the loss surface to increase the robustness of trained models. 

One  interesting question about these non-convex loss surfaces is to what extent  trained models, which correspond to local minima, are connected. Here, \textit{connection} denotes the existence of a path between the models, parameterized by their weights, along which loss is nearly constant. There has been conjecture that such models are connected asymptotically, with respect to the width of hidden layers. Recently, \citet{freeman2016topology} proved this for rectified networks with one hidden layer. 

% \PY{Moreover, \citet{zhao2020bridging} uses the property of mode connectivity to recover adversarially tampered models with limited clean data.}

When considering the connection between two neural networks, it is important for us to consider what properties of the neural networks are intrinsic. Intuitively, there is a permutation ambiguity in the indexing of units in a given hidden layer of a neural network, and as a result, this ambiguity extends to the network weights themselves. Thus, there are numerous equivalent points in model space that correspond to a given neural network,  creating weight symmetry in the loss landscape. It is possible that the minimal loss paths between a network and all networks equivalent to a second network could be quite different. If we do not consider the best path among this set, we could fail to see to what extent models are intrinsically connected. Therefore, in this work we propose to develop a technique for more consistent model interpolation / optimal connection finding by investigating the effect of weight symmetry in the loss landscape.% in an effort to find more optimal curves. 
The analyses and results will give us insight into the geometry of deep neural network loss surfaces that is often hard to study theoretically. 
% The study of curve finding significant in its own right, as it is a recent tool in the study of topics such as the lottery ticket hypothesis \citep{anonymous2020mode}. 

% \subsection{Related Work}
\paragraph{Related Work}
\citet{freeman2016topology} were one of the first to rigorously prove that one hidden layer rectified networks are asymptotically connected. Recent numerical works have demonstrated learning parameterized curves along which loss is nearly constant. Concurrently, \citet{garipov2018loss} learned Bezier curves while \citet{draxler2018essentially} learned a curve using nudged elastic band energy. 
% between two models. 
\citet{gotmare2018using} showed that these algorithms work for models trained using different hyperparameters. Recently, \citet{kuditipudi2019explaining} analyzed the connectivity between $\epsilon$-dropout stable networks. This body of work can be seen as the extension of the linear averaging of models studied in \citep{goodfellow2014qualitatively}. Recent applications of alignment include model averaging in \citep{singh2019model} and federated learning in \citep{wang2018understanding}. Recently, \citet{zhao2020bridging} used mode connectivity to recover adversarially tampered models with limited clean data.

The symmetry groups in neural network weight space have long been formally studied \citep{chen1993geometry}. Ambiguity due to scaling in the weights has received much attention. Numerous regularization approaches based on weight scaling such as in \citep{cho2017riemannian} have been proposed to improve the performance of learned models. Recently, \citet{brea2019weight} studied the existence of \textit{permutation plateaus} in which the neurons in the layer of a network can all be permuted at the same cost. 
% Developed independently of our work, \citet{anokhin2020lowloss} recently considered the influence of weight symmetry in mode connectivity, while learning curves to connect unpermuted models.    

A second line of work studies network similarity. \citet{kornblith2019similarity} gives a comprehensive review while introducing centered kernel alignment (CKA) for comparing different networks. CKA is an improvement over the CCA technique introduced in \citep{raghu2017svcca} and explored further in \citep{morcos2018insights}. Another contribution is the neuron alignment algorithm of \citep{li2016convergent}, which empirically showed that two networks of same architecture learn a subset of similar features.

% \subsection{Contributions}
\paragraph{Contributions}
We summarize our main contributions as follows:
%  \\ 1. We generalize the problem of learning a curve between two neural networks, optimizing both the permutation of the second model weights and the curve parameters. Additionally, we introduce neuron alignment to learn an approximation to the optimal permutation for \textit{aligned} curves.
% % \PY{(I think this one needs to be further polished, and perhaps don't mention Li work so not to undermine our novelty)}
% \\
% 2. We empirically demonstrate that alignment promotes similarity of the intermediate activation distributions of models along the learned curve.
% % with those of the endpoint models. 
% Additionally, we offer analysis for a specific alignment algorithm, establishing a mechanism through which alignment improves mode connectivity. \\
% 3. We perform experiments on 3 datasets and 3 architectures affirming that more optimal curves can be learned faster with neuron alignment. Through utilizing a rigorous optimization  method, Proximal Alternating Minimization (PAM), we observe that this aligned permutation is close to being locally optimal and consistently outperforms solutions given from random permutation initialization. \\
% 4. For learned curves connecting adversarial robust models, we observe that the robust loss barrier recently identified in \citep{zhao2020bridging} can be greatly reduced with alignment, allowing us to find more accurate robust models on the path.
\begin{enumerate}[leftmargin=1.0cm]
    \item We generalize learning a curve between two neural networks by optimizing both the permutation of the second model weights and the curve parameters. Additionally, we introduce neuron alignment to learn an approximation to the optimal permutation for \textit{aligned} curves.
% \PY{(I think this one needs to be further polished, and perhaps don't mention Li work so not to undermine our novelty)}
\item We demonstrate that alignment promotes similarity of the intermediate activations of models along the learned curve.
% with those of the endpoint models. 
Additionally, we offer analysis for a specific alignment algorithm, establishing a mechanism through which alignment improves mode connectivity. 
\item We perform experiments on 3 datasets and 3 architectures affirming that more optimal curves can be learned with neuron alignment. Through utilizing a rigorous optimization  method, Proximal Alternating Minimization, we observe that this aligned permutation is nearly locally optimal and consistently outperforms solutions given from random initialization. 
\item  For learned curves connecting adversarial robust models, we observe that the robust loss barrier recently identified in \citep{zhao2020bridging} can be greatly reduced with alignment, allowing us to find more accurate robust models on the path.
\end{enumerate}

\section{Background}
\label{sec:background}

\begin{figure}[tb]
    \begin{center}
    % \begin{subfigure}[b]{0.48\linewidth}
    % 	\centering
	   % \includegraphics[width=\linewidth]{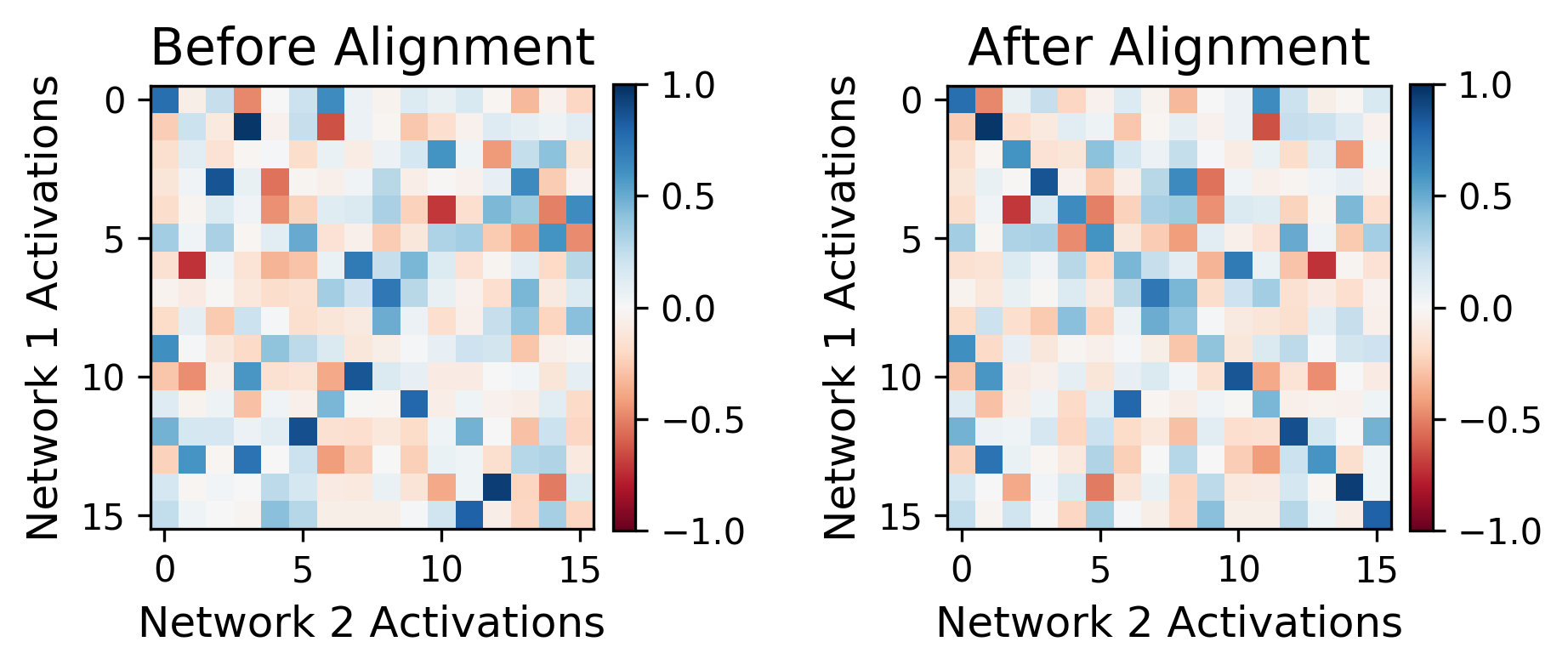}
	   %% \caption{Cross-correlation between neurons}
	   %% \label{subfig:corr}
    % \end{subfigure}
    % \hfill
    \begin{subfigure}[b]{0.32\linewidth}
    	\centering
	    \includegraphics[width=\linewidth]{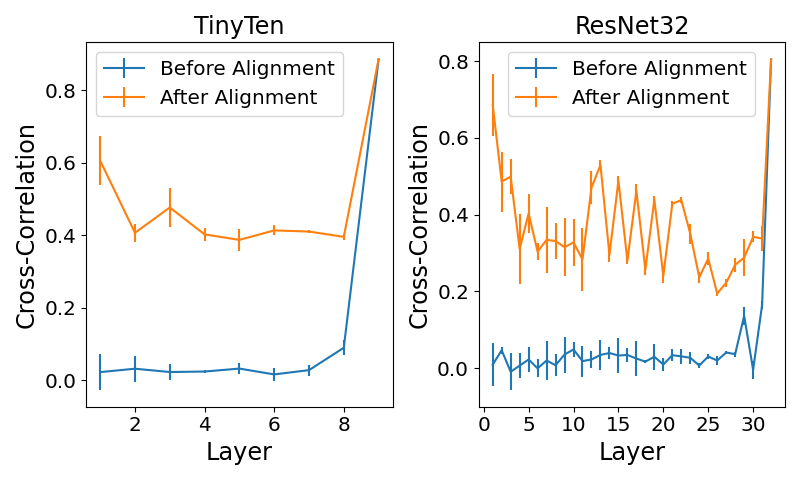}
	    \caption{Correlation between curve endpoints}
	   % \caption{Mean cross-correlation at each layer}
	    \label{subfig:corr_endpoint}
    \end{subfigure}
    \hfill
    \begin{subfigure}[b]{0.32\linewidth}
    	\centering
	    \includegraphics[width=\linewidth]{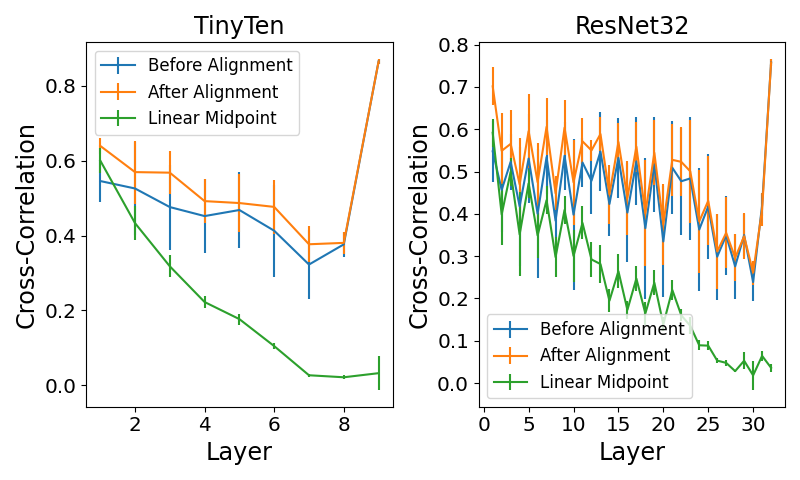}
	    \caption{Correlation between endpoint and unaligned curve midpoint}
	   % \caption{Mean cross-correlation at each layer}
	    \label{subfig:corr_end_unalignmid}
    \end{subfigure}
    \hfill 
    \begin{subfigure}[b]{0.32\linewidth}
    	\centering
	    \includegraphics[width=\linewidth]{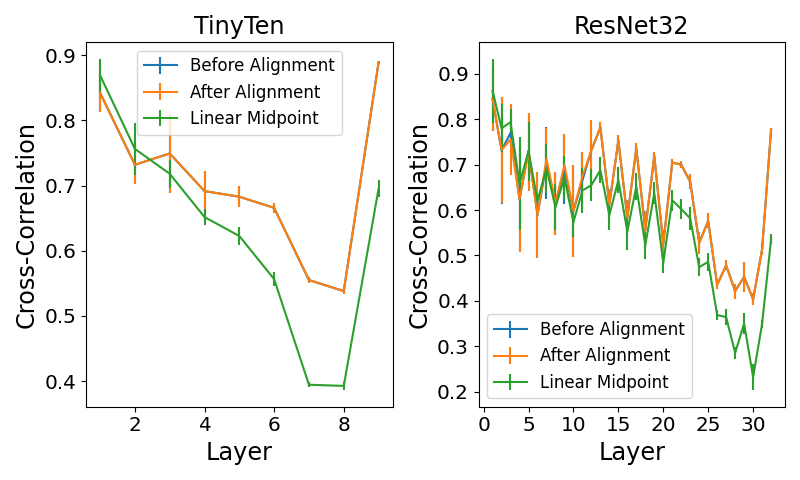}
	    \caption{Correlation between endpoint and aligned curve midpoint}
	   % \caption{Mean cross-correlation at each layer}
	    \label{subfig:corr_end_alignmid}
    \end{subfigure}
    \end{center}
    % \vspace{-2mm}
    % \textbf{(\ref{subfig:corr})}
    % (\ref{subfig:viz_align})
    \caption{\textbf{Left:} The mean cross-correlation between corresponding units of two models for each layer before and after alignment. The standard deviation of this correlation signature over a set of different network pairs is displayed. The quality of correspondence at each layer is improved by alignment. \textbf{Middle/Right:} These plots display the correlation signatures for the endpoint models and the midpoint of the learned \textbf{unaligned}/\textbf{aligned} curves. This signature is shown before and after calculating the alignment of the midpoint to the endpoint model. For comparison, the signature between the endpoint and the linear initialization midpoint is shown. For the aligned curve, the orange and blue curves are almost identical. We empirically see that alignment promotes greater similarity in the intermediate activations of the midpoint model to those of the endpoint model.}
    \label{fig:corr_align}
    \vspace{-2mm}
\end{figure}

% In this section we review the existing approaches for loss optima connectivity and neuron alignment. 

\subsection{Mode Connectivity}
% \label{subsec:lossconnect}
To learn the minimal loss path connecting two neural networks, $\vtheta_1$ and $\vtheta_2$, with $N$ parameters, we utilize the approach of \citep{garipov2018loss}. We search for the connecting path, $\vr : [0,1] \mapsto \R^{N}$, that minimizes the average of the loss, $\mathcal{L}$, along the path. This problem is formalized in \eqref{model:curve_finding}.    
\begin{equation} \label{model:curve_finding}
\begin{split}
    \vr^* = & \arg \min_{\vr} \quad  \frac{\int_{t \in [0, 1]} \mathcal{L}(\vr(t)) \|\vr'(t)\|  dt}{\int_{t \in [0, 1]}\|\vr'(t)\|  dt}  \qquad \text{subject to} \quad  \vr(0) = \vtheta_1, \vr(1) = \vtheta_2.
    % \vr^* = & \arg \min_{\vr} \quad  \int_{t \in [0, 1]} \mathcal{L}(\vr(t)) \|\vr'(t)\|  dt /  \text{arclength}(\vr),  \quad \text{subject to} \quad  \vr(0) = \vtheta_1, \vr(1) = \vtheta_2.
\end{split}
\end{equation}
For tractability, $\vr^*$ can be approximated by a parameterized curve $\vr_\phi$, where $\phi$ denotes the parameters. 
% For instance, as described in Section \ref{sec:experiments}, this paper will be using the quadratic Bezier curve. 
An arclength parameterization, $||r'(t)|| = 1$ for all $t$, is assumed to make optimization computationally feasible. If the endpoints are global minima and a flat loss path exists, then the optimal objective of \eqref{model:curve_finding} is unchanged. Algorithm \ref{alg:curve_find} in Appendix \ref{sec:algs} addresses computationally solving this problem. For clarity, $r_\phi$ is the curve on the loss surface, while $\vr_\phi(t)$ is a network on the curve. 

\subsection{Neuron Alignment via Assignment}
\label{subsec:background_align}

Neuron alignment refers to techniques of determining a bipartite match between neurons in the same layer of different neural networks. Given input $\rd$ drawn from the input data distribution $D$, let $\mX_{l, i}^{(1)}(\rd) \in \R^{k}$ represent the activations of channel $i$ in layer $l$ of network $\vtheta_1$, where $k$ is the number of units in the channel. A channel could be a unit in a hidden state or a filter in a convolutional layer.
% , where $k$ would be $1$ or the number of pixels in the filter respectively. 

Given two networks of the same architecture, $\vtheta_1$ and $\vtheta_2$, we define a permutation, $\mP_l: K_l \rightarrow K_l$, that maps from the index set of neurons in layer $l$ of $\vtheta_1$ to $\vtheta_2$. This provides a correspondence between the neurons, and we can associate a cost function $c: \mathbb{R}^k \times \mathbb{R}^k \rightarrow \mathbb{R}^{+}$ for the individual correspondences to minimize. This is exactly solving the \textit{assignment problem} \citep{burkard1999linear},
\begin{equation}
\label{eq:opt_trans}
    \mP_l^* = \argmin_{\mP_l \in \Pi_{K_l}} \sum_{i \in K_l} c(\mX_{l, i}^{(1)}, \mX_{l, \mP_l(i)}^{(2)}). 
\end{equation}
% Alternatively stated in the language of optimal transport, $\mP_l$ is essentially a transport plan and $c$ is the corresponding cost function \citep{peyre2019computational}. Then we are looking for the transportation plan $\mP_l$ that minimizes the total cost with additional constraints related to bipartite correspondence.
This has natural ties to optimal transport \citep{peyre2019computational} discussed further in Appendix \ref{sec:theory_neuron_alignment}. There are many valid ways to construct $\mP_l^*$ using different cost functions. For instance, in \citep{li2016convergent}, the cost function $c$ corresponds to $c(\vx, \vy) := 1 - \text{correlation}(\vx, \vy)$ for $\vx, \vy \in \mathbb{R}^k$. 
For clarity, $\text{correlation}(\vx, \vy)$ between channels is defined by the following equation, with channel-wise means $\vmu$ and standard deviations $\mSigma$, 
\begin{equation}
    \text{correlation}(\vx, \vy) = \frac{\vx - \vmu_x}{\mSigma_x}^T \frac{\vy - \vmu_y}{ \mSigma_y}.
\end{equation}
% \begin{equation}
%     \label{eq:li_na}
%     c(\vx, \vy) := 1 - \text{correlation}(\vx, \vy) \quad \text{for} \quad \vx, \vy \in \mathbb{R}^k.
% \end{equation}
We also note that while the method of \citep{li2016convergent} uses post-activations, alignment can also be done using pre-activations. That is, $\mX_{l, i}^{(1)}$ corresponds to the values at the neurons before the application of a pointwise nonlinearity, $\sigma$. Certain choices of the cost function $c$ and definition of the activations allow for more tractable theoretical analysis of alignment. Regardless, all reasonable choices promote a kind of similarity in the intermediate activations between two networks.   

The correlation-based alignment is visualized in Figure \ref{subfig:corr_endpoint}. This plot displays mean cross-correlation at each layer between corresponding neurons. With this \textit{correlation signature} being increased highly with alignment, we are matching a subset of highly correlated features. 
%\vspace{-3mm}

\subsection{Adversarial Training}
\label{subsec:background_adv_training}
A recent topic of interest has been learning robust models that can withstand adversarial attacks. We specifically consider Projected Gradient Descent (PGD) attacks as described in \citep{aleks2017deep}. This is an evasion attack that adds optimized $l_{\infty}$ bounded noise to input to degrade accuracy. Security to these are important as they can be conducted without access to model parameters as in 
\citep{Chen_2017}. Moreover, adversarial attacks can be used during model training to improve adversarial 
robustness, a method known as adversarial training \citep{goodfellow2014explaining,aleks2017deep}.

\section{Mode Connectivity with Weight Symmetry}
\label{sec:connect_up_to_symmetry}

We clarify the idea of weight symmetry in a neural network. Let $\vtheta_1$ be a network on the loss surface parameterized by its weights. A permutation $\mP_l$ is in $\Pi_{|K_l|}$, the set of permutations on the index set of channels in layer $l$. For simplicity suppose we have an $L$ layer feed-forward network with pointwise activation function $\sigma$, weights $\{W_l\}_{l=1}^L$, and input $X_0$. Then the weight permutation ambiguity becomes clear when we introduce the following set of permutations to the feedforward equation:
\begin{equation}\label{eq:perm_feedforward}
\begin{split}
    \mY := & \mW_L \mP_{L-1}^T \circ \sigma \circ \mP_{L-1} \mW_{L-1} \mP_{L-2}^T \circ \ldots \circ \sigma \circ  \mP_1 \mW_1 \mX_0.  
\end{split}
\end{equation}
% \begin{equation}\label{eq:perm_feedforward}
% \begin{split}
% \mY := & \mW_L \mP_{L-1}^T \mA_{L-1} \\
% & \quad \mA_k = \sigma \circ \mP_{k} \mW_{k} \mP_{k-1}^T \mA_{k-1}, \mA_1 = \sigma \circ \mP_{1} \mW_{1} \mX_0 
% \end{split}
% \end{equation}
Then we can define the network weight permutation $\mP$ as the block diagonal matrix, $\text{blockdiag}( \mP_1, \mP_2, ..., \mP_{L-1})$. Additionally, $\mP \vtheta$ denotes the network parameterized by the weights $[\mP_1 \mW_1, \mP_2 \mW_2 \mP_1^T, ..., \mW_L \mP^T_{L-1} ]$. Note that we omit permutations $\mP_0$ and $\mP_L$, as the input and output channels of neural networks have a fixed ordering. It is critical that $\mP_i$ is a permutation, as more general linear transformations do not commute with $\sigma$. 
% , so they correspond to the identity $\mI$. 
Without much difficulty this framework generalizes for more complicated architectures. We discuss this for residual networks in Appendix \ref{sec:resnet_align}. 

% \subsection{Curve Finding up to Symmetry}
% \label{subsec:joint_model}

From \eqref{eq:perm_feedforward}, it becomes clear that the networks $\vtheta_1$ and $\mP \vtheta_1$ share the same structure and intermediate outputs up to indexing. Taking weight symmetry into account, we can find the optimal curve connecting two networks up to symmetry with the model in \eqref{eq:curve_find_sym}.
% \begin{equation} \label{eq:curve_find_sym}
% \begin{split}
%     \min\limits_{\phi, \mP}  \quad & \E_{\rt \sim U} [\mathcal{L}(\vr_\phi(t))] \\
%     \text{subject to} \quad  & \vr_\phi(0) = \vtheta_1, \vr_\phi(1) = \mP \vtheta_2,  \\ 
%     & \mP = \text{blockdiag}(\mP_1, \mP_2, ..., \mP_{L-1}) \quad \text{where} \quad  
%     \mP_l \in \Pi_{|K_l|}   
% \end{split}
% \end{equation}
\begin{equation} \label{eq:curve_find_sym}
\begin{split}
    % \min\limits_{\phi, \mP}  \quad & \E_{\rt \sim U} [\mathcal{L}(\vr_\phi(t))] \\
    % \text{subject to} \quad  & \vr_\phi(0) = \vtheta_1, \vr_\phi(1) = \mP \vtheta_2, \mP = \text{blockdiag}(\{\mP_i\}_{i=1}^{L-1}) \quad \text{where} \quad  
    % \mP_l \in \Pi_{|K_l|} 
    \min\limits_{\phi, \mP}  \quad & \E_{\rt} [\mathcal{L}(\vr_\phi(t))] \quad s.t. \quad  \vr_\phi(0) = \vtheta_1, \vr_\phi(1) = \mP \vtheta_2, \mP = \text{blockdiag}(\{\mP_i \in \Pi_{|K_i|}\}_{i=1}^{L-1}). 
\end{split}
\end{equation}

\subsection{Neuron Alignment for Approximating the Optimal Permutation}
\label{subsec:na_heuristic}

% This work focuses on using neuron alignment to approximate the optimal permutation, $\mP^*$ in \eqref{eq:curve_find_sym}. 
% % We begin by motivating the use of alignment algorithms. 
% In a simplified example, given two image classifiers, imagine there exists a \textit{cat filter} and a \textit{dog filter} in a given layer for each network. Intuitively, the curve learned by mode connectivity should connect the weights between the corresponding \textit{cat filters} and \textit{dog filters}, instead of mixing them. Assuming that the activation distributions of the \textit{cat filters} of the two networks are more similar than to those of the other \textit{dog filter}, this can be achieved by connecting aligned networks.
% As stated in \ref{subsec:background_align}, this is because alignment puts neurons in correspondence that are similar under some metric. 
% We refer once again to Figure \ref{fig:corr_align} to stress that this is empirically reasonable. 

% \subsubsection{Alignment Promotes Similarity of Models on Curve to Endpoints}
With the correlation signatures of Figure \ref{fig:corr_align}, it is clear that trained networks of the same architecture share some similar structure. Then we could expect that if a model along the curve, $\vr(t)$, is optimal, it shares some structure of $\vr(0)$ and $\vr(1)$. To this end, it is sensible that the curve connecting the models interpolates weights of similar filters. This motivates our use of neuron alignment to estimate $\mP^*$ in \eqref{eq:curve_find_sym}, so that we are learning a curve between \textit{aligned networks}. In Section \ref{sec:experiments}, we confirm increased similarity of intermediate activations of models along the curve to endpoint models. 

\paragraph{Theory for Using Neuron Alignment} We can theoretically establish how increased similarity of intermediate activation distributions benefit mode connectivity for a specific method of alignment. Namely, we consider an alignment of the neurons minimizing the expected $L_2$ distance between their pre-activations. Then this permutation is an optimal transport plan associated with the Wasserstein-2 metric, discussed further in Appendix \ref{sec:theory_neuron_alignment}.
% It has achieved recent popularity in the machine learning community through the success of WGANs \citep{arjovsky2017wasserstein}. 
The following theorem derives tighter bounds on the loss of the linear interpolations between models after alignment compared to before alignment. Then these upper bounds extend to the optimal curves themselves.

Let the neural networks $\vtheta$ have a simple feed-forward structure as in \eqref{eq:perm_feedforward} with Lipschitz-continuous $\sigma$. The loss function $\mathcal{L}$ is also assumed to be Lipschitz-continuous or continuous with the input data distribution bounded. The function, $F$, is taken to denote the objective value of \eqref{eq:curve_find_sym} given curve parameters and permutation.  

\setcounter{topnumber}{0}

\begin{theorem}[Alignment Leads to Tighter Loss Bound]\label{thm:na_bound}
Let the above assumptions be met. Consider the following solutions to \eqref{eq:curve_find_sym}, $(\phi_u^*, \mathbf{I})$ and $(\phi_a^*, \mP)$, where the corresponding curve parameters, $\phi$, are optimal for the given permutation. Here $\mP$ is the solution to \eqref{eq:opt_trans} using the aforementioned alignment technique. Then there exists upper bounds $B_u$ and $B_a$ such that
\begin{equation}
    F(\phi_u^*, \mathbf{I}) \leq B_u, \quad F(\phi_a^*, \mP) \leq B_a \quad \text{where} \quad B_a \leq B_u.
\end{equation}
\end{theorem}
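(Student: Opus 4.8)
The plan is to exhibit an explicit feasible curve—the straight-line interpolation between the two (possibly permuted) endpoints—and to bound its averaged loss by a quantity that is monotone in the per-layer alignment costs; since the alignment of Section~\ref{subsec:na_heuristic} minimizes exactly these costs, its bound cannot exceed the one for the identity. Concretely, for any admissible block-diagonal $\mQ$ the segment $\vr_\mQ(t)=(1-t)\vtheta_1+t\,\mQ\vtheta_2$ satisfies the endpoint constraints of \eqref{eq:curve_find_sym}, so it is a feasible (generally suboptimal) curve and yields $F(\phi_u^*,\mathbf{I})\le\E_t[\mathcal{L}(\vr_{\mathbf{I}}(t))]$ and $F(\phi_a^*,\mP)\le\E_t[\mathcal{L}(\vr_{\mP}(t))]$. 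It therefore suffices to construct upper bounds $B_u,B_a$ on these two segment objectives with $B_a\le B_u$.

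The heart of the argument bounds $\mathcal{L}(\vr_\mQ(t))$ through the discrepancy between the pre-activations of $\vtheta_1$ and $\mQ\vtheta_2$. Along the segment the first-layer pre-activation is exactly the interpolation $(1-t)\mathbf{z}_1^{(1)}+t\,\mQ_1\mathbf{z}_1^{(2)}$ of the two endpoints; writing the interpolated post-activation as the convex combination of the endpoint post-activations plus a remainder and using $L_\sigma$-Lipschitzness of $\sigma$ bounds that remainder by $2t(1-t)L_\sigma\|\mathbf{z}_1^{(1)}-\mQ_1\mathbf{z}_1^{(2)}\|$. Propagating this deviation through the remaining layers—multiplying at each stage by the operator norm of the interpolated weight matrix and by $L_\sigma$—and then invoking Lipschitzness (or boundedness) of $\mathcal{L}$ gives $\mathcal{L}(\vr_\mQ(t))\le(1-t)\mathcal{L}(\vtheta_1)+t\,\mathcal{L}(\mQ\vtheta_2)+t(1-t)\sum_{l=1}^{L-1}D_l\,\E\|\mathbf{z}_l^{(1)}-\mQ_l\mathbf{z}_l^{(2)}\|$, where each $D_l$ collects the accumulated Lipschitz and weight-norm constants. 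Since permutations are orthogonal they preserve both the network function and all operator norms, so $\mathcal{L}(\mQ\vtheta_2)=\mathcal{L}(\vtheta_2)$ and the $D_l$ may be taken independent of $\mQ$; averaging over $t$ then yields $B(\mQ):=C+\sum_{l}D_l'\,\E\|\mathbf{z}_l^{(1)}-\mQ_l\mathbf{z}_l^{(2)}\|$ with $C$ and the $D_l'$ independent of $\mQ$.

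Setting $B_u:=B(\mathbf{I})$ and $B_a:=B(\mP)$ completes the argument: because each $\mP_l$ is by construction the minimizer over $\Pi_{K_l}$ of the expected pre-activation distance in \eqref{eq:opt_trans}, and the identity is one admissible choice, $\E\|\mathbf{z}_l^{(1)}-\mP_l\mathbf{z}_l^{(2)}\|\le\E\|\mathbf{z}_l^{(1)}-\mathbf{z}_l^{(2)}\|$ for every $l$; as the $D_l'$ are nonnegative, summing gives $B_a\le B_u$, and the chains $F(\phi_a^*,\mP)\le B_a$ and $F(\phi_u^*,\mathbf{I})\le B_u$ give the claim. I expect the main obstacle to be the layerwise propagation: the clean interpolation identity holds only at the first pre-activation, since for $l>1$ the interpolated pre-activation depends on the already-perturbed post-activation of layer $l-1$. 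One must therefore set up an induction that tracks the compounding deviation through successive nonlinearities and weight products while keeping each $D_l'$ finite—precisely where the Lipschitz-continuity of $\sigma$ and $\mathcal{L}$ together with bounded weights (or bounded data) become indispensable.
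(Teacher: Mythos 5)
Your proposal follows essentially the same route as the paper's proof: use the linear interpolation between the (permuted) endpoints as a feasible curve, bound its loss at each $t$ by inductively propagating the deviation of intermediate pre-activations from the endpoint pre-activations through the layers via Lipschitz continuity of $\sigma$, operator norms of the weights, and (restricted) Lipschitz continuity of $\mathcal{L}$, and then observe that each per-layer deviation term $\|\mathbf{z}_l^{(1)}-\mP_l\mathbf{z}_l^{(2)}\|$ is by construction minimized by the alignment, so the aligned bound dominates. The only cosmetic difference is that you anchor the final bound to the convex combination of endpoint losses while the paper anchors it to the ground-truth output via $\epsilon$-optimality of the endpoints; the induction you flag as the main obstacle is exactly the recursion the paper carries out.
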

\begin{proof}
See Appendix \ref{subsec:na_theorem} for the complete proof. We provide an outline below. 

Consider the model at point $t$ along the linear interpolation between two endpoint models. It follows that the preactivations of the first layer of this model are closer (in the $W_2$ sense) to those of an endpoint after alignment than before alignment. After applying the nonlinear activation function, we have a tighter upper bound on the distance between the first layer activations of the model and those of an endpoint in the aligned case via Lipschitz continuity. Using matrix norm inequalities of the layer weights, we find a tighter bound on the distance between the preactivations of the second layer in the model and those of an endpoint model after alignment. 

We iteratively repeat this process to find a tighter upper bound on the distance between the model output to those of an endpoint model after alignment. Finally, exploiting the Lipschitz continuity of the restricted loss function, we have a tighter bound on the loss of the model output after alignment. As a tighter bound can be found at each point along the linear interpolation for aligned models, it provides a tighter bound on the associated average loss. This tighter upper bound for the linear interpolation from using alignment is then clearly a tighter upper bound for the optimal curve.         
\end{proof}

\setcounter{topnumber}{2}

The difficulty in directly comparing $F(\phi_u^*, \mathbf{I})$ and $F(\phi_a^*, \mP)$ stems from the nonlinear activation function. Thus, we bound each of these values individually. We now discuss tightness of these bounds. In the Appendix \ref{subsec:bound_tightness}, we explore what conditions need to be met for the bounds on the average loss of the linear interpolations to be tight. Given the bounds for loss along linear interpolations are tight, we can consider the tightness of the bound for loss along piecewise linear curves between networks. These piecewise linear curves can themselves approximate continuous curves. Given a piecewise linear curve with optimal loss, for two models on the same line segment of this curve, we can consider the optimal piecewise linear curve between them. It follows that this optimal curve is their linear interpolation. Therefore, we have tightness in the upper bound for curve-finding restricted to piecewise linear curves between networks of a class for which we have tightness for the linear interpolation. Via an argument by continuity, this tightness extends to continuous curves. 
% Thus, these bounds are nontrivial as we have tightness for a wide class of networks and curve parameterizations under a reasonable assumption. 

\begin{figure}[tb]
    \centering
    \begin{subfigure}[b]{0.32\linewidth}
        \centering
        \includegraphics[width=\linewidth]{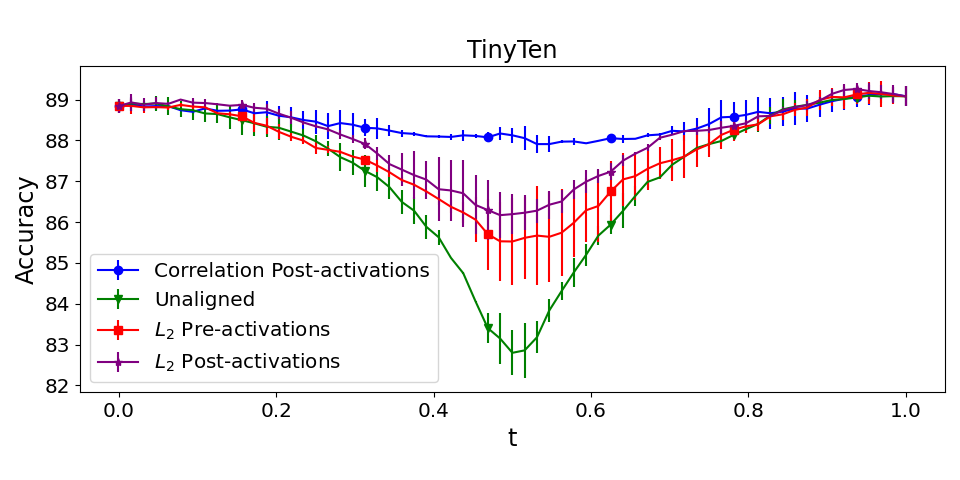}
        \caption{CIFAR10}
    \end{subfigure}
    \hfill
    \begin{subfigure}[b]{0.32\linewidth}
        \centering
        \includegraphics[width=\linewidth]{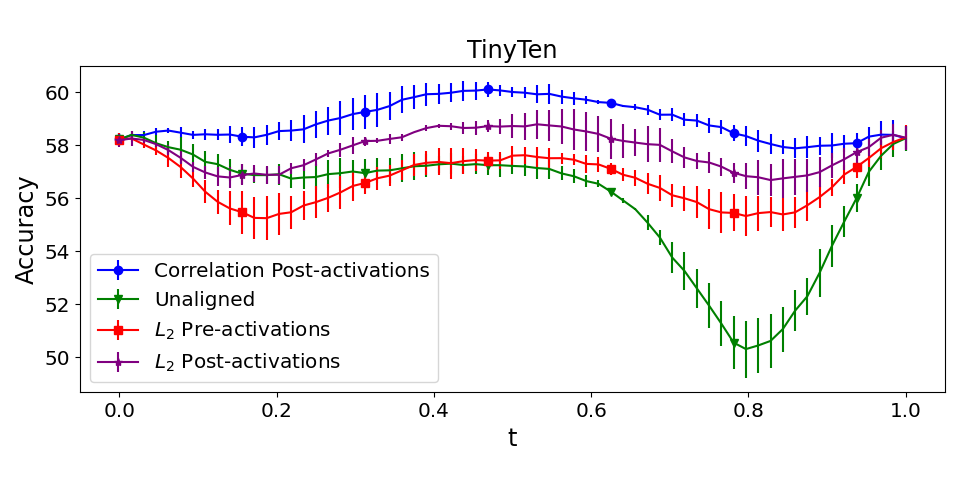}
        \caption{CIFAR100}
    \end{subfigure}
    \hfill 
    \begin{subfigure}[b]{0.32\linewidth}
        \centering
        \includegraphics[width=\linewidth]{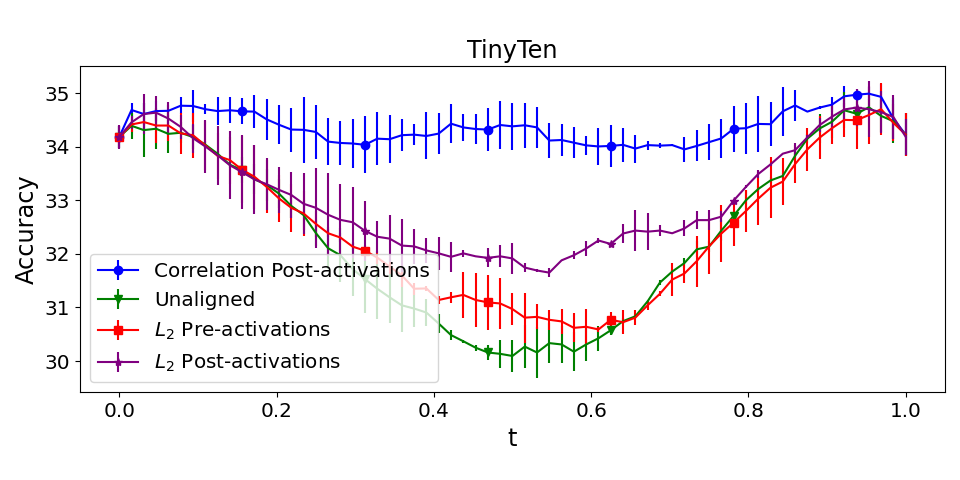}
        \caption{Tiny ImageNet}
    \end{subfigure}
    \caption{Comparison of different alignment techniques across the different datasets for the TinyTen architecture. These plots show the test accuracy along the learned curves. Notice that all methods outperform the \textit{Unaligned} case. Notably, alignment of post-activations outperforms alignment of pre-activations. Maximizing cross-correlation clearly outperforms minimizing $L_2$ distance.}
    \label{fig:align_comp}
\end{figure}

Empirically, alignment using the correlation of post-activations outperforms that of alignment associated with the Wasserstein-2 metric, as used in \citep{arjovsky2017wasserstein}, for pre-activations. See Figure \ref{fig:align_comp}. We discuss this in Appendix \ref{subsec:post_or_preact}. This observation motivates our use of the former technique throughout this paper. Additionally, cross-correlation is often preferred over unnormalized signals in certain tasks as in \citep{avants2008symmetric}. The different techniques are still fundamentally related.    

% \subsubsection{Numerical Implementation of Neuron Alignment}
\paragraph{Numerical Implementation of Neuron Alignment}

Algorithm \ref{alg:alignment} details computing a permutation of the network weights from neuron alignment. This demonstrates a correlation-based alignment of post-activations, though it easily generalizes for other techniques. In practice, we solve the assignment problem in \eqref{eq:opt_trans} using the Hungarian algorithm \citep{kuhn1955hungarian}. For an $L$ layer network with maximum width of $M$, we compute $\mP$ using a subset of the training data. Then the cost of computing the cross-correlation matrices for all layers is dominated by the forward propogation through the network to accumulate the activations. The running time needed to compute all needed assignments is $\mathcal{O}(L M^3)$, with storage $\mathcal{O}(L M)$. This is on the order of the running time associated with one iteration of forward propagation. Then neuron alignment is relatively cheap as the time complexity of computing aligned curves is on the same order as traditional curve finding. We also stress that in practice only a subset of training data is needed to compute alignment. We confirmed that the computed alignment for models trained on CIFAR100 was the same given subset sizes of (2,500, 5,000, and 10,000). This choice of a smaller subset leads to faster computation. 
% We refer to these different curves as \textit{aligned} and \textit{unaligned}.

\begin{algorithm}[tb]
\SetAlgoLined
\caption{Permutation via Neuron Alignment}
\label{alg:alignment}
\begin{algorithmic}
\STATE {\bfseries Input:} Trained Neural Networks $\vtheta_1$ and $\vtheta_2$, Subset of Training Data $\tX_0$
\STATE {\bfseries Output:} Aligned Neural Networks $\vtheta_1$ and $\bm{P} \vtheta_2$
\STATE Initialize $\mP \vtheta_2$ as $[\mW_1^2, \mW_2^2, \ldots, \mW_{L}^2]$
\FOR{each layer $l$ in $\{1, 2, \ldots, L-1\}$}
    \FOR{each network $j$ in $\{1, 2\}$}
        \STATE compute activations, $\tX_l^{(j)} = \sigma \circ \mW_{l}^j \tX_{l-1}^{(j)}$ 
        % \STATE for each element in the batch, vectorize $\tX_l^{(j)}$ if applicable 
        \STATE compute, $\mZ_l^{(j)}$, the normalization of the vectorized activations 
    \ENDFOR
    \STATE compute cross-correlation matrix, $\mC_l^{(1, 2)}$ = $\mZ_l^{(1)} \mZ_l^{(2) T}$
    \STATE compute $\mP_l$ by solving the assignment problem in eq. \ref{eq:opt_trans} with $\mC_l^{(1, 2)}$ using Hungarian algorithm 
    \STATE update $\hat{\mW}_l^2 \rightarrow \mP_l \hat{\mW}_l^2$, \quad $\hat{\mW}_{l+1}^2 \rightarrow \hat{\mW}_{l+1}^2 \mP_{l}^T$
\ENDFOR
\end{algorithmic}
\end{algorithm}

% \subsection{On Rigorously Learning the Permutation in the Joint Model}
% \label{subsec:learn_perm}
\paragraph{On Rigorously Learning the Permutation in the Joint Model}

% Traditionally, approximation algorithms are used when more rigorous techniques are intractable. 
We explore rigorously learning a more optimal permutation using a Proximal Alternating Minimization (PAM) scheme \citep{attouch2010proximal}. The PAM scheme, formalized the first equation
of Appendix \ref{sec:PAM}, solves \eqref{eq:curve_find_sym} via alternatively optimizing the weight permutation, $\mP$, and the curve parameters, $\phi$, with added proximal penalty terms. Under some assumptions, PAM guarantees global convergence to a local minima, meaning the quality of the solution is tied to the initialization. As such, PAM provides a baseline for establishing if a solution to \eqref{eq:curve_find_sym} is locally optimal. 
% \footnote{In the context of combinatorial optimization, optimal for some neighborhood of permutations. }

We implement and solve a PAM scheme starting from both the unaligned and aligned permutation initializations to gain insight into the optimality of the learned aligned curve. We establish theoretical guarantees and discuss the numerical implementation of PAM in Appendix \ref{sec:PAM}.

\paragraph{Using Alignment to Connect Adversarial Robust Models}
So far, we have discussed aligning the features of two typical neural networks to benefit learning a low loss curve connecting said networks. We are also interested in learning such curves between adversarially robust networks. This is motivated by recent observations in \citep{zhao2020bridging} that found the existence of a loss barrier between adversarial robust networks. In Section \ref{sec:robust_models}, we discuss our findings that such a barrier is in large part due to artifacts of weight permutation symmetry in the loss landscape. 

\section{Experiments}
\label{sec:experiments}

\paragraph{Datasets} We trained neural networks to classify images from CIFAR10 and CIFAR100 \citep{krizhevsky2009learning}, as well as Tiny ImageNet  \citep{imagenet_cvpr09}. The default training and test set splits are used for each dataset. Cross entropy loss of the output logits is used as the loss function. $20\%$ of the images in the training set are used for computing alignments between pairs of models. 
We augment the data using color normalization, random horizontal flips, random rotation, and random cropping. 

\paragraph{Architectures} Three different model architectures are used. They are included in Table \ref{table:results_cifar10} along with their number of parameters.
% Table \ref{table:results_cifar10} contains relevant properties of these architectures. 
The first architecture considered is the TinyTen model. TinyTen, introduced in \citep{kornblith2019similarity}, is a narrow $10$ layer convolutional neural network. This is a useful model for concept testing and allows us to gain insight to networks that are underparameterized. We also include ResNet32 \citep{he2016deep} in our experiments to understand the effect of skip connections. Finally, we consider the GoogLeNet architecture, which is significantly wider than the previously mentioned networks and achieves higher performance \citep{szegedy2015going}.
% VGG16-BN is the third architecture that we considered in our experiments \citep{simonyan2014very}. VGG16 has significantly more parameters compared to other models. 
% We chose these architectures for their varying properties and their prevalence in literature.   

All models used as curve endpoints are trained using SGD. We set a learning rate of $1\mathrm{E}{-1}$ that decays by a factor of $0.5$ every $20$ epochs. Weight decay of $5\mathrm{E}{-4}$ was used for regularization. Each model was trained for 250 epochs, and all models were seen to converge. This training scheme produced models of comparable accuracy to those in related literature such as \citep{kornblith2019similarity}. Models were trained on NVIDIA 2080 Ti GPUs.

\begin{table}[tb]
\caption[]{Average accuracy along the curve with standard deviation is reported for each combination of dataset, network architecture, and curve class. For emphasis, we list the performance of the worst model found along the curve. The average accuracy of the endpoint models are also included. Aligned curves clearly outperform the unaligned curves of \citep{garipov2018loss}. Number of parameters is also included. 
% Note that aligned accuracies are typically as high as the trained model accuracies used as endpoints. 
}
% \vspace{-2mm}
\label{table:results_cifar10}
\small
\adjustbox{max width=\textwidth}{
% \begin{center}
\centering
\begin{tabular}{@{} r r r c r r c r r @{}} 
% \hline
\toprule
\multirow{2}{4em}{Model} & \multicolumn{8}{c}{Average/Minimum Accuracy of Models Along the Learned Curve} \\
& \multicolumn{2}{c}{CIFAR10} & \phantom{a} & \multicolumn{2}{c}{CIFAR100} & \phantom{a} & \multicolumn{2}{c}{Tiny ImageNet}\\
% \cmidrule{2-3} \cmidrule{5-6} \cmidrule{8-9} 
% \cline{4-7}
% & Avg. & Min. && Avg. & Min. && Avg. & Min. \\  
\midrule
TinyTen (0.09M)  & $89.0 \pm 0.1$ &&& $58.1 \pm 0.5$ &&& $34.2 \pm 0.2$ & \\
\cmidrule{2-3} \cmidrule{5-6} \cmidrule{8-9}
Unaligned (Garipov) & $87.4 \pm 0.1$ & $82.8 \pm 0.5$ && $56.0 \pm 0.2$ & $53.2 \pm 1.1$ && $32.5 \pm 0.1$ & $30.0 \pm 0.3$\\
PAM Unaligned & $87.6 \pm 0.1$  & $84.0 \pm 0.3$ && $57.3 \pm 0.2$ & $55.9 \pm 0.9$ && $33.6 \pm 0.1$ & $32.5 \pm 0.1$ \\
PAM Aligned & $88.4 \pm 0.1$ & $87.6 \pm 0.2$ && $\mathbf{58.8 \pm 0.1}$ & $\mathbf{57.7 \pm 0.3}$ && $34.2 \pm 0.1$ & $33.4 \pm 0.2$ \\
Aligned & $\mathbf{88.5 \pm 0.1}$ & $\mathbf{87.8 \pm 0.1}$ && $58.7 \pm 0.2$ & $\mathbf{57.7 \pm 0.4}$ && $\mathbf{34.4 \pm 0.1}$ & $\mathbf{33.7 \pm 0.1}$ \\
\midrule
ResNet32 (0.47M) & $92.9 \pm 0.1$ &&& $67.1 \pm 0.5$ &&& $50.2 \pm 0.0$ & \\
\cmidrule{2-3} \cmidrule{5-6} \cmidrule{8-9}
Unaligned (Garipov) & $92.2 \pm 0.1$ & $89.1 \pm 0.2$ && $66.5 \pm 0.2$ & $64.7 \pm 0.4$ && $48.2 \pm 0.1$ & $45.2 \pm 0.1$ \\ 
PAM Unaligned & $92.4 \pm 0.1$ & $89.9 \pm 0.2$ && $67.0 \pm 0.1$ & $66.1 \pm 0.1$ && $48.5 \pm 0.1$ & $46.6 \pm 0.1$ \\
PAM Aligned & $\mathbf{92.7 \pm 0.0}$ & $92.1 \pm 0.1$ && $67.6 \pm 0.4$ & $\mathbf{66.8 \pm 0.1}$ && $49.2 \pm 0.4$ & $47.9 \pm 0.6$\\
Aligned & $\mathbf{92.7 \pm 0.1}$ & $\mathbf{92.2 \pm 0.0}$ && $\mathbf{67.7 \pm 0.1}$ & $66.6 \pm 0.1$ && $\mathbf{49.5 \pm 0.3}$ & $\mathbf{48.8 \pm 0.4}$ \\
\midrule
GoogLeNet (10.24M) & $93.4 \pm 0.0$ &&& $73.2 \pm 0.4$ &&& $51.6 \pm 0.2$ & \\
\cmidrule{2-3} \cmidrule{5-6} \cmidrule{8-9}
Unaligned (Garipov) & $93.3 \pm 0.0$ & $92.1 \pm 0.1$ && $73.1 \pm 0.4$ & $69.8 \pm 0.3$ && $51.9 \pm 0.1$ & $48.7 \pm 0.4$ \\ 
Aligned & $\mathbf{93.4 \pm 0.0}$ & $\mathbf{93.1 \pm 0.0}$ && $\mathbf{73.4 \pm 0.3}$ & $\mathbf{72.9 \pm 0.3}$ && $\mathbf{52.4 \pm 0.2}$ & $\mathbf{51.4 \pm 0.3}$ \\
% \hline
\bottomrule
\end{tabular}
% \end{center}
}
\vspace{-2mm}
\end{table}
% \footnotetext{Strictly speaking, the algorithm converges to a local optima in the convex relaxation of the domain. The learned permutation is the projection of this optima to the feasible set.}

\paragraph{Training curves}
All curves are parameterized as quadratic Bezier curves. Bezier curves are defined by their \textit{control points}. In the current study, we refer to  endpoint models as $\vtheta_1$ and $\vtheta_2$ as well as the control point, $\vtheta_c$. Then $\vr$ is defined in \eqref{eq:Bezier} with $\vtheta_c$ as the learnable parameter in $\phi$
\begin{equation}
    \vr_\phi(t) = (1 - t)^2 \vtheta_1 + 2 (1 - t) t \vtheta_{c} + t^2 \vtheta_2. \label{eq:Bezier}
\end{equation}
% Important properties of the quadratic Bezier curve include $\vr(0) = \vtheta_1$, $\vr(1) = \vtheta_2$, $\vr'(0) = 2(\vtheta_c - \vtheta_1)$, and $\vr'(1) = 2(\vtheta_2 - \vtheta_c)$. 
% Then $\vtheta_c$ is the learnable parameter in $\phi$. 
% Of course one could consider more complicated curve parameterizations. In practice, we find a simple curve to be enough for our experiments, and consider the learning of a planar curve along which loss is nearly constant to be significant in itself. 

% \subsection{Training Curves}
% \label{subsec:train_curves}

For each architecture and dataset, we train $6$ models using different random initializations. Thus we have $3$ independent model pairs. 
We learn two classes of curves, \textit{Unaligned} / \textit{Aligned},  that are solutions to algorithm \ref{alg:curve_find} for $\vtheta_1$ and $\vtheta_2$ / $\mP \vtheta_2$. Here $\mP$ is the permutation learned by alignment. We also learn the corresponding curve classes, \textit{PAM Unaligned} / \textit{PAM Aligned}, that are solutions to the first equation in Appendix \ref{sec:PAM}. Their permutation intializations are $\mI$ and $\mP$ respectively. Tables are generated with curves trained from different random seeds, while figures are generated with curves trained from the same seed. Curves trained from different seeds are similar up to symmetry, but using them to generate the figures would average out shared features.

% We learn four classes of curves:
% \begin{itemize}
%     \setlength\itemsep{0em}
%     \item Unaligned: Solution to algorithm \ref{alg:curve_find} for $\vtheta_1$ and $\vtheta_2$
%     \item PAM: Solution to \eqref{eq:PAM} for $\vtheta_1$ and $\vtheta_2$ with $\mP^{(0)} = \mI$
%     \item PAM Aligned: Solution to \eqref{eq:PAM} for $\vtheta_1$ and $\vtheta_2$ with $\mP^{(0)} = \mP$
%     \item Aligned: Solution to algorithm \ref{alg:curve_find} for $\vtheta_1$ and $\mP \vtheta_2$
% \end{itemize}
% where $\mP$ denotes the permutation learned by neuron alignment (algorithm \ref{alg:alignment}).

Curves are trained for 250 epochs using SGD with a learning rate of $1\mathrm{E}{-2}$ and a batch size of 128. The rate anneals by a factor of 0.5 every 20 epochs. The initial hyperparameters were chosen to match those in \citep{garipov2018loss}. 
For CIFAR100 curves, this learning rate was set to $1\mathrm{E}{-1}$, as they were seen to perform marginally better. 
For the TinyTen/CIFAR100 case, we explore the hyperparameter space in Appendix \ref{sec:hyperparam_search} to show that our results hold under different choice of hyperparameters.  

% \begin{itemize}
%     \setlength\itemsep{0em}
%     \item Unaligned: Solution to algorithm \ref{alg:curve_find} for $\vtheta_1$ and $\vtheta_2$
%     \item PAM Unaligned: Solution to \eqref{eq:PAM} for $\vtheta_1$ and $\vtheta_2$ with $\mP^{(0)} = \mI$
%     \item PAM Aligned: Solution to \eqref{eq:PAM} for $\vtheta_1$ and $\vtheta_2$ with $\mP^{(0)} = \mP_{Al}$
%     \item Aligned: Solution to algorithm \ref{alg:curve_find} for $\vtheta_1$ and $\mP_{Al} \vtheta_2$
% \end{itemize}
% where $\mP_{Al}$ denotes the permutation learned by neuron alignment (algorithm \ref{alg:alignment}).

% We learn PAM curves for all architectures except VGG16, as its size made this computationally prohibitive. We train two sets of each curve class. One set involves the curves learned when the random seed for curve finding is fixed for all model pairs. The other set consists of the curves learned when the random seed is different for each model pair. We find that the learned curves for different seeds are similar up to reindexing the endpoints. For Figures \ref{fig:viz_curve}, \ref{subfig:acc_plane_cifar100}, and \ref{subfig:acc_curve_plane_cifar100}, we use the first set of curves so that interesting geometric features on the loss surface are not averaged out. For tables and other figures, we use the second more general set of curves.  

% \subsection{Aligned Curves Outperform Unaligned Curves}
\subsection{Results on using Neuron Alignment }
\label{subsubsec:align_trad_curve}

\paragraph{Aligned Curves Outperform Unaligned Curves} 
% We investigate using neuron alignment as a heuristic for curve finding up to symmetry. 
The test accuracy of learned curves can be seen for each dataset, architecture, and curve class in Table \ref{table:results_cifar10}. Clearly, the aligned curves outperform the unaligned curve. In many cases, the average accuracy along the aligned curves in comparable to the trained models used as endpoints. The table also contains the minimum accuracy along the curve, indicating  that aligned curves do not suffer from the same generalization gap that unaligned curves are prone to. Finally, Table \ref{table:results_cifar100_loss_train} in the appendix contains the training loss for each case at convergence. Overall, the strongest gains from using alignment are for underparameterized networks. As seen in Table \ref{table:results_cifar10}, the largest increase in performance is for TinyTen on Tiny ImageNet while the smallest gain is made for GoogLeNet on CIFAR10. This is inline with observations by \citep{freeman2016topology}.

The test loss and accuracy along the learned curves for CIFAR100 are shown in Figure \ref{fig:viz_curve}. We observe that, as expected, the accuracy at each point along the aligned curve usually exceeds that of the unaligned curve, while the loss along the curve is also smoother with neuron alignment. 
% \footnote{See Figure \ref{fig:fft} in the appendix for a more quantitative measure of smoothness via the Fourier transform.} 
Of note is the prominent presence of an accuracy barrier along the unaligned curves. 
% Overall, loss along the aligned curves varies more smoothly and has better generalization.

% Noteworthy is the prominent presence of the accuracy barrier along the unaligned curve around $t$ at $0.8$ for all models. This accuracy barrier corresponds to a clear loss barrier for Tiny-10 and ResNet32. In contrast, for VGG16 there is lowest loss at this point on the unaligned curve with worse generalization performance. Overall, loss along the aligned curves varies more smoothly and has better generalization.

% We do expect the gain in performance from alignment to decrease as the parameterized curves are allowed to become more complex. However, simpler parameterized curves are appealing from a modeling perspective and they are less expensive to train. 
% \PD{the last sentence seems out of context to me}

Figure \ref{fig:viz_curve} displays the planes containing the initializations for curve finding. Clearly the aligned initialization has better objective value. The axis is determined by Gram-Schmidt orthonormalization. In Appendix \ref{sec:add_figs}  the planes containing the learned curves are displayed in Figures \ref{fig:loss_acc_curve_planes_add}. 
% These are the planes containing $\vtheta_1$, $\mP \vtheta_2$, and $\vtheta_c$, although the control point is out of bounds of the figure.  
The loss displayed on the planes containing the linear initializations and the curves can be seen in Figure \ref{fig:plane_loss_additional}.

Practically, using neuron alignment for determining the permutation $\mP$ may be enough and  avoids  more complicated optimization.
%(although one may still be interested in  theoretically. 
Note the relative flatness of the accuracy along the aligned curves in Figure \ref{fig:viz_curve}. Additionally, the plots in the top row indicate much faster convergence when learning $\phi$  using neuron alignment. For example, the aligned curve takes 100 epochs less to achieve the training accuracy that the unaligned curve converges to, for TinyTen and CIFAR100. 
% Even for VGG16, the aligned curve reaches the milestone $40$ epochs earlier. 
% Additionally, there is clearly a gap in the accuracy that the curves converge to, with the aligned curve always outperforming the unaligned one, while the underparameterized architectures receive a more significant accuracy boost. F
Figures \ref{fig:training_cifar10} and \ref{fig:loss_acc_curve_planes_add} in Appendix \ref{sec:add_figs} displays the previously mentioned plots for the additional datasets. 

% While these observations are promising, we intend to provide insight into why neuron alignment works. 
% \subsubsection{Similarity of Intermediate Activations Along the Curve}
\paragraph{Similarity of Intermediate Activations Along the Curve}
In Figures \ref{subfig:corr_end_unalignmid} and \ref{subfig:corr_end_alignmid}, the plots display the similarity of intermediate activations between the learned curve midpoints and the endpoint models. This is evidence that alignment increases this similarity. Notice that even when the endpoints are not aligned, the learned midpoint on the unaligned curve is seen to be mostly aligned to the endpoints. Thus, the similarity of these distributions is important to mode connectivity, and we can take the view that these learned curves are trying to smoothly interpolate features. 
% Additionally, the midpoint on the aligned curve is seen to already be optimally aligned to the endpoints. 

\begin{figure}[tb]
    \centering
    \begin{subfigure}[b]{0.53\linewidth}
        \centering
        \includegraphics[width=1.0\linewidth]{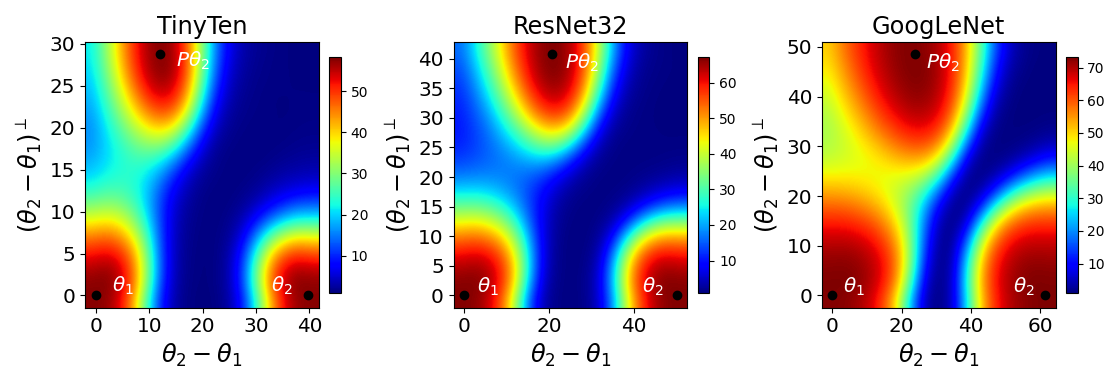}
        \hfill
        \includegraphics[width=0.9\linewidth]{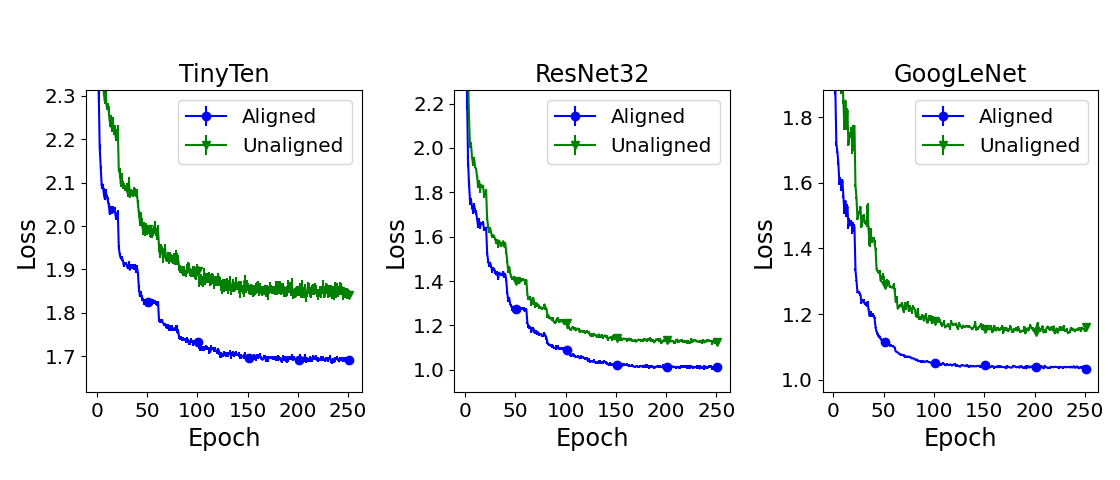}
    \end{subfigure}
    \hfill 
    \begin{subfigure}[b]{0.45\linewidth}
    \begin{subfigure}[b]{1.0\linewidth}
        \centering
        \includegraphics[width=\linewidth]{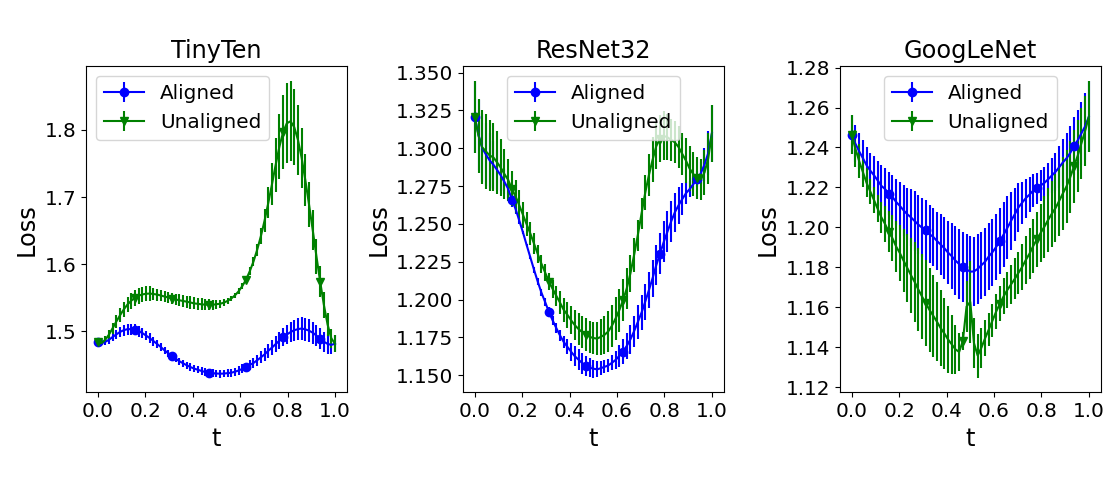}
        %\caption{CIFAR100 loss along curve}
        % \caption{Cross Entropy Loss Along Curve}
        % \label{subfig:loss_on_curve}
    \end{subfigure}
    \vfill
    % \vspace{-2mm}
    \begin{subfigure}[b]{1.0\linewidth}
        \centering
        \includegraphics[width=\linewidth]{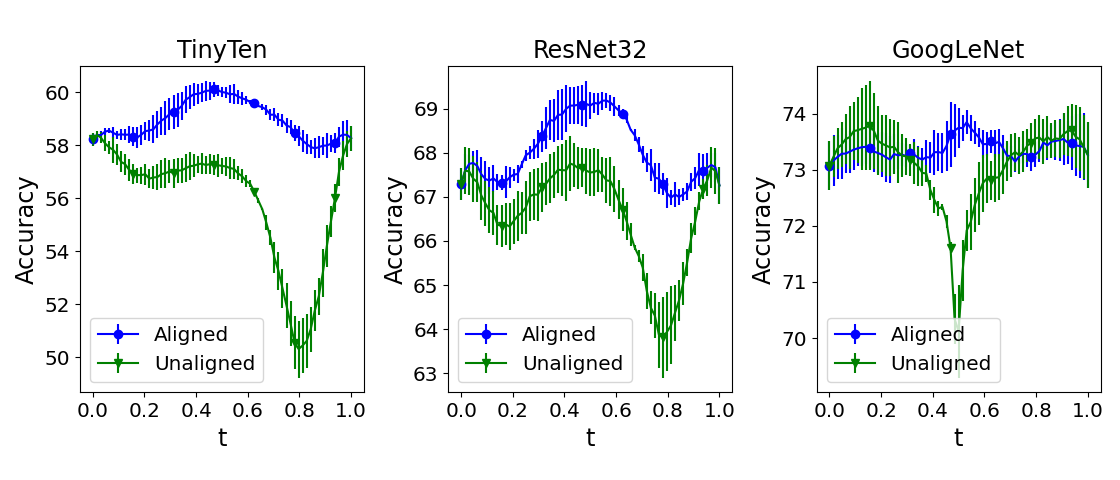}
        %\caption{CIFAR100 accuracy along curve}
        % \caption{Accuracy Along Curve}
        % \label{subfig:acc_on_curve}
    \end{subfigure}
    \end{subfigure}
    % \vspace{-2mm}
    \caption{
    % \textbf{Top}: The training loss for learning the quadratic Bezier curve between model endpoints on CIFAR100. These are compared for aligned and unaligned curves. The training of aligned curves converges to lower loss value in less epochs than for unaligned curves. 
    \textbf{Top Left:} Test accuracy on CIFAR100 across the plane containing $\vtheta_1$, $\vtheta_2$, and $\mP \vtheta_2$. This plane contains the two different intializations used in our curve finding experiments. The default initialization, $\vtheta_2 - \vtheta_1$, and the aligned initialization, $\mP \vtheta_2 - \vtheta_1$. This shows that the aligned initialization is notably better. \textbf{Bottom Left}: Training loss during training. \textbf{Top/Bottom Right:} Test loss/accuracy along these curves. Aligned curves generalize better and do not suffer from large drops in accuracy typical for unaligned curves.}
    \label{fig:viz_curve}
    \vspace{-2mm}
\end{figure}

\paragraph{Insights on Local Optimality via PAM}
As seen in Table \ref{table:results_cifar10}, starting from the unaligned initialization, it is possible to find a better permutation for optimizing mode connectivity. In contrast, the permutation given by alignment is nearly locally optimal as \textit{PAM Aligned} and \textit{Aligned} are comparable. Even more, the solutions to PAM with an unaligned initialization, \textit{PAM}, still do not perform better than curve finding with the aligned permutation, \textit{Aligned}. This implies that the neuron alignment heuristic is not only inexpensive, it provides a permutation of optimality not likely found without conducting some kind of intractable search. We see this as a strength of neuron alignment. 

\section{Mode Connectivity of Adversarial Robust Models}
% \section{New Findings for Robust Mode Connectivity}
\label{sec:robust_models}

\begin{figure}[tb]
    \centering
    % \begin{subfigure}[b]{0.45\linewidth}
    %     \centering
    %     \includegraphics[width=\linewidth]{figures/cifar100_robusttrainingloss.png}
    %     %\caption{CIFAR100 loss along curve}
    %     % \caption{Clean Accuracy Along Curve}
    %     % \label{subfig:cifar100_clean_acc}
    % \end{subfigure}
    %     % \vspace{-2mm}
    % \hfill
    \begin{subfigure}[b]{0.48\linewidth}
    % \begin{subfigure}[b]{1.0\linewidth}
        \centering
        \includegraphics[width=\linewidth]{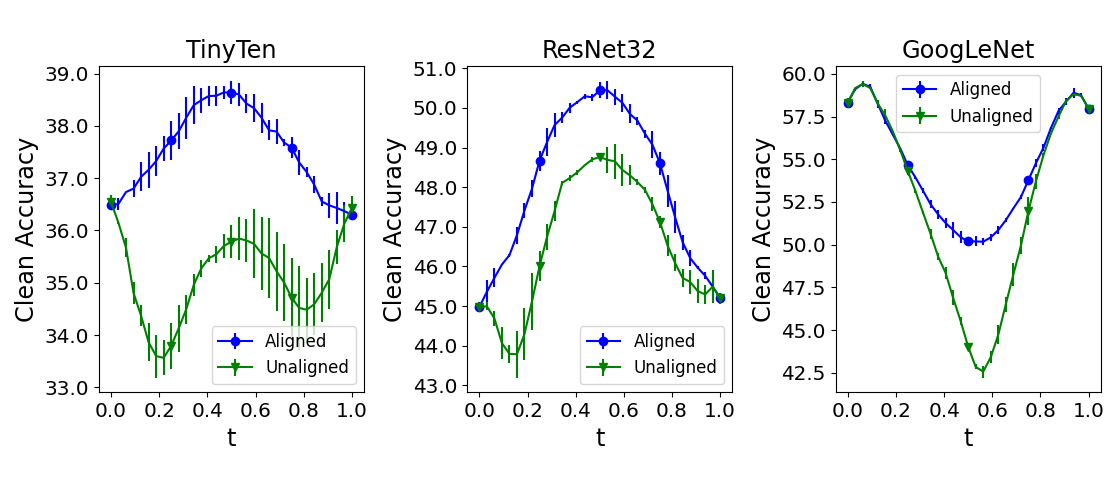}
        %\caption{CIFAR100 loss along curve}
        % \caption{Clean Accuracy Along Curve}
        % \label{subfig:cifar100_clean_acc}
    \end{subfigure}
    \hfill
    % \vspace{-2mm}
    \begin{subfigure}[b]{0.48\linewidth}
        \centering
        \includegraphics[width=\linewidth]{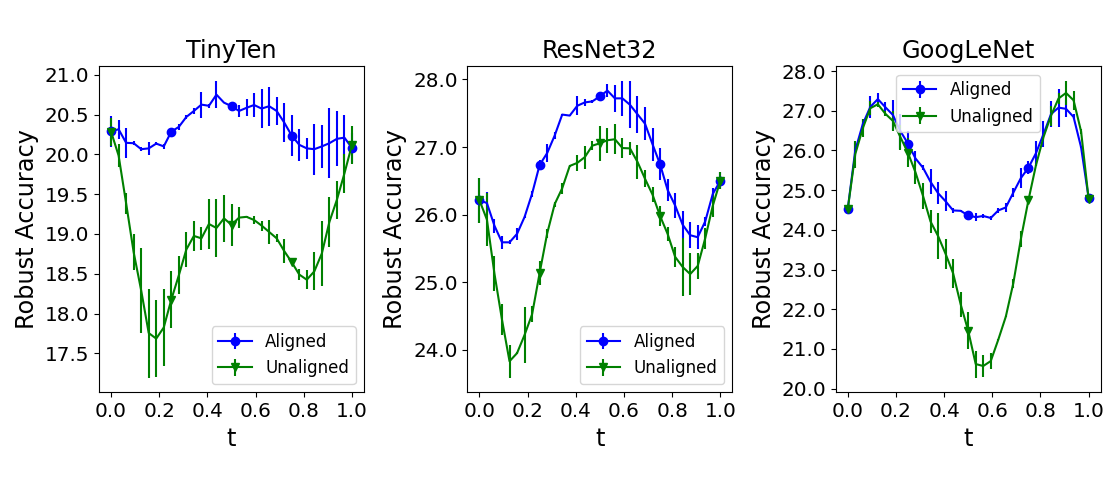}
        % \caption{Robust Accuracy Along Curve}
        % \label{subfig:cifar100_robust_acc}
    \end{subfigure}
    % \end{subfigure}
        %\vspace{-4mm}
    % \caption{\textbf{Top}: The robust training loss for learning the robust quadratic Bezier curve between robust model endpoints on CIFAR100. By robust loss, this means that the input undergoes a PGD attack during evaluation. This shows that alignment decreases this training loss. \textbf{Middle/Bottom}: Clean/Robust test accuracy along these curves. For TinyTen and ResNet32, it is clear that a more robust and accurate model can be found along the curve compared to the endpoints. VGG16 does not exhibit this behavior due to overfitting to attacks on the training data.}
    \caption{\textbf{Left/Right}: Clean/Robust test accuracy along these curves. For TinyTen and ResNet32, it is clear that a more robust and accurate model can be found along the curve compared to the endpoints. GoogLeNet results are more complicated due to well known overfitting of overparameterized models during adversarial training \citep{rice2020overfitting}. }
    \label{fig:cifar100_robust}
    \vspace{-2mm}
\end{figure}

\paragraph{Alignment greatly reduces the robust loss barrier} 
% Here we refer to the cross-entropy loss on the original samples and samples perturbed via PGD attacks as clean and robust loss respectively.
Figure \ref{fig:cifar100_robust} displays the training loss and test accuracy of the learned robust curve between adversarially trained robust CIFAR100 models for the previously mentioned architectures. The left plots displays standard test accuracy whereas the right plots displays the test accuracy of PGD adversarial examples. These networks and curves are trained with the same scheme as in \citep{zhao2020bridging}, where we set the initial learning rate to $1\mathrm{E}{-1}$. An important point to emphasize is that the curve itself is trained to minimize robust loss, so the input undergoes PGD attack at each point along the curve. 

For the robust curve learned between two unaligned robust models, we encounter barriers both in clean and robust accuracy, as reported in \citep{zhao2020bridging}. As in Figure \ref{fig:viz_curve}, these accuracy barriers appear to correspond with barriers in loss, where plots of robust loss along these curves can be found in Figure \ref{fig:robust_loss} in Appendix \ref{sec:add_figs}. It is clear that the barrier in clean accuracy is eliminated or greatly reduced with the use of alignment. With respect to robust accuracy, we see that alignment significantly alleviates that barrier. In practice, adversarial training of the GoogLeNet curves were found to overfit on the training data. This has been well observed for over-parameterized models \citep{rice2020overfitting}. Thus, the curves displayed for GoogLeNet are the results with early stopping after roughly 20 epochs to prevent overfitting. Figure \ref{fig:robust_cifar10} in the Appendix \ref{sec:add_figs} displays these results for CIFAR10.
% , and Figure \ref{fig:robust_loss} displays the robust loss evaluated along the curves. Additionally, Figure \ref{fig:robust_align} shows the correlation signatures on the robust TinyTen models. 
% \PY{We talked a lot about reducing "loss" barrier but infact what we show is accuracy, which may give reviewers hard time to digest. Can we also put robust losses in Appendix and mention we show accuracy in main paper (as they are directly related to robustness evaluation metrics)?}
%\subsection{Finding More Accurate Robust Models}

\paragraph{Aligned curves can find more accurate robust models.} 
Neuron alignment seems successful at finding a curve between robust models along which models maintain their robustness to PGD attacks without sacrificing clean accuracy. Results provide  evidence that the presence of a large robust loss barrier between robust models as mentioned in \citep{zhao2020bridging} can mostly be attributed as an artifact of symmetry in the loss landscape resulting from the network weights. 
% A caveat being that the training of individual robust models must be able to successfully generalize on test data undergoing PGD attack. 

For CIFAR100, alignment enables finding a more accurate model without sacrificing robust accuracy, which provides new insight towards overcoming the issue of robustness-accuracy tradeoff in adversarial robustness \citep{su2018robustness}. Consider the midpoint on the ResNet32 aligned curve in Figure \ref{fig:cifar100_robust}, where both clean and robust accuracies increase by $5.3\%$ and $1.3\%$, respectively, in comparison to the endpoints. For TinyTen, these accuracies also increase at the aligned curve midpoint, while no better model in term of clean or robust accuracy exists along the unaligned curve with respect to the endpoints. For GoogLeNet, we find comparable more accurate and robust models near the endpoints for both curves, though only the aligned curve avoids a robust accuracy barrier. We emphasize that learning a better model from scratch  is not an easy task. In practice, converging here requires a step size large enough for the SGD trajectory to reach this basin within feasible time and that is then small enough to converge in the basin. Thus, the aligned curve finding can be viewed as a technique for avoiding hyperparameter tuning, which is typically expensive.

\section{Conclusion}
\label{sec:discussion}
We generalize mode connectivity by removing the weight symmetry ambiguity associated with the endpoint models. The optimal permutation of these weights can be approximated using neuron alignment. We empirically find that this approximation is locally optimal and outperforms the locally optimal solution to a proximal alternating scheme with random initialization. Empirically, we show that neuron alignment can be used to successfully and efficiently learn optimal connections between neural nets. Addressing symmetry is critical for learning planar curves on the loss surface along which accuracy is mostly constant. Our results hold true over a range of datasets and network architectures. With neuron alignment, these curves can be trained in less epochs and to higher accuracy. Novel to previous findings, with alignment we also find that robust models are in fact connected on the loss surface and curve finding serves as a means to identify more accurate robust models. 
% Future work  will include gaining a deeper \textit{theoretical understanding} of how neuron alignment affects curve finding dynamics. We plan to further explore how similar feature representations between models are smoothly interpolated by these learned curves, and how this relates to network training. 

% \clearpage

\section*{Broader Impact}
This work examines solving the problem of mode connectivity up to a symmetry in the weights of the given models. Our method allows for the computation of a curve of nearly optimal models, where this curve itself has a simple parameterization. We discuss the broader impacts of this work from the following perspectives:
\begin{itemize}[leftmargin=1cm]
    \item \textbf{Who may benefit from this research: } In this work we show the ability to learn simply parameterized curves along which each model is seen to be robust. This could have potential applications in ensembling, where an ensemble of robust networks can be learned without training each individual model. Generally, an attack on an ensemble will be less effective than on a direct attack on any of its component models. Additionally, regarding CIFAR100, we see the ability to find comparable robust models of greater accuracy along the curve. Thus, this work can benefit systems for which robustness is critical. 
    \item \textbf{Who may be disadvantaged from this research:} As mentioned, this work can benefit systems for which robustness is critical. Such systems are typically part of a movement towards \textit{trusted artificial intelligence}. As trust in systems increases, these systems may see wider use and adoption, such as self-driving cars. With increased automation, workers such as truck drivers stand to have declining career prospects. Thus, this work is part of a broader push in research that may disadvantage these peoples.   
    \item \textbf{Consequences of failure: } If our method fails for a given instance, then it means that we were unable to learn a simple curve along which the models are nearly optimal. This means the method cannot be used in that instance for an application such as providing a set of models for ensembling. In the case of adversarial models, this means that the learned models are vulnerable to attacks and becoming compromised.  
    \item \textbf{Biases in the data:} In our experiments, we validated our results for three different datasets to confirm that our method does not depend on a bias uniquely associated with an individual dataset. 
\end{itemize}

\begin{ack}
This work was supported by the Rensselaer-IBM AI Research Collaboration (http://airc.rpi.edu), part of the IBM AI Horizons Network (http://ibm.biz/AIHorizons). 
Additionally, R. Lai’s work is supported in part by NSF CAREER Award (DMS—1752934). The authors also thank Youssef Mroueh for helpful discussions on optimal transport.  
\end{ack}

% \clearpage
%\appendix
\bibliography{main}
\bibliographystyle{icml2020}

\clearpage

\appendix

\section{Additional Figures}
\label{sec:add_figs}

\begin{figure}[htb]

    \begin{subfigure}[b]{0.48\linewidth}
        \centering
    \begin{subfigure}[b]{1.0\linewidth}
        \centering
        \includegraphics[width=\linewidth]{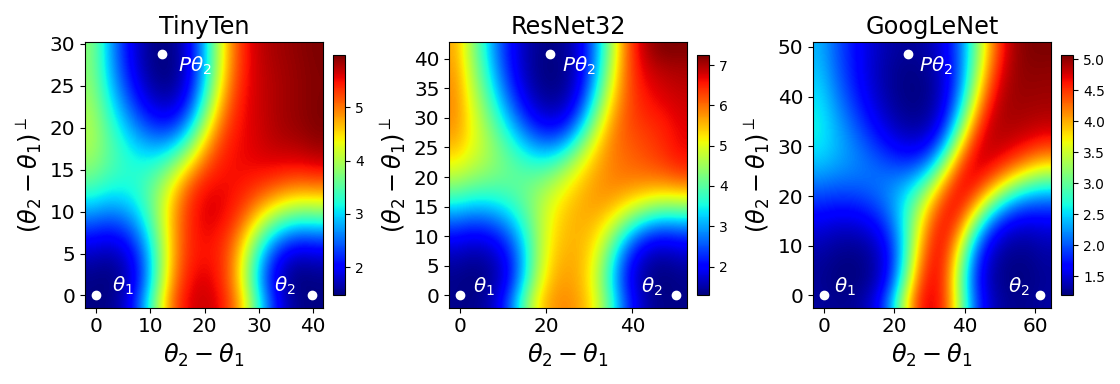}
        \caption{CIFAR100}
    \end{subfigure}
    \begin{subfigure}[b]{1.0\linewidth}
        \centering
        \includegraphics[width=\linewidth]{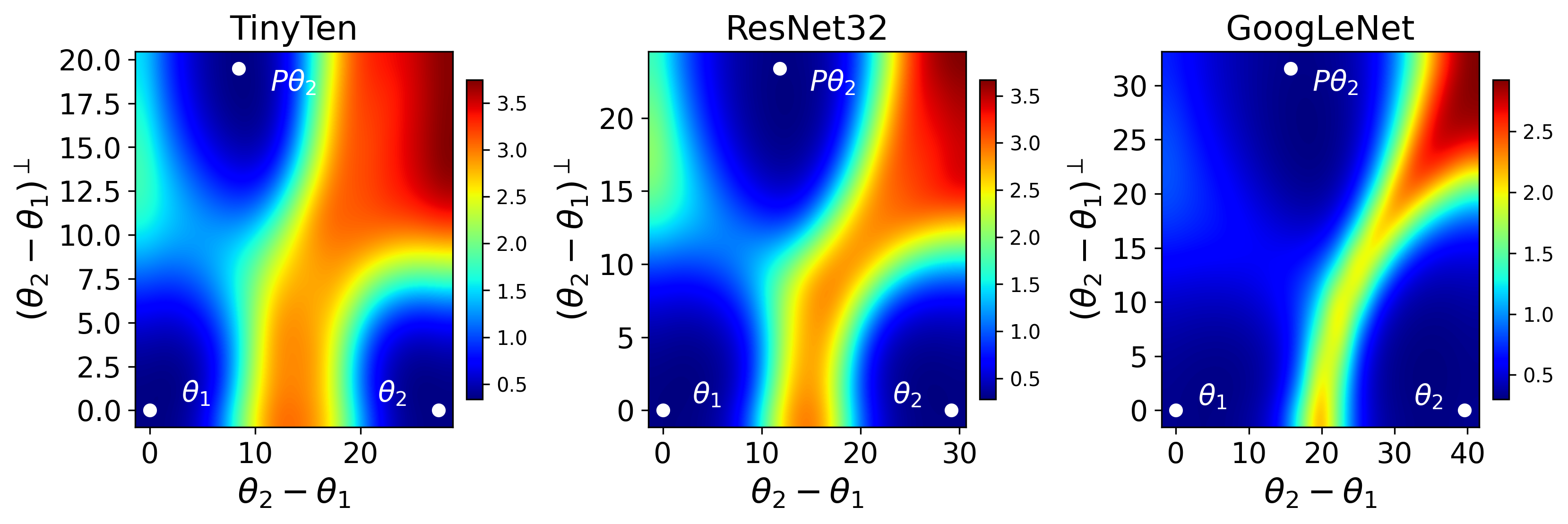}
        \caption{CIFAR10}
    \end{subfigure}
    \begin{subfigure}[b]{1.0\linewidth}
        \centering
        \includegraphics[width=\linewidth]{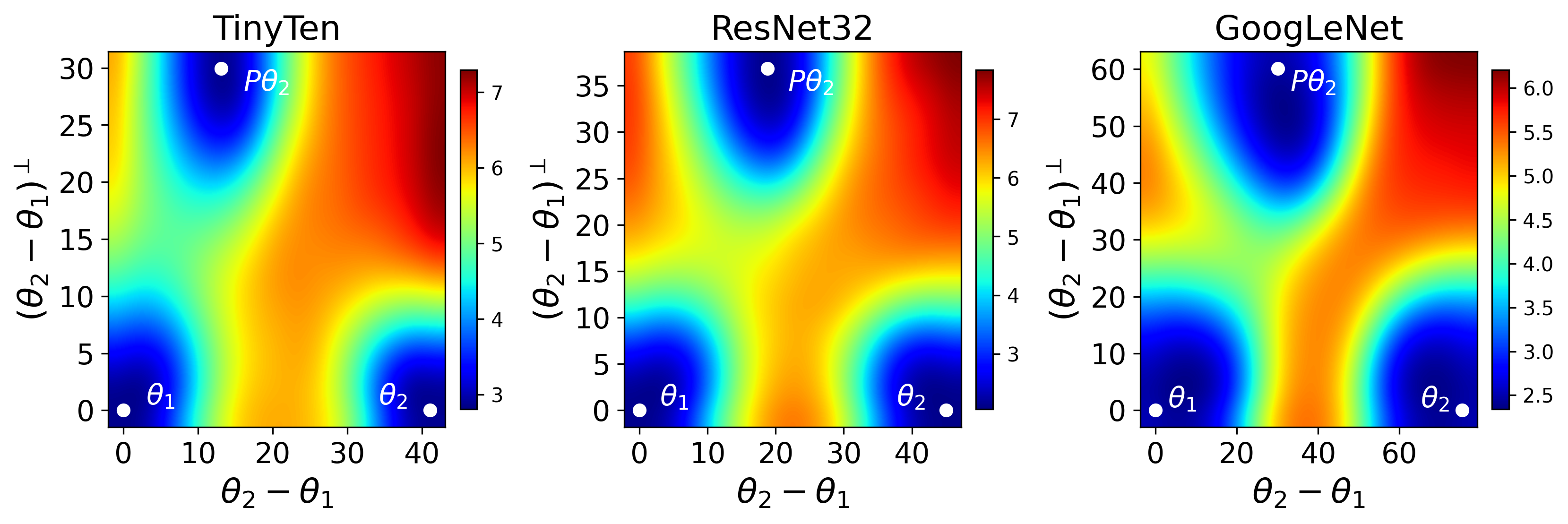}
        \caption{Tiny ImageNet}
    \end{subfigure}
    \end{subfigure}
    \hfill
    \begin{subfigure}[b]{0.48\linewidth}
        \begin{subfigure}[b]{1.0\linewidth}
        \centering
        \includegraphics[width=\linewidth]{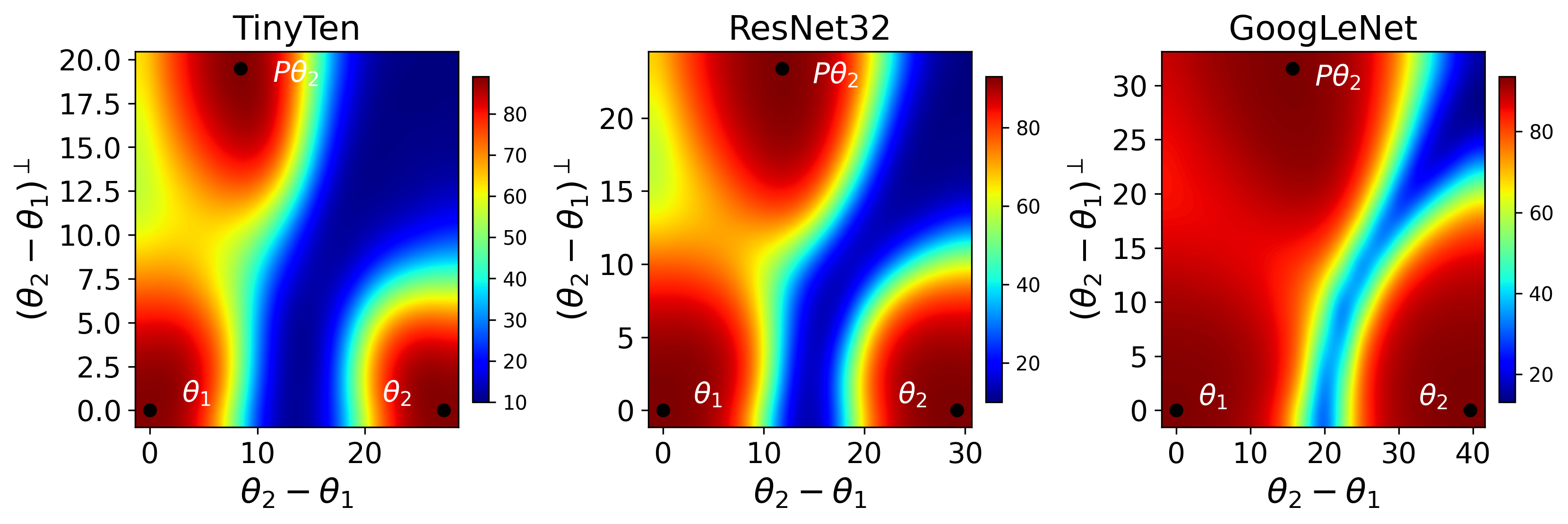}
        \caption{CIFAR10}
    \end{subfigure}
    \begin{subfigure}[b]{1.0\linewidth}
        \centering
        \includegraphics[width=\linewidth]{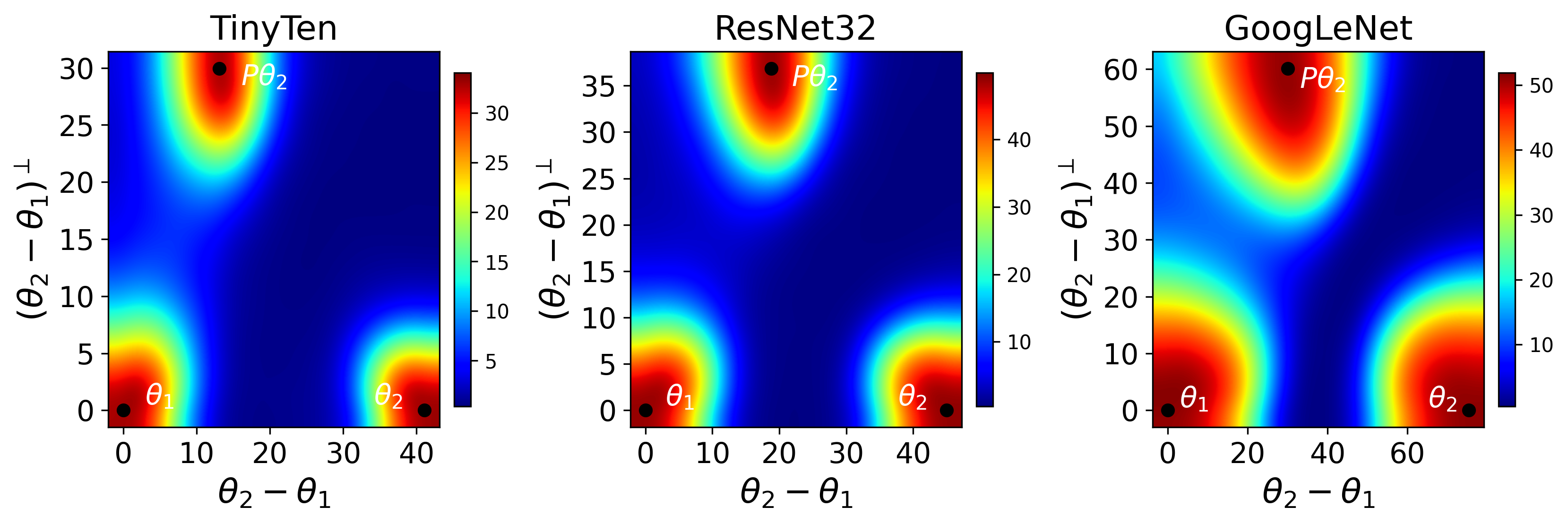}
        \caption{Tiny ImageNet}
    \end{subfigure}
    \end{subfigure}
    \vspace{-2mm}
    \caption{\textbf{Left/Right:} Test loss/accuracy on plane containing $\vtheta_1$, $\vtheta_2$, and $\mP \vtheta_2$.}
    \label{fig:plane_loss_additional}
\end{figure}

% \begin{figure}[H]
%     \centering
%     \begin{subfigure}[b]{0.95\linewidth}
%         \centering
%         \includegraphics[width=\linewidth]{figures/fin_plane_acc_CIFAR10.png}
%         \caption{CIFAR10}
%     \end{subfigure}
%     \begin{subfigure}[b]{0.95\linewidth}
%         \centering
%         \includegraphics[width=\linewidth]{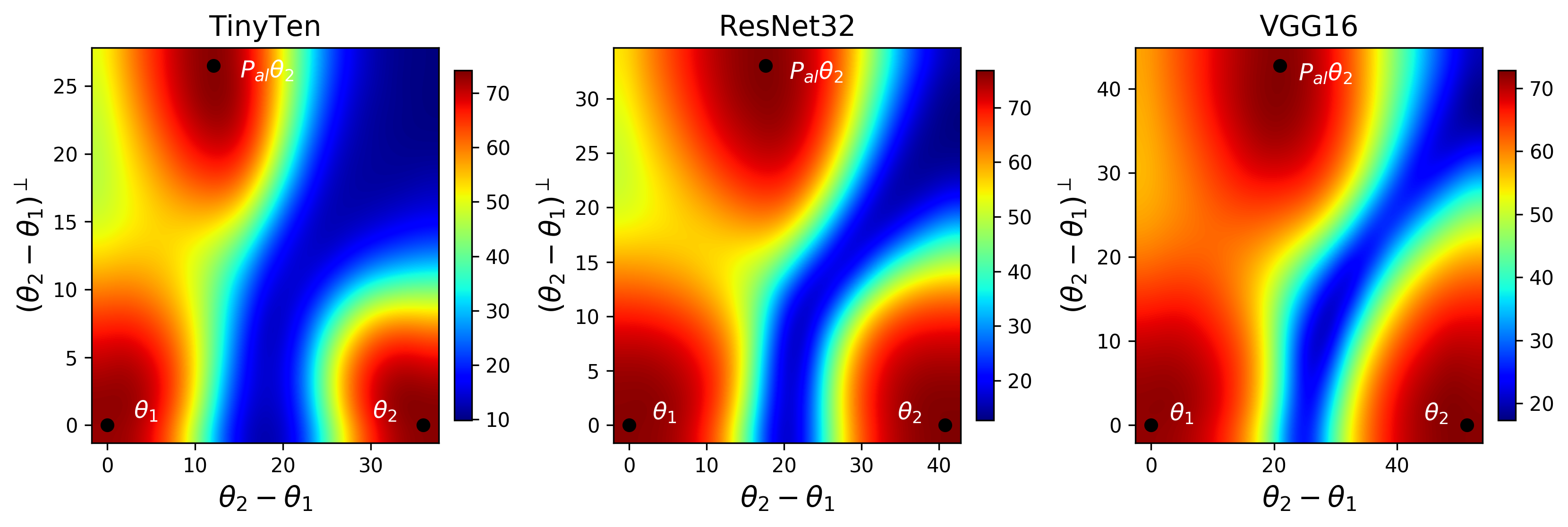}
%         \caption{STL10}
%     \end{subfigure}
%     \vspace{-2mm}
%     \caption{Test accuracy on plane containing $\vtheta_1$, $\vtheta_2$, and $\mP_{al} \vtheta_2$.}
%     \label{fig:plane_acc_additional}
% \end{figure}

\begin{figure}[htb]
    \centering
    \begin{subfigure}[b]{0.48\linewidth}
        \centering
        % \begin{subfigure}[b]{1.0\linewidth}
        % \centering
        \includegraphics[width=\linewidth]{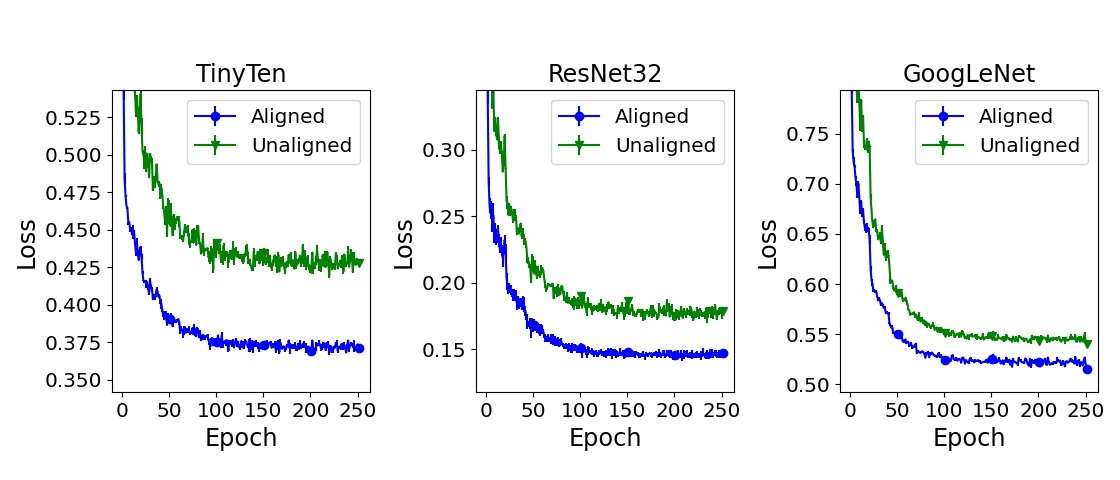}
        % \caption{Training loss}
    % \end{subfigure}
    % \hfill
    % \begin{subfigure}[b]{1.0\linewidth}
    %     \centering
    %     \includegraphics[width=\linewidth]{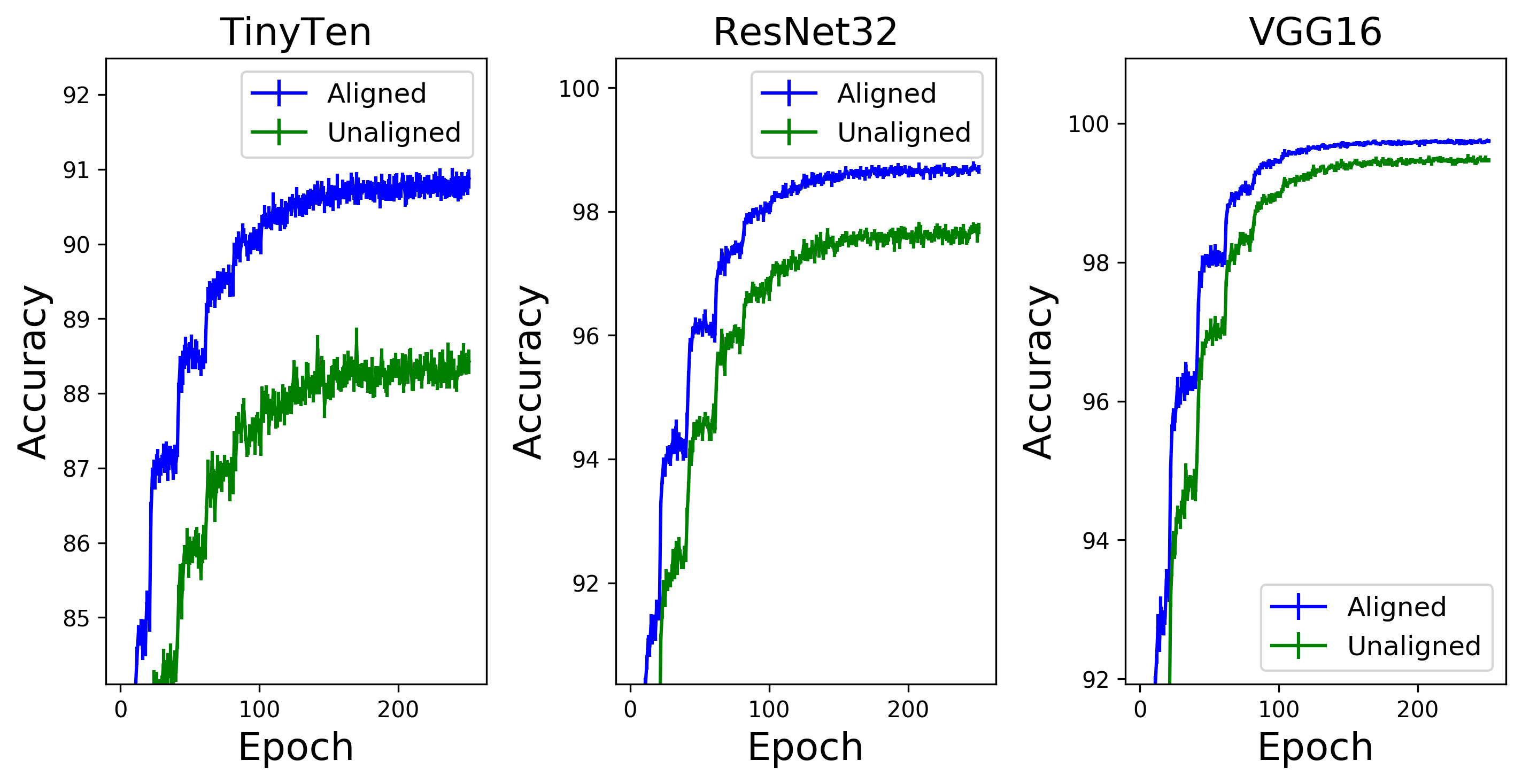}
    %     \caption{Training accuracy}
    % \end{subfigure}    
    \end{subfigure}
    \hfill
    \begin{subfigure}[b]{0.48\linewidth}
        \centering
    % \begin{subfigure}[b]{1.0\linewidth}
        % \centering
        \includegraphics[width=\linewidth]{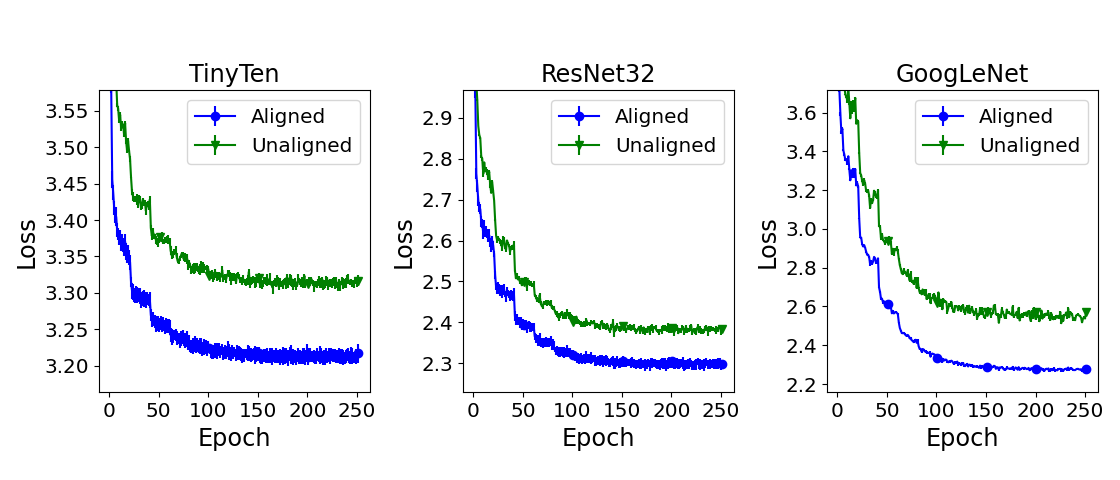}
        % \caption{Training loss}
    % \end{subfigure}
    % \hfill
    % \begin{subfigure}[b]{1.0\linewidth}
    %     \centering
    %     \includegraphics[width=\linewidth]{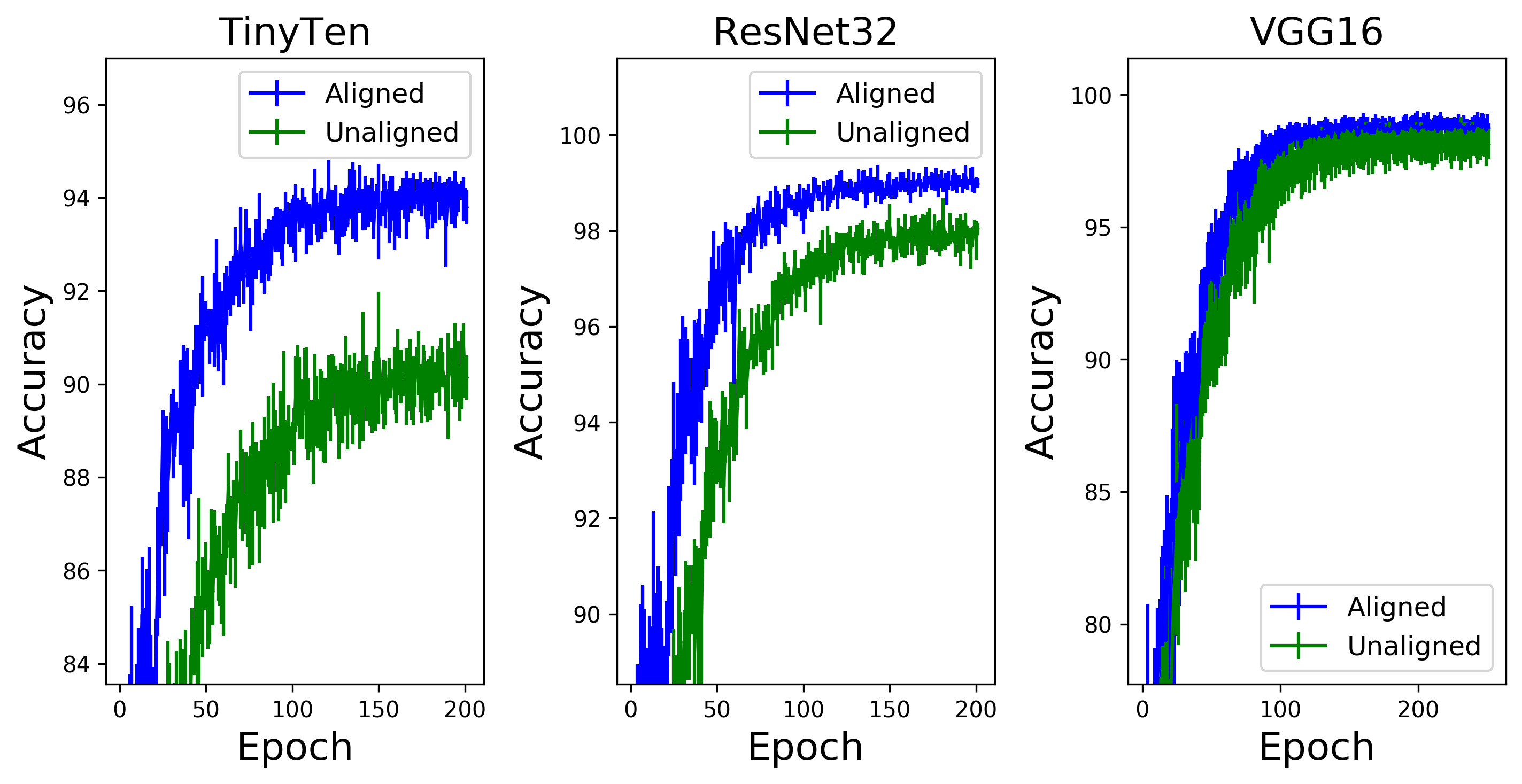}
    %     \caption{Training accuracy}
    % \end{subfigure}    
    \end{subfigure}
    \vspace{-2mm}
    \caption{\textbf{Left/Right}: Training loss while learning the curve between two CIFAR10/Tiny ImageNet models.}
    \label{fig:training_cifar10}
\end{figure}

% \begin{figure}[H]
%     \centering
%     \begin{subfigure}[b]{0.95\linewidth}
%         \centering
%         \includegraphics[width=\linewidth]{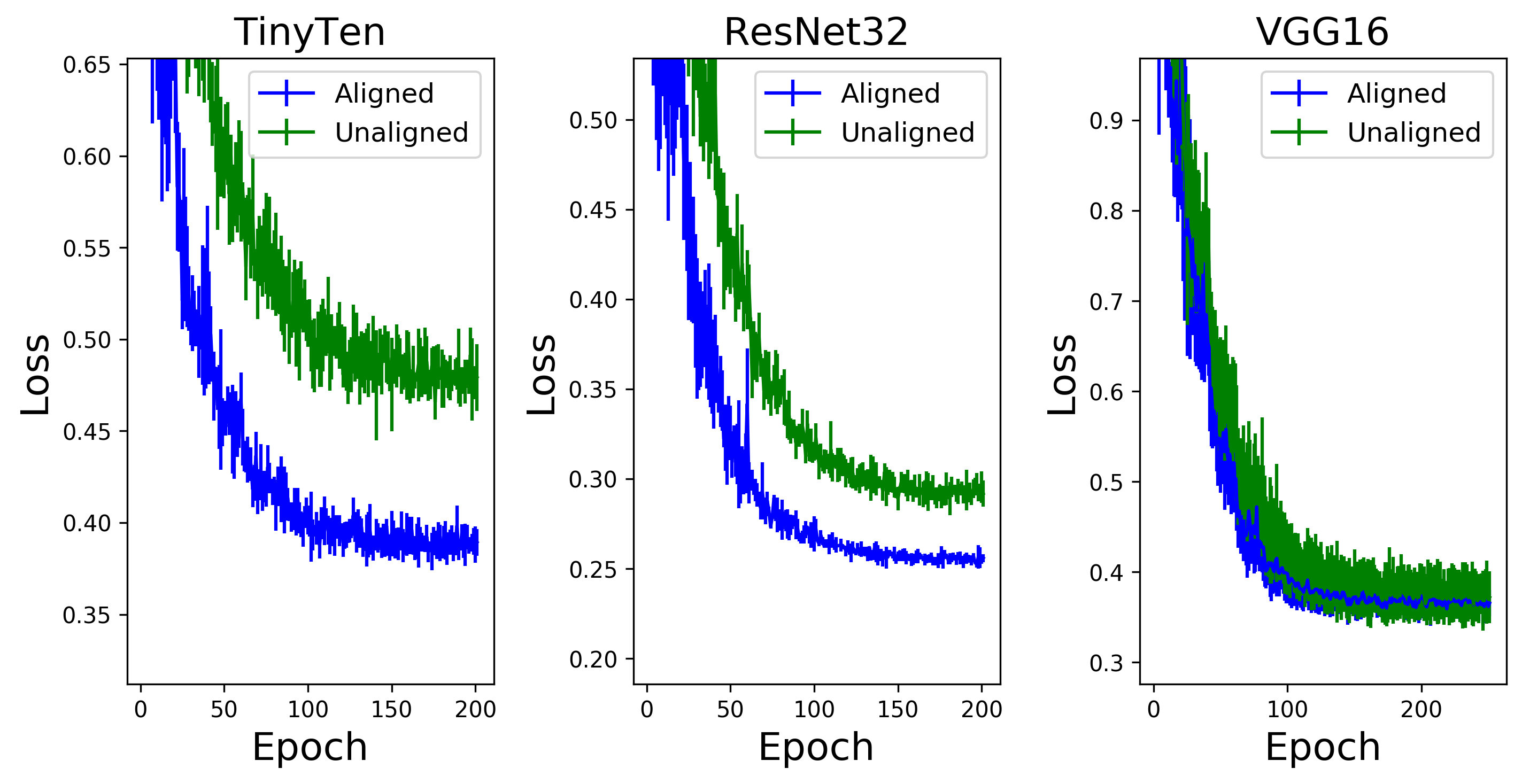}
%         \caption{Training loss}
%     \end{subfigure}
%     \hfill
%     \begin{subfigure}[b]{0.95\linewidth}
%         \centering
%         \includegraphics[width=\linewidth]{figures/fin_train_acc_stl10.png}
%         \caption{Training accuracy}
%     \end{subfigure}
%     \vspace{-2mm}
%     \caption{\textbf{Top/Bottom}: Training loss/accuracy while learning the curve between two STL10 models.}
%     \label{fig:training_stl10}
% \end{figure}

\begin{figure}[htb]
\begin{subfigure}[b]{0.48\linewidth}
    \centering
    \begin{subfigure}[b]{0.95\linewidth}
        \centering
        \includegraphics[width=\linewidth]{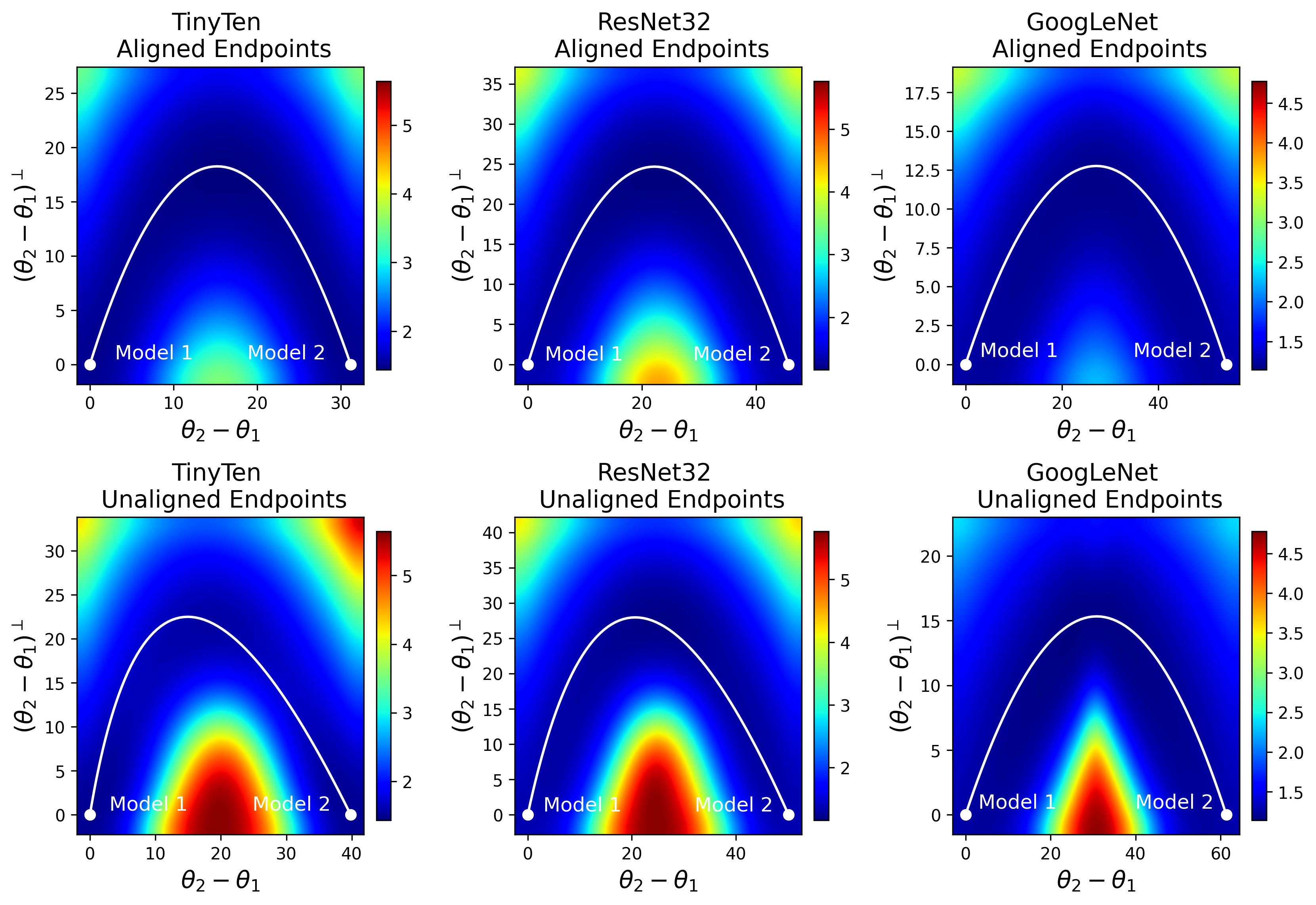}
        \caption{CIFAR100}
        \label{fig:plane_curve_loss_CIFAR100}
    \end{subfigure}
    \begin{subfigure}[b]{0.95\linewidth}
        \centering
        \includegraphics[width=\linewidth]{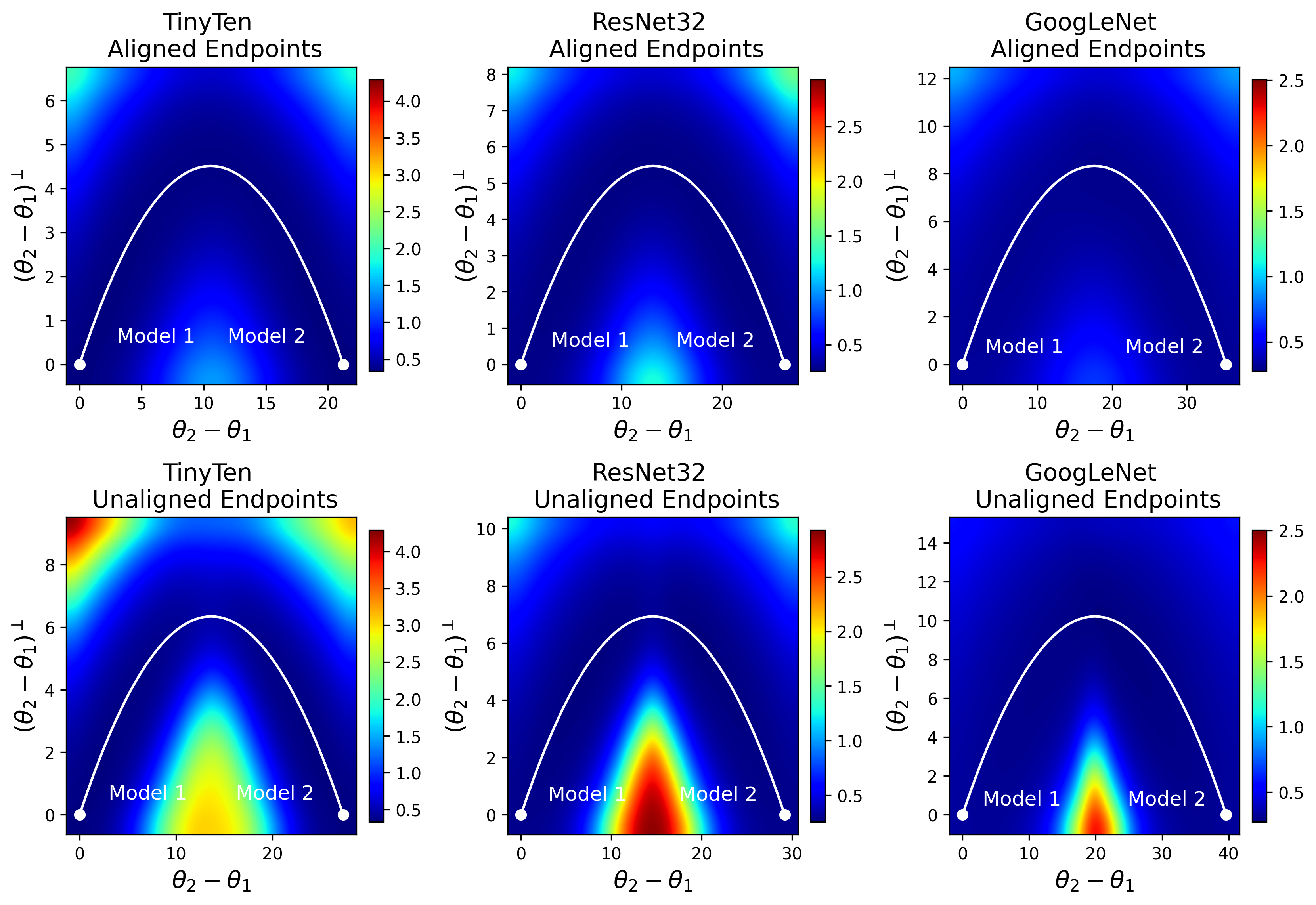}
        \caption{CIFAR10}
    \end{subfigure}
    \begin{subfigure}[b]{0.95\linewidth}
        \centering
        \includegraphics[width=\linewidth]{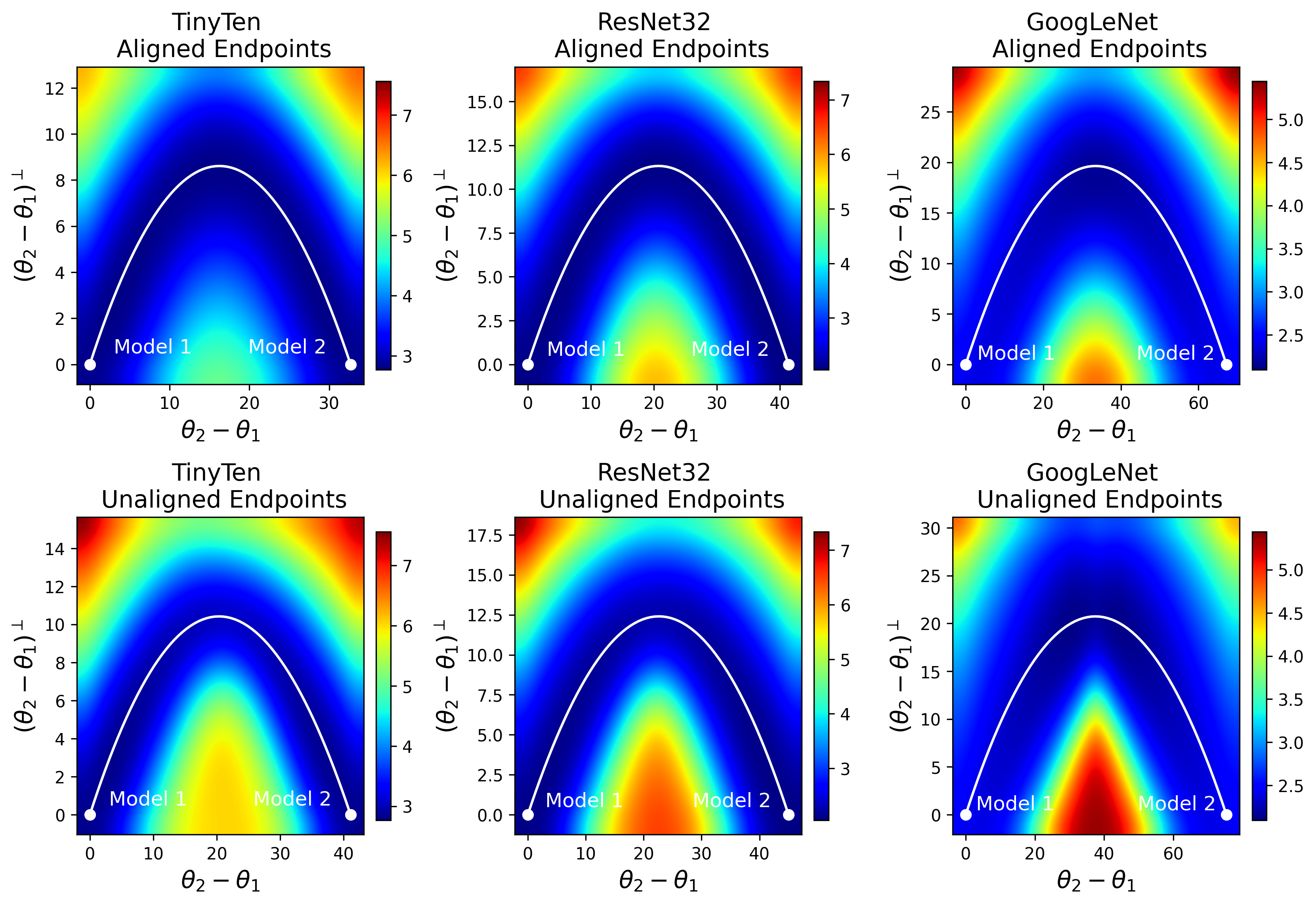}
        \caption{Tiny ImageNet}
    \end{subfigure}
\end{subfigure}
\hfill
\begin{subfigure}[b]{0.48\linewidth}
        \centering
    \begin{subfigure}[b]{0.95\linewidth}
        \centering 
            \includegraphics[width=\linewidth]{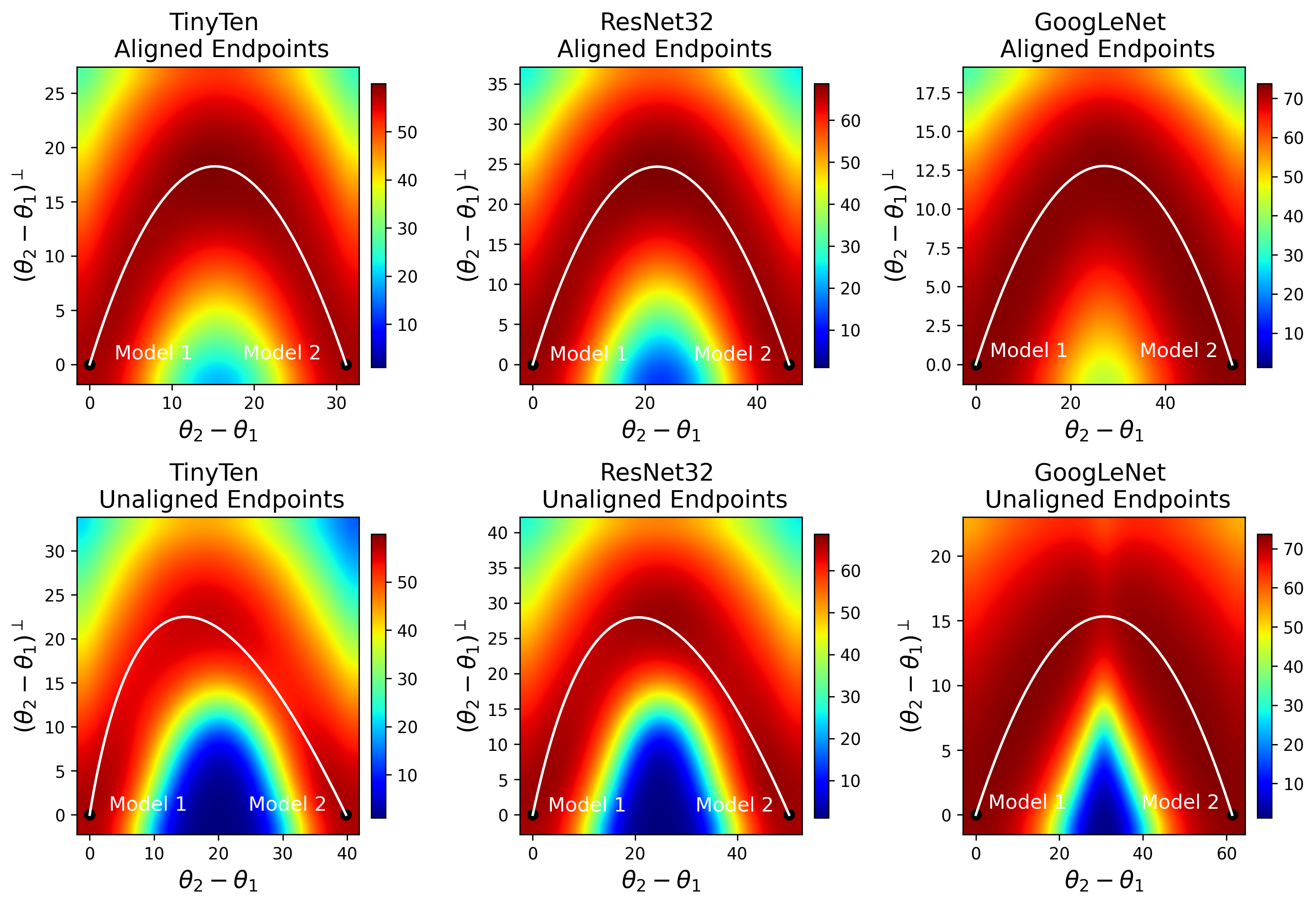} 
            \caption{CIFAR100}
    \end{subfigure}
    \begin{subfigure}[b]{0.95\linewidth}
        \centering
        \includegraphics[width=\linewidth]{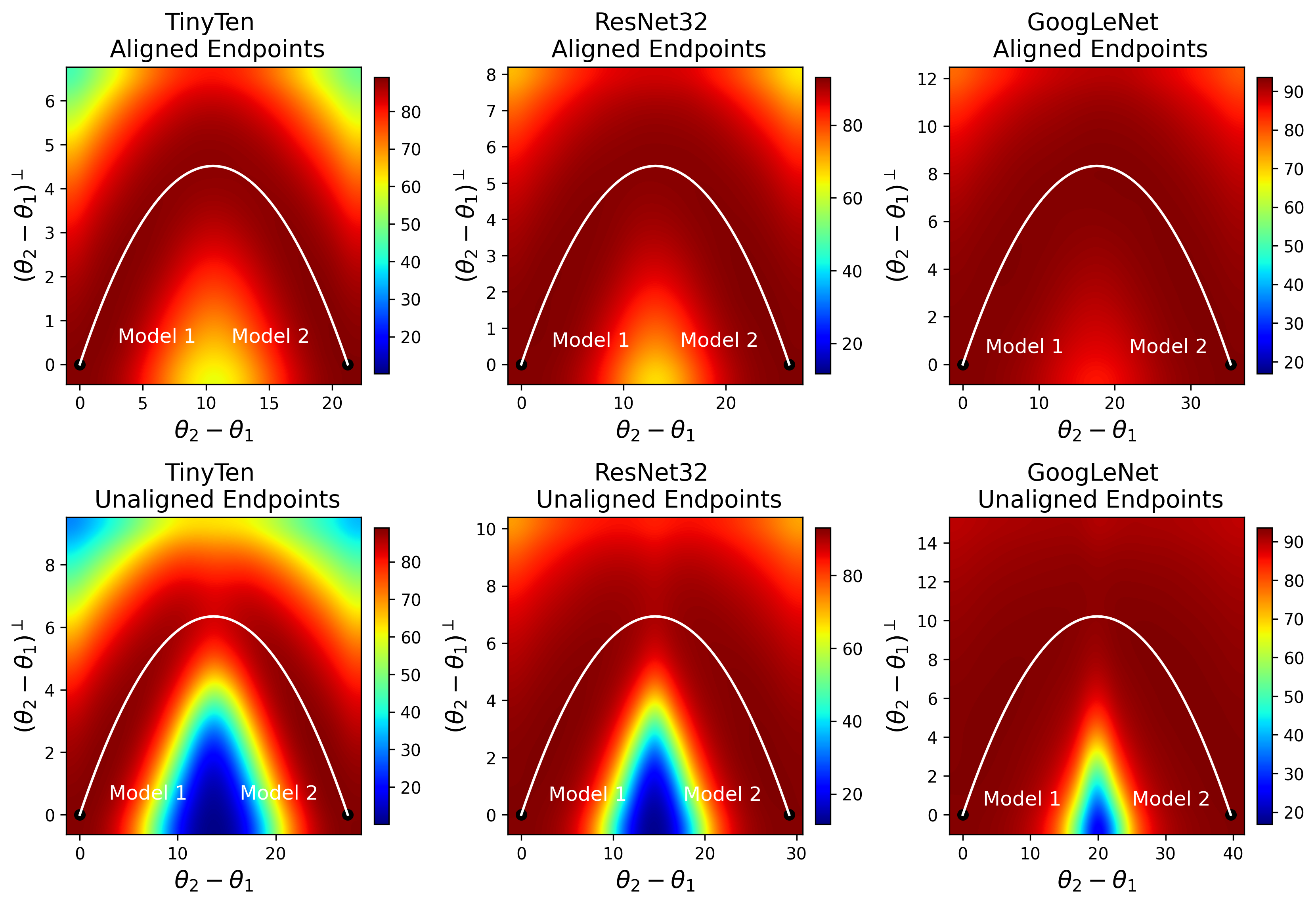}
        \caption{CIFAR10}
    \end{subfigure}
    \begin{subfigure}[b]{0.95\linewidth}
        \centering
        \includegraphics[width=\linewidth]{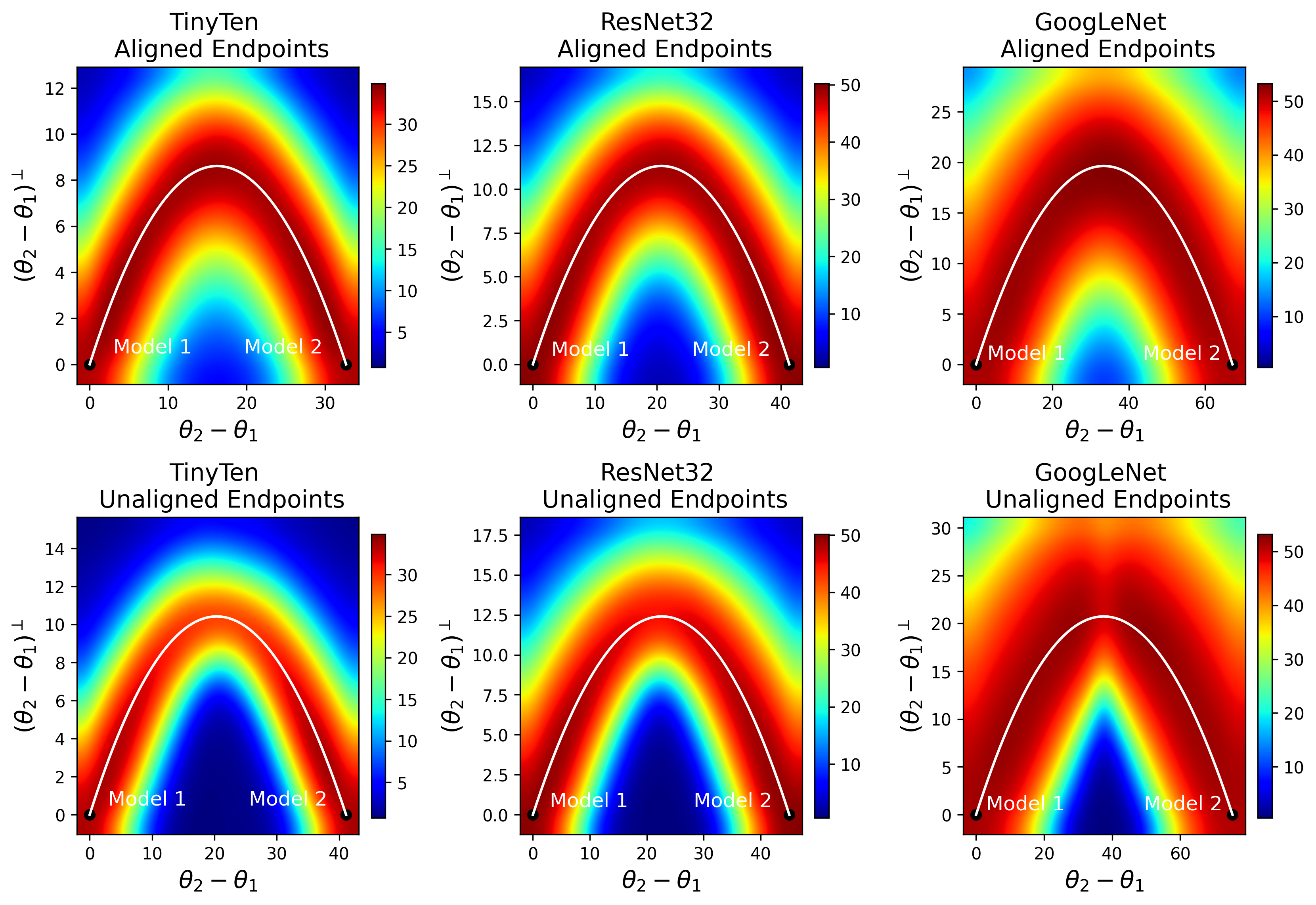}
        \caption{Tiny ImageNet}
    \end{subfigure}
\end{subfigure}
\caption{\textbf{Left/Right:} The test loss/accuracy on plane containing learned curve, $r_{\phi}(t)$.}
\label{fig:loss_acc_curve_planes_add}
\end{figure}

\begin{figure}[htb]
    \centering
    \begin{subfigure}[b]{0.48\linewidth}
        \centering
        \includegraphics[width=\linewidth]{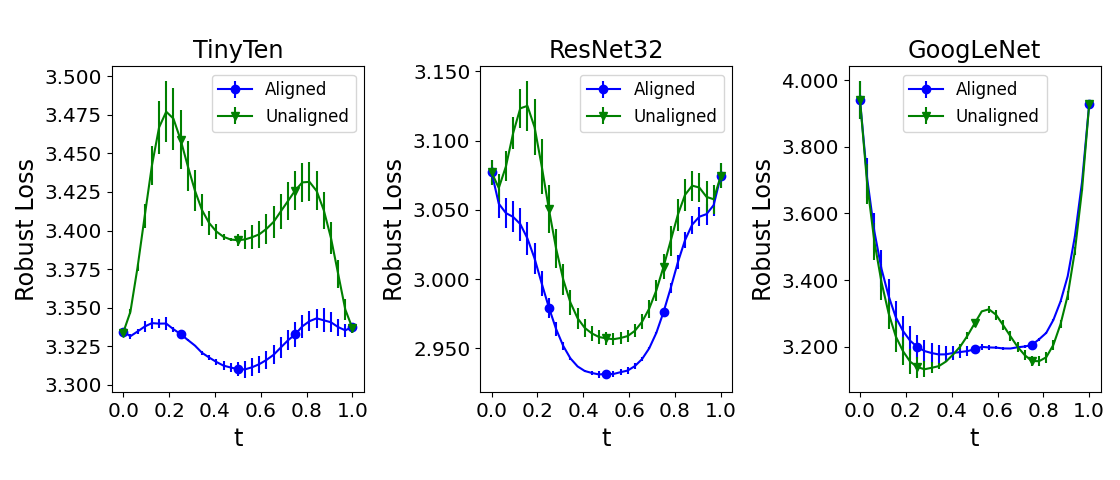}
        \caption{CIFAR100}
    \end{subfigure}
    \hfill
    \begin{subfigure}[b]{0.48\linewidth}
        \centering
        \includegraphics[width=\linewidth]{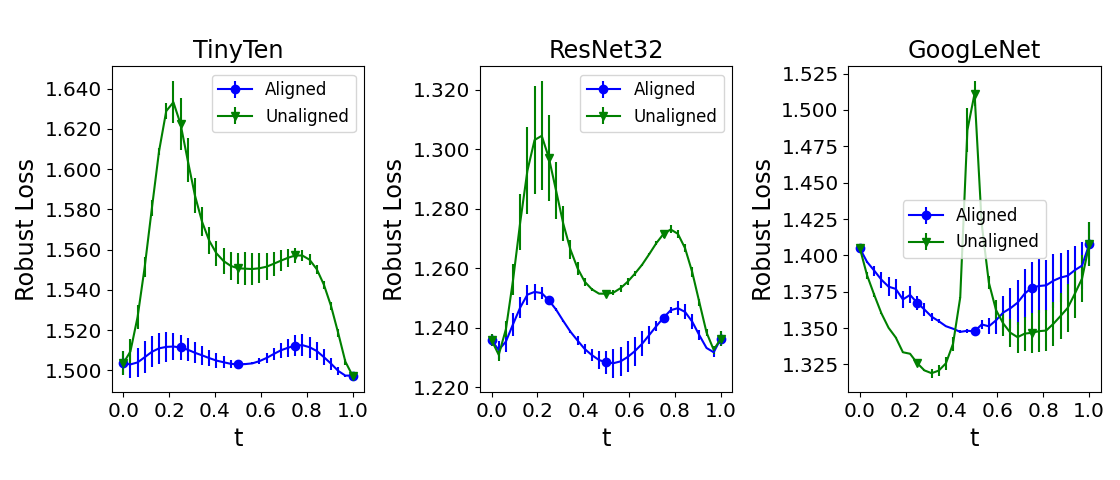}
        \caption{CIFAR10}
    \end{subfigure}    
    \caption{Robust test loss on curve between robust models.}
    \label{fig:robust_loss}
\end{figure}

\begin{figure}[htb]
    \centering
    % \begin{subfigure}[b]{0.48\linewidth}
    % \centering
        % \begin{subfigure}[b]{0.8\linewidth}
        % \centering
        % \includegraphics[width=\linewidth]{figures/cifar100_robusttrainingloss.png}
        % \caption{CIFAR100 Training Loss}
        % \end{subfigure}
        
        % \centering
        % \begin{subfigure}[b]{0.8\linewidth}
        % \centering
        % \includegraphics[width=\linewidth]{mode_connectivity/figures/cifar10_robustloss.png}
        % \caption{CIFAR10 Training Loss}
        % \end{subfigure}
    % \end{subfigure}
    % \hfill
    % \begin{subfigure}[b]{0.48\linewidth}
    \begin{subfigure}[b]{0.48\linewidth}
        \centering
        \includegraphics[width=\linewidth]{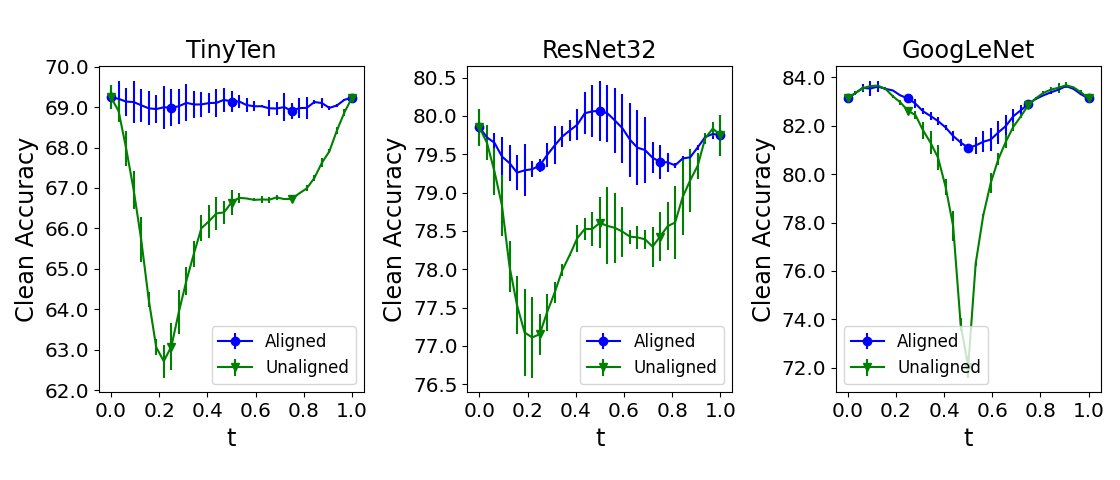}
        \caption{CIFAR10 Clean Accuracy}
    \end{subfigure}
    \begin{subfigure}[b]{0.48\linewidth}
        \centering
        \includegraphics[width=\linewidth]{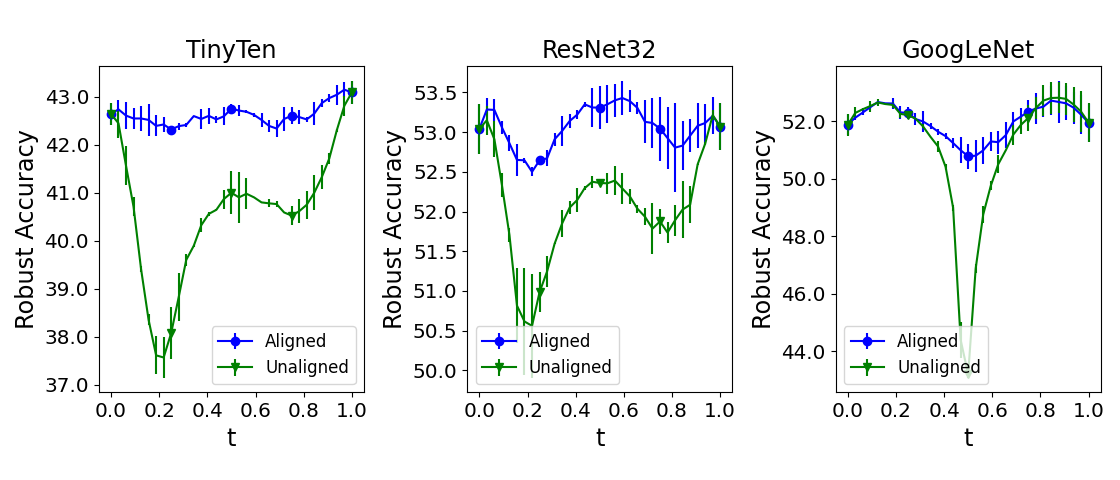}
        \caption{CIFAR10 Robust Accuracy}
    \end{subfigure} 
    % \end{subfigure}
    \caption{Clean/Robust accuracy on the CIFAR10 robust curves.}
    \label{fig:robust_cifar10}
\end{figure}

\begin{table}[htb]
\caption{The training loss with standard deviation is reported for each combination of dataset, network architecture, and curve class. GoogLeNet has higher training loss due to weight regularization.}
\label{table:results_cifar100_loss_train}
\small
% \adjustbox{max width=\textwidth}{
\centering
\begin{tabular}{ l l r r r} 
\toprule
Model & Endpoints & CIFAR10 & CIFAR100 & Tiny ImageNet\\
\midrule
\multirow{4}{4em}{TinyTen} 
& Unaligned & $0.428 \pm 0.003$ & $1.839 \pm 0.010$ & $3.317 \pm 0.008$ \\
& PAM Unaligned & $0.413 \pm 0.001$ & $1.753 \pm 0.016$ & $3.249 \pm 0.005$ \\
& PAM Aligned & $0.372 \pm 0.002$ & $\mathbf{1.679 \pm 0.005}$ & $\mathbf{3.214 \pm 0.003}$ \\
& Aligned & $\mathbf{0.371 \pm 0.002}$ & $1.693 \pm 0.008$ & $3.217 \pm 0.013$ \\
\midrule
\multirow{4}{4em}{ResNet32} 
& Unaligned & $0.179 \pm 0.001$ & $1.124 \pm 0.005$ & $2.383 \pm 0.005$ \\ 
& PAM Unaligned & $0.170 \pm 0.001$ & $1.043 \pm 0.008$ & $2.350 \pm 0.001$\\
& PAM Aligned & $\mathbf{0.147 \pm 0.002}$ & $\mathbf{0.975 \pm 0.008}$ & $2.308 \pm 0.003$ \\
& Aligned & $\mathbf{0.147 \pm 0.001}$ & $1.011 \pm 0.002$ & $\mathbf{2.299 \pm 0.009}$ \\
\midrule
\multirow{2}{5em}{GoogLeNet} & Unaligned & $0.540 \pm 0.001$ & $1.161 \pm 0.004$ & $2.570 \pm 0.009$ \\ 
& Aligned & $\mathbf{0.516 \pm 0.001}$ & $\mathbf{1.033 \pm 0.002}$ & $\mathbf{2.278 \pm 0.005}$ \\
\bottomrule
\end{tabular}
% }
\end{table}

\section{Algorithms}
\label{sec:algs}

\begin{algorithm}[htb]
 \caption{Curve Finding \citep{garipov2018loss}}
 \label{alg:curve_find}
\begin{algorithmic}
\STATE {\bfseries Input:} Two trained models, $\vtheta_1$ and $\vtheta_2$
\STATE {\bfseries Output:} A parameterized curve, $\vr_\phi$, connecting $\vtheta_1$ and $\vtheta_2$ along which loss is flat
\STATE Initialize $\vr_\phi(t)$ as $\vtheta_1 + t (\vtheta_2 - \vtheta_1)$\;
 \WHILE{not converged}
  \FOR{batch in dataset}
  \STATE sample point $\rt_0$ in $[0, 1]$
  \STATE compute loss $L(\vr_\phi(\rt_0))$ 
  \STATE optimization step on network $\vr_\phi (\rt_0)$ to update $\phi$ 
  \ENDFOR
 \ENDWHILE
\end{algorithmic}
\end{algorithm}
This section contains algorithms described in Section \ref{sec:background}. In the curve finding algorithm, the optimization step can correspond to a variety of techniques. In this paper, we use traditional stochastic gradient descent to update the curve parameters $\phi$. Notice that stochasticity is introduced by the sampling of $t$ as well as the training data. This is detailed in Algorithm \ref{alg:curve_find}.

For the purpose of computing validation loss and test loss for $\vr_\phi$, important care must be given for networks that contain batch normalization layers. This is because batch normalization aggregates running statistics of the network output that are used when evaluating the model. Though, $\vr_\phi(t_0)$ gives the weights for the model at point $t_0$, the running statistics need to be aggregated for each normalization layer. In practice, this can be done by training the model for one epoch, while freezing all learnable parameters of the model. Since batch statistics would need to be computed for each point sampled along the curve, it happens that computing the validation or test loss of the curve $\vr_\phi$ is more expensive than an epoch of training.    

\section{Theoretical Motivation for Mode Connectivity with Neuron Alignment}
\label{sec:theory_neuron_alignment}

In this section we present a theoretical discussion regarding the use of neuron alignment for curve finding up to weight symmetry. We begin by defining relevant terminology. 

\paragraph{Wasserstein distance} In the following proof, we will make use of the Wasserstein-2 metric for measuring a distance between probability distributions. This metric has recently been popular in works such as WGAN \citep{arjovsky2017wasserstein}. Formally, let $\mu$ and $\nu$ be probability measures, i.e. distributions, on a given metric space $M$. We let $\Gamma(\mu, \nu)$ denote the set of joint probability measures with marginals $\mu$ and $\nu$. Then the Wasserstein-p metric is defined as
\begin{equation}
    \label{eq:wasserstein}
    W_p(\mu, \nu) := \left(\inf_{\gamma \in \Gamma(\nu, \mu)} \E_{(\rx, \ry) \sim \gamma}[||\rx - \ry||^p] \right)^{\frac{1}{p}}
\end{equation}
Note that this metric is related to optimal transport, where $\gamma^*$ is the optimal transport plan between the two distributions with the cost being the Euclidean p-norm. Remember that the distributions we are interested in our the distributions of intermediate activations of neuron networks, $\mX_{l}^{(1)}$ and $\mX_{l}^{(2)}$, as discussed in \ref{subsec:background_align}. Then \eqref{eq:wasserstein} simplifies to 
\begin{equation}
    \label{eq:wasserstein_int_act}
    W_p(\mX_{l}^{(1)}, \mX_{l}^{(2)})^p = \min_{\mP \in \Pi_{K_l}} \sum_{i \in K_l}||\mX_{l, i}^{(1)} - \mX_{l, \mP(i)}^{(1)}||^2. 
\end{equation}
This comes from the fact that since we are dealing with an empirical distribution with uniform marginals, so the set of permutations extremal points and thus contains a minimizer. 

% \begin{figure}[htb]
% \centering
% \begin{subfigure}[b]{0.97\linewidth}
% \centering
% \includegraphics[width=\linewidth]{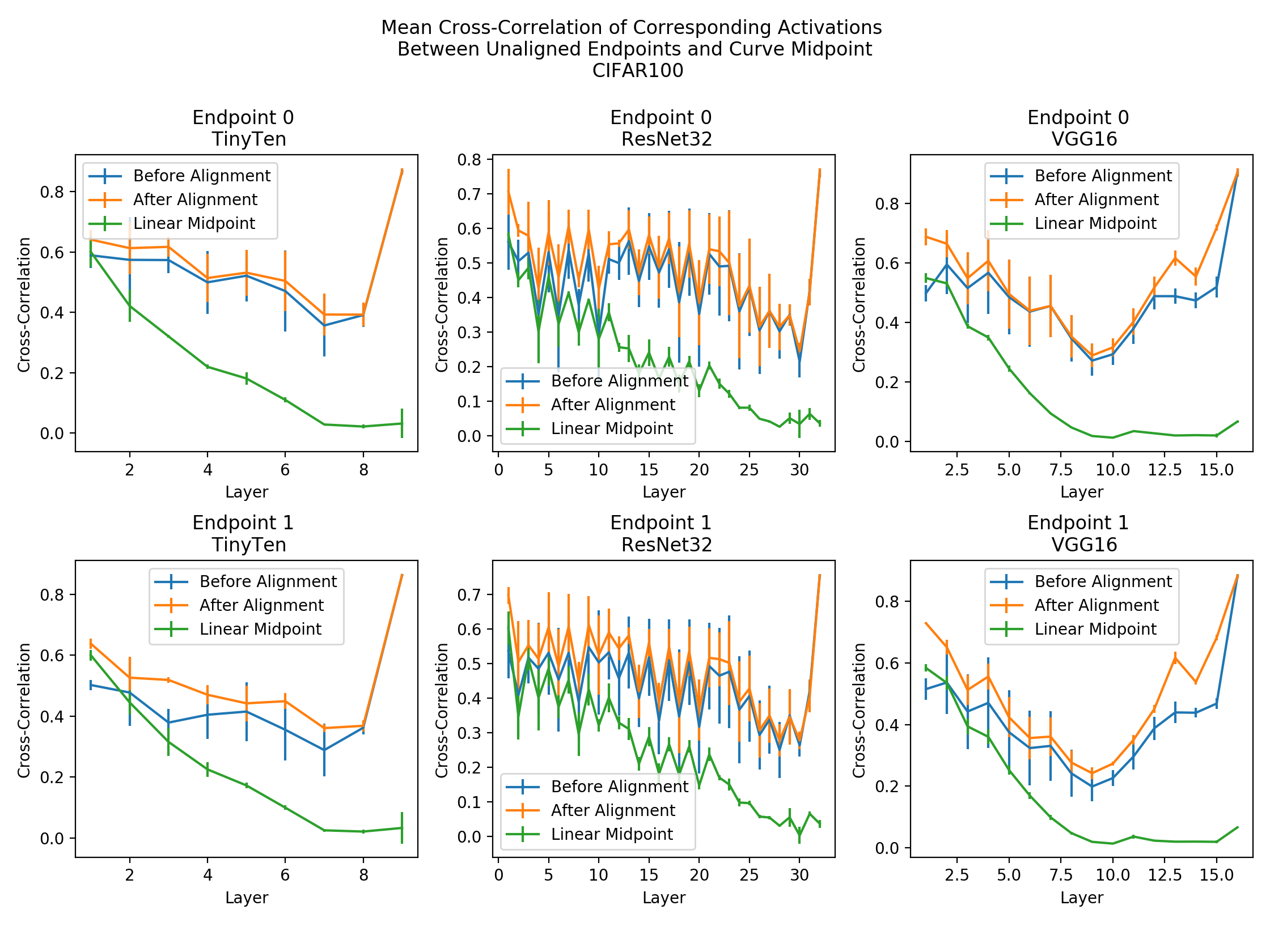}
% \end{subfigure}
% \hfill
% \begin{subfigure}[b]{0.97\linewidth}
% \centering
% \includegraphics[width=\linewidth]{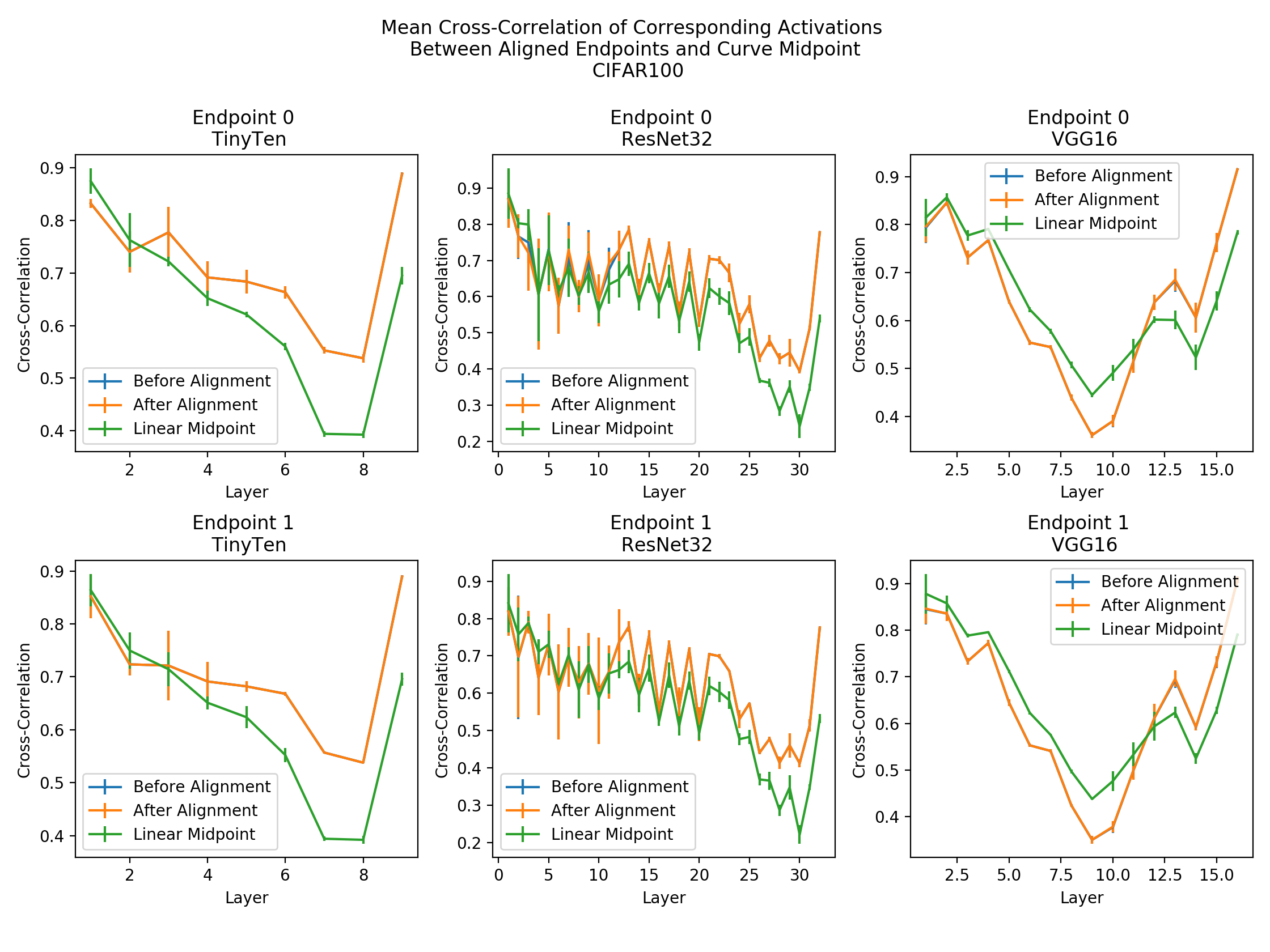}
% \end{subfigure}
% \vspace{-2mm}
% \caption{The mean cross-correlation between units in the curve midpoint model and each endpoint model. For context, the mean cross-correlation between the linear midpoint and each endpoint is displayed. Additionally, the mean cross-correlation between the curve midpoint and each endpoint after being aligned to the respective endpoint is displayed.}
% \label{fig:viz_align_along_curve}
% \end{figure}

% Clearly, alignment is a useful method for learning better flat loss curves between models. An interesting question is how curve finding itself relates to alignment. Until now, we have only considered the alignment between the endpoint models, $\vr(0)$ and $\vr(1)$. Now, we consider how points along the curve, $\vr(t)$, align to the endpoints. To study this numerically, we will use the curve midpoint $\vr(0.5)$. From Figure \ref{fig:loss_acc_curve_planes_add}, we see that this is the point on the quadratic Bezier curve that is roughly linearly connected to both endpoints.  

\subsection{Proof of Theorem \ref{thm:na_bound}}
\label{subsec:na_theorem}

For this proof, we consider a pair of feed-forward networks, with output defined as 
\begin{equation}
    \label{eq:one_layer_rec_net}
    \mY_j = \mW_L^{(j)} \sigma \mW_{L-1}^{(j)} \ldots \sigma \mW_1^{(j)} \mX_0, \quad i=1,2, 
\end{equation}
for the given input data distribution $X_0$. Now we consider the addition of the permutation matrices, $\mP_i$, that generalize the above equation to deal with weight symmetry,
\begin{equation}
     \mY_j = \mW_L^{(j)} \mP_{L-1}^T \sigma \mP_{L-1} \mW_{L-1}^{(j)} \mP_{L-2}^T \ldots \mP_1^T \sigma \mP_1 \mW_1^{(j)} \mX_0.
\end{equation}
The initialization used for curve finding is the interpolation between the two neural networks. This allows us to define the unaligned and aligned linear interpolations,
\begin{align}
    \vl_u(t) =& \left( (1-t)\mW_L^{(1)} + t \mW_L^{(2)} \right) \sigma \left( (1-t)\mW_{L-1}^{(1)} + t \mW_{L-1}^{(2)} \right) \ldots \\
    & \quad \sigma \left( (1-t) \mW_1^{(1)} + t \mW_1^{(2)} \right) \mX_0, \notag \\ 
    \vl_a(t) =& \left( (1-t)\mW_L^{(1)} + t \mW_L^{(2)} \mP_{L-1}^T \right) \sigma \left( (1-t)\mW_{L-1}^{(1)} + t \mP_{L-1} \mW_{L-1}^{(2)} \mP_{L-2}^T \right) \ldots \\
        & \quad \sigma \left( (1-t) \mW_1^{(1)} + t \mP_1 \mW_1^{(2)} \right) \mX_0. \notag 
\end{align}
For layer $i$ of networks along the interpolation, we define the pre-activations, $\vf_i$, and post-activations, $\vg_i$,
\begin{align}
    \vf_1^u(t) =& \left( (1-t) \mW_1^{(1)} + t \mW_1^{(2)} \right) \mX_0 \\
    \vg_i^u(t) =& \sigma \vf_i^u(t) \notag \\
    \vf_i^u(t) =& \left( (1-t)\mW_i^{(1)} + t \mW_i^{(2)} \right) \vg_{i-1}^u(t). \notag
\end{align}
These are defined similarly for the interpolation of the aligned networks, where we denote them as $\vf_i^a$ and $\vg_i^a$. 

Now we consider the $L_2$ distance between the first layer pre-activation distributions and the endpoint intermediate activation distributions. We define the following relevant $L_2$ distances:
\begin{align}
    \vd_1^u(t; 0) &= ||\vf_1^u(t) - \vf_1^u(0)||_2 = t || -\vf_1^u(0) + \vf_1^u(1) ||_2 \\
    \vd_1^u(t; 1) &= ||\vf_1^u(t) - \vf_1^u(1)||_2 = (1-t) || -\vf_1^u(0) + \vf_1^u(1) ||_2 \\
    \vd_1^a(t; 0) &= ||\vf_1^a(t) - \vf_1^u(0)||_2 = t || -\vf_1^u(0) + \mP_1 \vf_1^u(1) ||_2 \\
    \vd_1^a(t; 1) &= ||\vf_1^a(t) - \mP_1 \vf_1^u(1)||_2 = (1-t) || -\vf_1^u(0) + \mP_1 \vf_1^u(1) ||_2
\end{align}
Notice that $\mP_1$ is the permutation associated with minimizing the ground metric $L_2$ norm for the Wasserstein distance between the endpoint models $\vf_1^u(0)$ and $\vf_1^u(1)$. Then it immediately follows that 
\begin{equation}
\label{eq:l2_dist_layer1}
    \vd_1^a(t; 0) \leq \vd_1^u(t;0) \qquad \vd_1^a(t; 1) \leq \vd_1^u(t;1).
\end{equation}
Thus, we have a tighter bound on the distance between the first layer pre-activations of models along the aligned curve than the unaligned curve to those of the endpoint models. We also have that the nonlinear pointwise activation function $\sigma$ is Lipschitz continuous. Thus, there exists a constant $L_\sigma$ such that
\begin{equation}
\label{eq:lipshitz_lay}
    ||\sigma \vf_1^u(t) - \sigma \vf_1^u(0)||_2 \leq L_\sigma \vd^u_1(t; 0).
\end{equation}
Clearly, a similar relation holds for the other distances. 

We calculate our distances $\vd$ for the deeper layers of the network. We determine bounds on these distances, using $\vd$ recursively, given in the following equations:
\begin{align}
    \vd_i^u(t; 0) &= ||\vf_i^u(t) - \vf_i^u(0)||_2 \\
    & \leq L_\sigma((1-t)||\mW_i^{(1)}||_2  \vd_{i-1}^u(t;0) + t ||\mW_i^{(2)}||_2 \vd_{i-1}^u(t;1)) + t ||-\vf_i^u(0) + \vf_i^u(1)||_2 \notag \\
    \vd_i^u(t; 1) &= ||\vf_i^u(t) - \vf_i^u(1)||_2 \\
    & \leq L_\sigma((1-t)||\mW_i^{(1)}||_2 \vd_{i-1}^u(t;0) + t ||\mW_i^{(2)}||_2  \vd_{i-1}^u(t;1)) \notag \\
    & \quad + (1-t) ||-\vf_i^u(0) + \vf_i^u(1)||_2 \notag \\
    \vd_i^a(t; 0) &= ||\vf_i^a(t) - \vf_i^u(0)||_2  \\
    &\leq L_\sigma ( (1-t)||\mW_i^{(1)}||_2 \vd_{i-1}^a(t;0) + t ||\mW_i^{(2)}||_2 \vd_{i-1}^a(t;1) ) + t ||-\vf_i^u(0) + \mP_i \vf_i^u(1)||_2 \notag \\
    \vd_i^a(t; 1) &= ||\vf_i^a(t) - \mP_i \vf_i^u(1)||_2 \\
    &\leq L_\sigma ((1-t)||\mW_i^{(1)}||_2 \vd_{i-1}^a(t;0) + t ||\mW_i^{(2)}||_2 \vd_{i-1}^a(t;1))  \notag \\
    & \quad + (1-t) ||-\vf_i^u(0) + \mP_i \vf_i^u(1)||_2 \notag
\end{align}

Using \eqref{eq:l2_dist_layer1} and that $\mP_i$ is chosen to minimize the $L_2$ distance of the intermediate pre-activations of the endpoints, it follows inductively that
\begin{equation}
\label{eq:l2_dist_layeri}
    \vd_i^a(t; 0) \leq \vd_i^u(t;0) \qquad \vd_i^a(t; 1) \leq \vd_i^u(t;1).
\end{equation}
Thus, we have derived a tighter bound on the distance between the intermediate pre-activation distributions for models along the aligned linear interpolation to those of the endpoint. 

Now, we make clear that the two endpoint networks are taken to be $\epsilon$ optimal networks. That is, $||\mY - \mY_j||_2 \leq \epsilon$, where $\mY$ are the true output for the training data. Clearly, given two trained networks such an $\epsilon$ must exist. This allows the following inequalities to hold,
\begin{align}
    ||\vl_u(t) - \mY||_2 &\leq (1-t)||\mW_L^{(1)} (\sigma \vf_{L-1}^u(t) - \sigma \vf_{L-1}^u(0))||_2 \\
    & \quad + t ||\mW_{L}^{(2)} (\sigma \vf_{L-1}^u(t) - \sigma \vf_{L-1}^u(1))||_2 + \epsilon, \notag \\
    &\leq (1-t)||\mW_L^{(1)}||_2 L_\sigma \vd_{L-1}^u(t; 0) + t ||\mW_L^{(2)}||_2 L_\sigma \vd_{L-1}^u(t;1) + \epsilon. \notag
\end{align}
Similarly, we have that
\begin{align}
    \label{eq:eps_inequality}
    ||\vl_a(t) - \mY||_2 \leq (1-t)||\mW_L^{(1)}||_2 L_\sigma \vd_{L-1}^a(t; 0) + t ||\mW_L^{(2)}||_2 L_\sigma \vd_{L-1}^a(t;1) + \epsilon.
\end{align}

Finally, we have that the loss function $\mathcal{L}$ is Lipschitz-continuous or that the input dataset is bounded. If only the later case is satisfied, then it follows that the output is also bounded. As the loss function is continuous, it follows that $\mathcal{L}$ is Lipschitz-continuous restricted to the image of the dataset under the neural networks. Then there exists a constant $L_L$ such that
\begin{align}
\label{eq:local_lipschitz}
    \mathcal{L}(\vl_u(t) - \mY) &\leq L_L ||\vl_u(t) - \mY||_2 := B_u(t), \\
    \mathcal{L}(\vl_a(t) - \mY) &\leq L_L ||\vl_a(t) - \mY||_2 := B_a(t). \notag 
\end{align}
Now notice that since $\vd_{L-1}^a(t) \leq \vd_{L-1}^u(t)$, it follows that 
\begin{equation}
    B_a(t) \leq B_u(t), \quad t \in [0,1].
\end{equation}
Then we define upper bounds on the initializations for solving \eqref{eq:curve_find_sym},
\begin{equation}
    B_a := \int_0^1 B_a(t) dt \leq B_u := \int_0^1 B_u(t) dt, 
\end{equation}
where $\mP$ is fixed as unaligned or determined by alignment. Thus, they are upper bounds for the optimal solutions. This completes the proof. 

\subsection{On the Tightness of the Bounds}
\label{subsec:bound_tightness}

In the aforementioned proof, we derive tighter bounds for loss along the aligned curve compared to the unaligned curve, using the $L_2$ distance between pre-activations. We can establish the tightness of the provided bounds for a nontrivial class of networks. We do this to show that the bounds provide insight into how alignment aids mode connectivity, while having some practicality. The class of networks for which we will show tightness are networks with ReLU activation function and root mean squared error (RMSE) loss. 

We emphasize that the bound depends on the following inequalities: 
\begin{enumerate}
    \item local Lipschitz continuity of the loss function
    \item Lipschitz continuity of the activation function
    \item matrix norm inequalities for the layer weights $\mW_i$
    \item triangle inequality for expressing the intermediate activations as a linear combination of those in the previous layer
    \item triangle inequality related to $\epsilon$-optimality of the endpoints
\end{enumerate}
First we show which weights allow the bounds for the linear interpolation between models to be tight.
\begin{enumerate}
    \item Since the loss function is RMSE, \eqref{eq:local_lipschitz} achieves equality with $L_L=1$.
    \item The ReLU activation function is Lipschitz continuous with $L_\sigma = 1$. This inequality is tight for bias vector large enough so that activations are non-negative. 
    \item The matrix norm inequality is met with equality when the weights $W_i$ act as isometries on the set of activations from the previous layer. 
    \item For these triangle inequalities to be met, all terms in the sum must have the same sign. This can be accomplished by a choice of bias vectors such that $\min f_i^u(1)$ is greater than $\max f_i^u(0)$. This is in addition to $W_i^{(1)} \geq 0$ and $W_i^{(2)} \leq 0$.    
    \item The final triangle inequality can be satisfied in the following way. The signs of the weight matrices can be found such that the endpoint activations have the same sign. The bias vectors can be found such that the endpoint activations are greater than the ground truth labels for some dataset. These choices of weights define a dataset for which the endpoint models are strictly $\epsilon_1$ and $\epsilon_2$ optimal respectively. Then $\epsilon$ in \eqref{eq:eps_inequality} can be replaced with the term, $(1-t) \epsilon_1 + t \epsilon_2$. These choices guarantee the last inequality is tight, albeit with a more specified epsilon.  
\end{enumerate}
 Note that we show that there exists a choice of weights, $\epsilon_1$, and $\epsilon_2$ for which these bounds are tight.

In the main text, we discussed how tightness in the bounds for linear interpolation implies tightness in the bounds for continuous curves. Then, these bounds are nontrivial as we have tightness for a wide class of networks and curve parameterizations under a reasonable assumption. 

\subsection{On the Use of Post-Activations for Neuron Alignment}
\label{subsec:post_or_preact}

In the previous proof in Appendix \ref{subsec:na_theorem}, we assumed that the alignment is based on minimizing the $L_2$ distance of preactivations. The use of preactivation is needed in the calculation of $\vd_i$, where the term $\vf_i^u(t)$ can be linearly decomposed into $(1-t) \mW_i^{(1)} \sigma \vf_{i-1}^u(t)$ and $t \mW_i^{(2)} \sigma \vf_{i-1}^u(t)$. Such a decomposition does not necessarily hold for post-activations. 

A natural question is how Theorem \ref{thm:na_bound} can be applied to alignment of post-activations. This can be accomplished by modifying \eqref{eq:opt_trans} in the following way. Let $\mC_{l, \text{pre}}$ and $\mC_{l, \text{post}}$ be the cost matrices of the $L_2$ distances of pre-activations and post-activations in network layer $l$ respectively. Then we can define the permutation associated with aligning post-activations as 
\begin{align}\label{eq:align_preact}
    \mP_l^* = &\argmin_{\mP_l \in \Pi_{K_l}} \text{trace}(\mC_{l, \text{post}}^T \mP_l) \\
    &\text{such that} \quad  \text{trace}(\mC_{l, \text{pre}}^T \mP_l) \leq \text{trace}(\mC_{l, \text{pre}}^T). \notag
\end{align}
Given the added constraint, it follows that we can establish a tighter upper bound on the loss after aligning the post-activations, though this bound is not necessarily as tight as aligning preactivations. Using post-activations is more complicated theoretically due to the nonlinear nature of the activation function $\sigma$.  

\paragraph{On the Use of Cross-Correlation} In the main body of the paper, our numerical results concern the alignment given by maximizing the correlation of post-activations. We have just discussed theoretical details regarding the alignment of post-activations. Now, we address the use of cross-correlation. Given post-activations $\vg_i(0)$ and $\vg_i(1)$, if the distribution at each neuron is a unit normal Gaussian $\mathcal{N}(0, 1)$, then the alignment that maximizes cross-correlation is equivalent to the alignment that minimizes $L_2$ distance. In this sense, the use of cross-correlation approximates normalizing the distributions of post-activations before a $L_2$ minimizing alignment.

We provide an example to motivate the use of cross-correlation over unnormalized $L_2$ distance in our experiments. Consider a network with post-activations $\vg_i$ that will induce an alignment on the weights in the following linear layer of a neural network, $\mW_{i+1}$. With these quantities, we define quantities in what can be viewed as an equivalent network,
\begin{align}
    \hat{\vg}_i &= \text{diag}(\Sigma_{\vg_i})^{-1} (\vg_i - \E[\vg_i]), \\
    \hat{\mW}_{i+1} &= \mW_{i+1} (\mI + \E[\vg_i]) \text{diag}(\Sigma_{\vg_i}) \notag .  
\end{align}
Note that these are pointwise normalizations of the post-activations. Then it is reasonable that if we were to align the activations $\vg_i$ and $\hat{\vg}_i$, we would want an alignment invariant to affine transformations that can essentially be absorbed into the following linear layer. Additionally, maximizing cross-correlation as opposed to minimizing the $L_2$ distance of normalized distributions leads to the easy-to-interpret correlation signature seen in Figure \ref{fig:corr_align}.

In Figure \ref{fig:align_comp}, we compare these different techniques for alignment. Empirically, we find that aligning post-activations outperforms aligning pre-activations. Additionally, alignment via maximizing cross-correlation is seen to outperform all other methods. This validates are decision to use this technique in the main body of the paper.

\section{Residual Network Alignment}
\label{sec:resnet_align}

Algorithm \ref{alg:alignment} applies to networks with a typical feed-forward structure. In this section, we discuss how we compute alignments for the ResNet32 architecture as it is more complicated. It is important to align networks such that the network structure is preserved and network activations are not altered. In the context of residual networks, special consideration must be given to skip connections.

Consider the formulation of a basic skip connection,
\begin{equation} 
    \mX_{k+1} = \sigma \circ ( \mW_{k+1} \mX_k ) + \mX_{k-1} 
\end{equation}
In this equation, we can see that $\mX_{k+1}$ and $\mX_{k-1}$ share the same indexing of their units. This becomes clear when you consider permuting the hidden units in $\mX_{k-1}$ without permuting the hidden units of $\mX_{k+1}$. It is impossible to do so without breaking the structure of the equation above, where there is essentially the use of an identity mapping from $\mX_{k-1}$ to $\mX_{k+1}$. This effect that skip connections has on the symmetries of the loss surface has been studied previously in \citep{orhan2017skip}. We note that the skip connection does not eliminate this symmetry, the symmetry is now just shared across certain layers. 

We consider a traditional residual network that is decomposed into residual blocks. In each block the even layers have skip connections while the odd layers do not. So, we compute the alignment as usual for odd layers. For all even layers within a given residual block, we determine a shared alignment. We do this by solving the assignment problem for the average of the cross-correlation matrix over the even layers in that residual block.  

\section{Proximal Alternating Minimization for Solving the Joint Model}
\label{sec:PAM}

We introduce a framework to solve the generalized problem in \eqref{eq:curve_find_sym}. Theoretically, this problem is fairly complicated and hard to analyze. Numerically, approaching the problem directly with first order methods could be computationally intensive as we need to store gradients of $\phi$ and $\mP$ simultaneously. The problem can be more easily addressed using the method of proximal alternating minimization (PAM) \citep{attouch2010proximal}. The PAM scheme involves iteratively solving the two subproblems in \eqref{eq:PAM}. Here we let $Q(\phi, \mP)$ denote the objective function in \eqref{eq:curve_find_sym}. We only consider parameterized forms of $\vr$ that satisfy the endpoint constraints for all $\phi$ and $\mP$. 
\begin{equation}\label{eq:PAM}
\begin{cases}
    \mP^{k+1} =  \argmin\limits_{\mP} & Q(\phi^{k}, \mP) + \frac{1}{2 \nu_{P}} ||\mP - \mP^k||_2^2 \\
    \qquad \text{such that} & \text{blockdiag}(\mP_1, \mP_2, ..., \mP_{L-1}) \quad \text{where} \quad  
    \mP_l \in \Pi_{|K_l|} \\
    \phi^{k+1} =  \argmin\limits_{\phi} &  Q(\phi, \mP^{k+1}) + \frac{1}{2 \nu_{\phi}} ||\phi - \phi^k||_2^2
\end{cases}
\end{equation}
Computing the unaligned curve is equivalent to solving the PAM scheme with a very small value of $\nu_P$. In fact, we are able to prove local convergence results for a certain class of networks.

\setcounter{topnumber}{0}

\begin{theorem}[Convergence]\label{thm1}
Let $\{\phi^{k+1}, \mP^{k+1}\}$ be the sequence produced by \eqref{eq:PAM}. Assume that $\vr_{\phi}(t)$ corresponds to a feed-forward neural network with activation function $\sigma$ for $t \in [0, 1]$. Assume that $\mathcal{L}$, $r_{\phi}$, and $\sigma$ are all piece-wise analytic functions in $C^1$ and locally Lipschitz differentiable in $\phi$ and $\mP$. Lastly, assume that the input data is bounded and the norm of the network weights are constrained to be bounded above. Then the following statements hold:
\begin{enumerate}
    \item $Q(\phi^{k+1}, \mP^{k+1}) + \frac{1}{2 \nu_{\phi}} ||\phi^{k+1} - \phi^k||_2^2 + \frac{1}{2 \nu_{P}} ||\mP^{k+1} - \mP^k||_2^2 \leq Q(\phi^{k}, \mP^{k}), \quad \forall k \geq 0$ %\item $\sum_{k=1}^\infty \left( ||\phi^{k} - \phi^{k-1}||^2 + ||\mP^{k} - \mP^{k-1}||^2 \right) < \infty$
    \item $\{\phi^k, \mP^k\}$ converges to critical point of \eqref{eq:curve_find_sym}.
\end{enumerate}
\end{theorem}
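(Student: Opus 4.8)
The plan is to follow the abstract convergence framework for proximal alternating minimization of \citet{attouch2010proximal}, which reduces the claim to verifying, for the objective $Q$ augmented by the indicator of the block-diagonal permutation constraint, a sufficient-decrease property, a subgradient (relative-error) bound, and the Kurdyka--{\L}ojasiewicz (KL) property, together with compactness of the iterate sequence. The finite, discrete nature of the permutation feasible set will let me simplify the treatment of the $\mP$-block substantially.

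First I would establish Part 1, which is essentially immediate. Since $\mP^{k+1}$ minimizes $Q(\phi^k,\cdot)+\tfrac{1}{2\nu_P}\|\cdot-\mP^k\|_2^2$, evaluating this objective at the feasible point $\mP^k$ gives $Q(\phi^k,\mP^{k+1})+\tfrac{1}{2\nu_P}\|\mP^{k+1}-\mP^k\|_2^2 \le Q(\phi^k,\mP^k)$. The analogous comparison for the $\phi$-update yields $Q(\phi^{k+1},\mP^{k+1})+\tfrac{1}{2\nu_\phi}\|\phi^{k+1}-\phi^k\|_2^2 \le Q(\phi^k,\mP^{k+1})$, and chaining the two inequalities proves the stated decrease. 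Summing this telescoping bound over $k$ and using that $Q$ is bounded below — which holds because the bounded-data and bounded-weight assumptions confine the network outputs to a compact set on which the continuous loss is bounded — gives $\sum_k(\|\phi^{k+1}-\phi^k\|_2^2+\|\mP^{k+1}-\mP^k\|_2^2)<\infty$, so the increments tend to zero and the whole sequence stays in a compact set.

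For Part 2 I would exploit that the feasible set of block-diagonal permutation matrices is \emph{finite}. Because the $\mP$-increments vanish and distinct permutation matrices are separated by a fixed positive distance, $\mP^k$ must be eventually constant, say $\mP^k=\mP^\star$ for all $k\ge k_0$, with $\mP^\star$ a local minimizer of $Q(\phi^k,\cdot)$ over the discrete set; this sidesteps the need for a subgradient bound on the $\mP$-block. From $k_0$ onward the recursion collapses to the proximal-point iteration $\phi^{k+1}=\argmin_\phi Q(\phi,\mP^\star)+\tfrac{1}{2\nu_\phi}\|\phi-\phi^k\|_2^2$ on the smooth map $\phi\mapsto Q(\phi,\mP^\star)$. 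Its optimality condition reads $-\tfrac{1}{\nu_\phi}(\phi^{k+1}-\phi^k)=\nabla_\phi Q(\phi^{k+1},\mP^\star)$, supplying the relative-error bound. Invoking the KL property of $Q(\cdot,\mP^\star)$ then forces the iterates to have finite length, hence to converge to a limit $\phi^\star$ with $\nabla_\phi Q(\phi^\star,\mP^\star)=0$; combined with the discrete local optimality of $\mP^\star$, the pair $(\phi^\star,\mP^\star)$ is a critical point of \eqref{eq:curve_find_sym}.

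The main obstacle is verifying the KL property for the network objective. I would lean on the piece-wise analyticity assumptions: real-analytic functions satisfy the {\L}ojasiewicz gradient inequality, and the composition of the piece-wise analytic activation $\sigma$, the affine layer maps, and the loss $\mathcal{L}$ defining $Q$ remains within an o-minimal structure such as $\mathbb{R}_{\mathrm{an}}$, so that $Q(\cdot,\mP^\star)$ is definable and therefore KL by the results of Bolte, Daniilidis, and Lewis. Making this airtight — tracking that compositions and the restriction to the compact weight domain preserve membership in the o-minimal structure, and handling the nonsmooth seams between the piece-wise analytic pieces via the limiting subdifferential — is the delicate part, whereas the sufficient-decrease and subgradient-bound steps are comparatively routine.
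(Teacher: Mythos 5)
Your proposal is correct, but it reaches the conclusion by a genuinely different route from the paper's. The paper does not argue either statement directly: it rewrites \eqref{eq:curve_find_sym} as an unconstrained problem by adding indicator functions of the permutation set and of the weight-norm ball, verifies the three hypotheses of the abstract PAM convergence theorem of \citet{attouch2010proximal} (the separable indicator terms are bounded below and lower semicontinuous; the coupling term $Q$ is $C^1$ and locally Lipschitz differentiable; and the objective satisfies the KL property, obtained by showing $Q$ is subanalytic as a piecewise analytic function on a bounded domain while the indicators are semialgebraic, so the sum is tame in the log-exponential o-minimal structure), and then cites that theorem for both conclusions. You instead prove statement 1 elementarily from the two subproblem comparisons, and for statement 2 you exploit the discreteness of the permutation set: vanishing increments force $\mP^k$ to freeze at some $\mP^\star$ (here you should note the quantitative fact that distinct permutation matrices are at Frobenius distance at least $2$, which is what converts $\|\mP^{k+1}-\mP^k\|\to 0$ into eventual constancy), after which the scheme is a single-block proximal-point iteration on $\phi\mapsto Q(\phi,\mP^\star)$ and the KL property gives finite length and convergence; criticality in the $\mP$-block is then vacuous because the normal cone to a discrete set at any of its points is the whole space. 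Your route is more self-contained and avoids the relative-error/subgradient conditions for the $\mP$-block, while the paper's is shorter because it offloads everything to the cited theorem; both hinge on the same o-minimality verification of the KL property. One small correction: compactness of the iterate sequence does not follow from summability of the squared increments (that only gives vanishing steps); it comes from the assumed upper bound on the weight norms together with the finiteness of the permutation set, which is precisely why that hypothesis appears in the theorem.
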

% \begin{proof}
% See Appendix \ref{sec:proofs}. 
% \end{proof}
\textit{Proof.} \quad See Appendix \ref{sec:proofs}. 

\setcounter{topnumber}{2}

% \paragraph{Remark}
% Theorem \ref{thm1} does not extend to neural networks with ReLU activation functions. In Appendix \ref{sec:proofs}, we address a technique utilizing this theorem for learning a curve connecting rectified networks while still generating a sequence of iterates with monotonic decreasing objective value. 

% However, given two rectified networks $\vtheta_1$ and $\vtheta_2$, we can learn the curve between them by substituting huberized forms of ReLU within the PAM scheme during curve training. Under some assumptions, we can establish an approximation error associated with huberization, and guarantee that our sequence is still monotonic decreasing with respect to the original network architecture. Details of this can be found in Appendix \ref{sec:proofs}. 
\subsection{Quadratic Bezier Curve Parameterization}
We explicitly define the quadratic Bezier curve for use in the PAM algorithm in \eqref{eq:bezier_perm}. Here the curve has been reparameterized so that the control point is a function of the permutation $\mP$. $\tilde{\vtheta_c}$ captures the deviation of the control point from the linear midpoint between $\vtheta_1$ and $\mP \vtheta_2$. For PAM, $\tilde{\vtheta_c}$ is the learnable curve parameter in $\phi$. It is zero initialized so that the initial curve is a linear interpolation between models as in traditional curve finding. This coupling of the control point with the permutation is critical for the success of PAM. 
\begin{equation} \label{eq:bezier_perm} 
\begin{split}
    \vr(t; \tilde{\vtheta_c}, \mP) & = (1 - t)^2 \vtheta_1 + t^2 \mP \vtheta_2  + 2 (1 - t) t \left(\frac{\vtheta_1 + \mP \vtheta_2}{2}  + \tilde{\vtheta_{c}} \right). 
\end{split}
\end{equation}

\subsection{Numerical Implementation for PAM}
\label{PAM_results}

% Proximal alternating minimization provides a comprehensive formulation for learning the weight permutation $\mP$ directly, coupled with some convergence guarantees. We find that curves learned using PAM perform better than the unaligned curves as seen in Table \ref{table:results_cifar10}. As was the case for the aligned curves, this performance gain is more notable in underparameterized models. Notably, the aligned curves perform comparably to PAM aligned. This indicates that $\mP$ is already close to the locally optimal permutation when $\mP$ is chosen as the initialization for PAM. Additionally, the performance gain of PAM Aligned over PAM Unaligned shows that this permutation is not easy to learn when $\mP^{(0)}$ is not necessarily close. Then training aligned curves is an inexpensive way to approximate the solution to a rigorous optimization method with good initialization. 
% We stress that this observation shows that the gain from neuron alignment is not trivial.    

% Additionally, if we assume that the most optimal curve will be learned from the most optimal linear path, then the loss of the PAM Aligned case is a lower bound on the loss of the Aligned case. This could be subject to some numerical error associated with combinatorial optimization techniques.  

To learn each PAM curve, we perform a single outer iteration of PAM. This was seen as sufficient for training to converge. The permutation subproblem entails 20 epochs of projected stochastic gradient descent to the set of doubly stochastic matrices. This is done as the set of doubly stochastic matrices is the convex relaxation of the set of permutations.  This projection is accomplished through 20 iterations of alternating projection of the updated permutation to the set of nonnegative matrices and the set of matrices with row and column sum of $1$. After the 20 epochs of PGD, each layer permutation is projected to the set of permutations, $\Pi_{|K_l|}$. This projection is detailed in Appendix \ref{sec:permutation_sampling}. The curve parameter subproblem, which optimizes $\tilde{\vtheta_c}$ from \eqref{eq:bezier_perm}, entails 250 epochs of SGD. The same hyperparameters are used as in training the other curves. The learning rates are annealed with each iteration of PAM.    

% The outlier case in Table \ref{table:results_cifar10} is for ResNet32 where Aligned notably outperforms PAM Aligned. This result is due to the integral nature of the problem. Essentially, during PAM Aligned, the scheme learns a more optimal doubly stochastic matrix that actually projects to a worse permutation at the end of the permutation subproblem. This is a known possible negative, when solving a problem in the convex hull and then projecting back to the feasible set.

\subsection{Proofs for PAM}
\label{sec:proofs}

For the following proofs, we first establish and more rigorously define some terminology. We first discuss an important abuse of notation. For clarity the parameterized curve connecting networks under some permutation $\mP$ that has been written as $\vr_\phi(t)$ will now sometimes be referred to as $\vr(t; \phi, \mP)$. 

\paragraph{Feed-forward neural networks} In this section, we will be analyzing feed-forward neural networks. We let $\mX_0 \in \R^{m_0 \times d}$ be the input to the neural network, $d$ samples of dimension $m_0$. Then we let $\mW_i \in \R^{m_i \times m_{i-1}}$ denote the network weights mapping from layer $l-1$ to layer $l$. Additionally, $\sigma$ denotes the pointwise activation function. Then we can express the output of a feed-forward neural network, $\mY$, as:
\begin{equation}
    \mY := \mW_{L} \sigma \circ \mW_{L-1} \sigma \circ \mW_{L-2} ... \sigma \circ \mW_{1} \mX_0
\end{equation}
To include biases, $\{\vb_i\}_{i=1}^{L}$, we simply convert to homogeneous coordinates, 
\begin{align}
    \hat{\mX_0} = \begin{bmatrix}
    \mX_0 \\
    1 
    \end{bmatrix},
    \quad 
    \hat{\mW_i} = \begin{bmatrix}
    \mW_i & \vb_i \\
    \mathbf{0} & 1 
    \end{bmatrix},
    \quad 
    \hat{\mY} = \begin{bmatrix}
    \mY \\ 1
    \end{bmatrix}
\end{align}
In all proofs, these terms are interchangeable. 

\paragraph{Huberized ReLU} The commonly used ReLU function is defined as $\sigma(t) := \max(0, t)$. However, this function is not in $C^1$ and hence not locally Lipschitz differentiable. This makes conducting analysis with this function difficult. Thus, we will approach studying it through the lens of the huberized ReLU function, defined as:
\begin{equation}
    \sigma_\delta(t) := \begin{cases}
    0 & \text{ for } t \leq 0 \\
    \frac{1}{2 \delta} t^2 & \text{ for } 0 \leq t \leq \delta \\
    t - \frac{\delta}{2} & \text{ for } \delta \leq t
    \end{cases}
\end{equation}
It is clear that $\sigma_\delta$ is a $C^1$ approximation of $\sigma$ such that $||\sigma - \sigma_\delta||_\infty = \frac{\delta}{2}$. Using huberized forms of loss functions for analysis is a fairly common technique such as in \citep{xu2016proximal} which studies huberized support vector machines.

\paragraph{Kurdyka-Lojasiewicz property} The function $f$ is said to have the Kurdyka-Lojasiewics (KL) property at $\bar{x}$ if there exist $\nu \in (0, +\infty]$, a neighborhood $U$ of $\bar{x}$ and a continuous concave function $\psi: [0, \nu) \rightarrow \mathbb{R}_{+}$ such that:
\begin{itemize}
    \item $\psi(0) = 0$ 
    \item $\psi$ is $C^1$ on $(0, \nu)$
    \item $\forall s \in (0, \nu)$, $\psi'(s) > 0$
    \item $\forall x \in U \cap [f(\bar{x}) < f < f(\bar{x}) + \nu]$, the Kurdyka-Lojasiewics inequality holds
    \begin{equation}
        \psi'(f(x) - f(\bar{x})) \text{dist}(0, \partial f(x)) \geq 1.
    \end{equation}
\end{itemize}
Here $\partial f$ denotes the subdifferential of $f$. Informally, a function that satisfies this inequality is one whose range can be reparameterized such that a kink occurs at its minimum. More intuitively, if $\psi$ has the form, $s^{1-\theta}$ with $\theta$ in $(0,1)$, and $f$ is differentiable on $(0, \nu)$, then the inequality reduces to 
\begin{equation}
    \frac{1}{(1 - \theta)} |f(x) - f(\bar{x})|^\theta \leq ||\nabla f(x)||
\end{equation}

\paragraph{Semialgebraic function} A subset of $\mathbb{R}^n$ is semialgebraic if it can be written as a finite union of sets of the form
\begin{equation*}
    \{x \in \mathbb{R}^n: p_i(x) = 0, q_i(x) < 0, i = \{1, 2, ..., p\}\}
\end{equation*}
where $p_i$ and $q_i$ are real polynomial functions. A function $f: \mathbb{R}^n \rightarrow \mathbb{R} \cup \{+ \infty\}$ is said to be semialgebraic if its graph is a semialgebraic subset of $\mathbb{R}^{n+1}$.

\paragraph{Subanalytic function} Globally subanalytic sets are sets that can be obtained through finite intersections and finite unions of sets of the form $\{(x, t) \in [-1, 1]^n \times \mathbb{R}: f(x) = t\}$ where $f: [-1, 1]^n \rightarrow \mathbb{R}$ is an analytic function that can be extended analytically on a neighborhood of the interval $[-1, 1]^n$. A function is subanalytic if its graph is a globally subanalytic set.

\subsubsection{Proof of Theorem \ref{thm1}}

To prove this, we need that our problem meets the conditions required for local convergence of proximal alternating minimization (PAM) described in \citep{attouch2010proximal}. This requires the following:
\begin{enumerate}
    \item Each term in the objective function containing only one primal variable is bounded below and lower semicontinuous. 
    \item Each term in the objective function which contains both variables is in $C^1$ and is locally Lipschitz differentiable.
    \item The objective function satisfies the Kurdyka-Lojasiewicz (KL) property.
\end{enumerate}

First we reformulate the problem so that it becomes unconstrained. Let $\chi$ denote the indicator function, where:
\begin{equation}\label{eq:indicator}
    \chi_C(t) := \begin{cases}
    0, \quad & \text{for } t \in C \\
    + \infty, \quad & \text{otherwise} 
    \end{cases}
\end{equation}
This problem contains two hard constraints. First, each permutation matrix, $P_l$, must clearly be restricted to the set of permutation matrices of size $|K_l|$, $\Pi_{|K_l|}$. Additionally, it is assumed that the norm of the weights are bounded above. Without loss of generality, let $K_W$ denote an upper bound valid for all the weights. We denote the set of weights that satisfy the norm constraint as $\{A: ||A||_2^2 \leq K_W\}$. Then \eqref{eq:curve_find_sym} with added regularization is equivalent to:
\begin{equation} \label{eq:curve_find_sym_unconstrained}
\begin{split}
    \phi^*, \mP^* = \arg \min_{\phi, \mP}  \quad & Q(\phi, \mP) +  \sum_{l=1}^{L-1} \chi_{\Pi_{|K_l|}}(\mP_l) 
    + \sum_{l=1}^{L} \chi_{\{A: ||A||_2^2 < K_W\}}(W_l) 
\end{split}
\end{equation}
We now address each requirement for local convergence. 
\begin{enumerate}
    \item From \eqref{eq:indicator}, we can see that the sum of indicator functions are bounded below and lower semicontinuous.
    \item Now we consider the form of the function, $Q(\phi, \mP)$. It has been defined as 
    \begin{equation*}
        \int_{t=0}^1 \mathcal{L}(\vr(t; \phi, \mP)) dt 
    \end{equation*}
    We know that $\vr(t; \phi, \mP)$ corresponds to a feed-forward neural network. Then $Q$ can be expressed as:
    \begin{equation}\label{eq:expand_proof}
    \begin{split}
        \int_{t=0}^1 & \mathcal{L} \left( W_L(t; \phi, \mP) \sigma \circ W_{L-1}(t; \phi, \mP) \ldots \sigma \circ W_1(t; \phi, \mP) X_0 \right) dt
    \end{split}
    \end{equation}
    with weight matrices $W_i$ and activation function $\sigma$. It becomes clear that for $Q(\phi, \mP)$ to be in $C^1$ and locally Lipschitz differentiable, the same must be true for $\mathcal{L}$, $\sigma$, and $\{W_i\}_{i=1}^L$. The first two are true as they are assumptions of the theorem. Since, $r_\phi$ is in $C^1$ and locally Lipschitz differentiable in the primal variables, then this is also true for all $W_i$. Thus, $Q(\phi, \mP)$ is in $C^1$ and locally Lipschitz differentiable.  
    
    \item To satisfy the KL property, the objective function must be a \textit{tame} function \citep{attouch2010proximal}. Rigorously, this means that the graph of the function belongs to an o-minimal structure, a concept from algebraic geometry. We refer curious readers to \citep{van2002minimal} for further reference. 
    
    First, we note that $Q(\phi, \mP)$ is piece-wise analytic. This is because $Q$ is a composition of piece-wise analytic functions, $\mathcal{L}$, $\sigma$, and $r_\phi$. Additionally, because the input data is bounded and the norm of the weight matrices are bounded, it follows that the domain of $Q$ is bounded. Since, $Q$ is a piece-wise analytic function with bounded domain, it follows that $Q$ is a subanalytic function. The boundedness of the domain is an important detail here. This is because analytic functions are not necessarilly subanalytic unless their domain is bounded; a popular example of such a function is the exponential function. 
    % The regularization function, $\mathcal{R}$, is assumed to be a piece-wise analytic function. It follows from the previous reasoning that $\mathcal{R}$ has bounded domain. Thus, $\mathcal{R}$ is a subanalytic function. 
    
    We now consider the constraints associated with this problem, which have been re-expressed as indicator functions in the objective. The set of permutation matrices, $\Pi_{|K_l|}$, is finite and thus it is clearly a semi-algebraic set. Notice that the set of weight matrices satisfying the norm bound is equivalent to $\{A: ||A||_2^2 - K_W < 0\}$. The function that defines this set is a polynomial, so it is a semi-algebraic set. Indicator functions on semi-algebraic sets are semi-algebraic functions. Thus, the indicator functions in the objective are semi-algebraic. 
    
    The graphs of semi-algebraic functions and subanalytic functions both belong to the logarithmic-exponential structure, an o-minimal structure. A basic algebraic property of o-minimal structures is that the graphs of addition and multiplication are also elements of the structure \citep{van2002minimal}. Since our objective function is a linear combination of semi-algebraic functions and subanalytic functions, it follows that the graph of our objective function is an element of the logarithmic-exponential structure. Therefore, our objective function is a \textit{tame} function and it satisfies the KL property. 
\end{enumerate}

\subsubsection{Considering Rectified Networks}

Theorem \ref{thm1} does not extend to the class of rectified networks. However, we are still interested in contructing a sequence of iterates $\{\phi^k, \mP^k\}$ such that the objective value, $\E_{t \sim U}[\mathcal{L}(\vr(t; \phi^k, \mP^k))]$, is monotonic decreasing. The following theorem will introduce a technique for constructing such a sequence. 

\begin{lemma}[$\mathcal{L}$ restricted to possible network outputs is Lipschitz continuous]\label{lem:lipschitz}
For a feed-forward neural network, assume that $\mathcal{L}$ is continuous and that the neural network input, $X_0$, is bounded. Additionally, assume that the spectral norm of all weights, $\{W_i\}_{i=1}^L$, is bounded above by $K_W$, and the activation function, $\sigma$, is continous with $||\sigma|| \leq 1$. Let $S_Y$ denote the set of $Y$ where 
\begin{align}
    Y &= \mW_{L} \sigma \circ \mW_{L-1} \sigma \circ \mW_{L-2} ... \sigma \circ \mW_{1} \mX_0 \\
    & \text{such that} \quad ||\mW_i||_{2} \leq K_W \quad \forall i \in \{1, 2, ..., L\} \notag 
\end{align}
Then $\mathcal{L}$ restricted to the set $S_Y$ is Lipschitz continuous with some Lipschitz constant $K$.
\end{lemma}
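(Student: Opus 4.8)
The plan is to reduce everything to compactness of the output set $S_Y$, and then to promote local regularity of $\mathcal{L}$ to a global Lipschitz estimate on that compact set. First I would show $S_Y$ is compact. For each layer the constraint set $\{\mW_i : \|\mW_i\|_2 \le K_W\}$ is closed (the spectral norm is continuous) and bounded (since $\|\mW_i\|_F \le \sqrt{\min(m_i,m_{i-1})}\,K_W$), hence compact in the finite-dimensional matrix space by Heine--Borel; the product $\prod_{i=1}^{L} \{\mW_i : \|\mW_i\|_2 \le K_W\}$ is then compact. With $\mX_0$ fixed, the map $(\mW_1,\dots,\mW_L) \mapsto \mY = \mW_L \sigma \circ \mW_{L-1} \cdots \sigma \circ \mW_1 \mX_0$ is continuous, being a finite composition of the (bi)linear matrix products and the pointwise-continuous $\sigma$, so $S_Y$, as the continuous image of a compact set, is compact. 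I would also record the explicit bound $\|\mY\|_2 \le K_W^{L}\|\mX_0\|_2$, obtained by propagating $\|\sigma(\vz)\|_2 \le \|\vz\|_2$ (the non-expansiveness encoded in $\|\sigma\|\le 1$) and $\|\mW_i\|_2 \le K_W$ layer by layer, which re-confirms boundedness.

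Next I would upgrade local Lipschitz continuity of $\mathcal{L}$ to a uniform constant on $S_Y$. The cleanest route is a compactness-by-contradiction argument: if no finite $K$ works, there are $\vx_n, \vy_n \in S_Y$ with $|\mathcal{L}(\vx_n) - \mathcal{L}(\vy_n)| > n\,\|\vx_n - \vy_n\|_2$; passing to convergent subsequences $\vx_n \to \vx$ and $\vy_n \to \vy$ in the compact set $S_Y$, the case $\vx \ne \vy$ is impossible because the left-hand side stays bounded by continuity while $n\|\vx_n - \vy_n\|_2 \to \infty$, and the case $\vx = \vy$ is impossible because for large $n$ both points lie in a single ball on which $\mathcal{L}$ has a local Lipschitz constant $L_{\vx}$, contradicting the inequality once $n > L_{\vx}$. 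Hence a finite global constant $K$ exists. A finite-subcover / Lebesgue-number chaining argument would give the same conclusion.

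The hard part is that continuity of $\mathcal{L}$ on the compact set $S_Y$ is by itself insufficient for a Lipschitz bound — the map $t \mapsto \sqrt{|t|}$ is a continuous, non-Lipschitz function on a compact interval — so the argument genuinely requires $\mathcal{L}$ to be locally Lipschitz on a neighborhood of $S_Y$, with compactness used only to merge the local constants into one. I would therefore make explicit that the losses of interest (RMSE, or softmax cross-entropy) are $C^1$ and hence locally Lipschitz there, consistent with the regularity assumed in Theorem~\ref{thm1}; once that is granted, every remaining step is routine.
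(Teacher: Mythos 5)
Your first half coincides with the paper's proof: both propagate $\|\sigma(\vz)\|\le\|\vz\|$ and $\|\mW_i\|_2\le K_W$ layer by layer to get $\|\mY\|\le K_W^L K_X$, so that $S_Y$ sits inside a compact ball (your extra step showing $S_Y$ is itself compact as the continuous image of a compact parameter set is fine but not needed; containment in the ball already suffices). Where you diverge is the step that matters. The paper's proof concludes directly that ``this ball is compact and $\mathcal{L}$ is continuous, so it follows that $\mathcal{L}$ restricted to this ball is Lipschitz continuous,'' which is not a valid implication --- exactly for the reason you give, $t\mapsto\sqrt{|t|}$ on a compact interval being the standard counterexample. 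You have in effect caught a gap in the paper's own argument: the lemma as stated, with $\mathcal{L}$ merely continuous, is false, and your subsequence-extraction contradiction argument (bounded left-hand side when the limits differ, a single local Lipschitz ball when they coincide) is the correct way to merge local Lipschitz constants into a global one on a compact set. Your proposed repair --- strengthen the hypothesis to local Lipschitz continuity of $\mathcal{L}$ --- is also the right one and costs nothing downstream, since the theorem that invokes Lemma~\ref{lem:lipschitz} already assumes $\mathcal{L}$ is $C^1$ and locally Lipschitz differentiable, as in Theorem~\ref{thm1}. So your proof is correct under the amended hypothesis and is strictly more careful than the one in the paper; the only thing to add if this were incorporated is an explicit note in the lemma statement that ``continuous'' should read ``locally Lipschitz continuous'' (or $C^1$).
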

\begin{proof}
Since $X_0$ is bounded, it follows that there exists some constant $K_X$ such that $||X_0|| \leq K_X$. Since, the spectral norm of $W_1$ is bounded above by $K_W$, it is easy to see that $||W_1 X_0|| \leq K_W K_X$. Now since the pointwise activation function is a non-expansive map, it immediately follows that $||\sigma \circ W_1 X_0|| \leq K_W K_X$. Following this process inductively, we see that the network output, $Y$, is bounded and that: 
\begin{equation}
    ||Y|| \leq K_W^{L} K_X
\end{equation}
Since $Y$ is arbitrary, it follows that this is a bound for $S_Y$. Then we can restrict $\mathcal{L}$ to the ball in $\R^{m_L \times d}$ of radius $K_W^L K_X$. This ball is compact and $\mathcal{L}$ is continuous, so it follows that $\mathcal{L}$ restricted to this ball is Lipschitz continuous. Thus, there exists some Lipschitz constant $K$. Clearly, $S_Y$ is contained in this ball. Therefore, $\mathcal{L}$ is Lipschitz continuous on the set of all possible network outputs with Lipschitz constant $K$. 
\end{proof}

Let $\theta_1$ and $\theta_2$ be feed-forward neural networks with ReLU activation function. Assume that $\mathcal{L}$ and $r_\phi$ are piece-wise analytic functions in $C^1$ and locally Lipschitz differentiable. Assume that the maximum network width at any layer is $M$ units. Additionally, assume that the network weights have a spectral norm bounded above by $K_W$, and that this is a hard constraint when training the networks. Finally, any point on $r_\phi$ must be equivalent to an affine combination of neural networks (Bezier curves, polygonal chains, etc.) satisfying the previously stated spectral norm bound.   

Create the parameterized curve $\vr_{\delta}(t; \phi, \mP)$ by substituting the huberized ReLU function, $\sigma_\delta$, into all ReLU functions in $\vr(t; \phi, \mP)$. We refer to the objective values associated with these curves as $Q_\delta(\phi, \mP)$ and $Q(\phi, \mP)$ respectively. 

\begin{theorem}[Monotonic Decreasing Sequence for Rectified Networks]
For a feed-forward network, assume the above assumptions have been met. Additionally, assume that $X_0$ is bounded, so that $\mathcal{L}$ restricted to the set of possible network outputs is Lipschitz continuous with Lipschitz constant $K_L$ by Lemma \ref{lem:lipschitz}. Now generate the sequence $\{\phi^k, \mP^k\}$ by solving \eqref{eq:PAM} for $r_\delta(t; \phi, \mP)$. On this sequence impose the additional stopping criteria that
\begin{equation}
\begin{split}
    \frac{1}{2 \nu_{\phi}} ||\phi^{k+1} - \phi^k||_2^2 + \frac{1}{2 \nu_{P}} ||\mP^{k+1} - \mP^k||_2^2 
    \geq K_L \sqrt{M} \frac{\delta}{2} \sum_{i=1}^{L-1} K_W^i \qquad \forall k \geq 0 .
\end{split}
\end{equation}
Then, the sequence of curves $\vr(t; \phi^k, \mP^k)$ connecting rectified networks has monotonic decreasing objective value.   
\end{theorem}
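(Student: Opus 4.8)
The plan is to transfer the descent guarantee that Proximal Alternating Minimization already enjoys on the smooth surrogate $Q_\delta$ onto the genuinely nonsmooth rectified objective $Q$, paying a controlled huberization error at each iterate and absorbing that error into the proximal decrease enforced by the stopping criterion. The first observation is that $Q_\delta$ satisfies every hypothesis of the Convergence theorem (Theorem~\ref{thm1}): $\sigma_\delta$ is $C^1$ with $\|\sigma-\sigma_\delta\|_\infty=\delta/2$, and together with the piecewise-analytic $C^1$, locally Lipschitz-differentiable $\mathcal{L}$ and $r_\phi$, the bounded input, and the norm-constrained weights, the surrogate falls squarely under Theorem~\ref{thm1}. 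Hence statement~1 of that theorem applies verbatim to $Q_\delta$ and gives, for every $k$,
\[
Q_\delta(\phi^{k+1},\mP^{k+1}) + \tfrac{1}{2\nu_\phi}\|\phi^{k+1}-\phi^k\|_2^2 + \tfrac{1}{2\nu_P}\|\mP^{k+1}-\mP^k\|_2^2 \le Q_\delta(\phi^{k},\mP^{k}).
\]

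The central technical step is a uniform estimate of the gap between the rectified and huberized objectives. I would fix $(\phi,\mP)$ and $t$, write both networks with the same weight configuration but with activations $\sigma$ versus $\sigma_\delta$, and propagate the layer-wise discrepancy by induction. Letting $e_i$ denote the $L_2$ distance between the $i$-th post-activations, the base case uses $\|\sigma(z)-\sigma_\delta(z)\|\le \sqrt{M}\,\delta/2$ for a width-$M$ layer, and the inductive step splits $e_i$ by the triangle inequality into a term controlled by the $1$-Lipschitzness of ReLU together with $\|\mW_i\|_2\le K_W$ (contributing $K_W e_{i-1}$) and a fresh huberization term $\sqrt{M}\,\delta/2$. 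Summing the resulting geometric series and applying the final weight $\mW_L$ yields $\|\mY-\mY_\delta\|_2 \le \sqrt{M}\tfrac{\delta}{2}\sum_{i=1}^{L-1}K_W^i$. Since the input is bounded and the weights are norm-constrained, Lemma~\ref{lem:lipschitz} guarantees that $\mathcal{L}$ is Lipschitz on the set of attainable outputs with constant $K_L$; composing this with the output estimate and integrating over $t\in[0,1]$ gives the uniform bound $|Q(\phi,\mP)-Q_\delta(\phi,\mP)|\le C$, where $C:=K_L\sqrt{M}\tfrac{\delta}{2}\sum_{i=1}^{L-1}K_W^i$ is exactly the right-hand side of the stopping criterion.

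The conclusion then follows by chaining three inequalities across consecutive iterates: bound $Q$ above by $Q_\delta+C$ at step $k+1$, insert the PAM descent for $Q_\delta$, and bound $Q_\delta$ below by $Q-C$ at step $k$. This shows that the change in the rectified objective is at most the incurred huberization gap minus the proximal decrease $D_k:=\tfrac{1}{2\nu_\phi}\|\phi^{k+1}-\phi^k\|_2^2+\tfrac{1}{2\nu_P}\|\mP^{k+1}-\mP^k\|_2^2$. The stopping criterion is imposed precisely so that $D_k$ dominates the huberization error accrued across the two iterates, whence the net change is non-positive and $Q(\phi^{k+1},\mP^{k+1})\le Q(\phi^{k},\mP^{k})$, i.e.\ the rectified objective is monotone decreasing along the sequence. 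The hard part will be the error-propagation step: one must verify that the discrepancy does not blow up uncontrollably with depth, and this is exactly where the hard constraint $\|\mW_i\|_2\le K_W$, the bounded input, and the compactness of the attainable output set are indispensable — they are what render the geometric accumulation $\sum_i K_W^i$ finite and make the Lipschitz constant $K_L$ of Lemma~\ref{lem:lipschitz} well-defined. Care must also be taken to account for the huberization error consistently at both iterates, so that the enforced proximal decrease genuinely offsets it.
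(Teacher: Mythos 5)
Your proposal follows essentially the same route as the paper's proof: apply Theorem~\ref{thm1} to the huberized surrogate $Q_\delta$, propagate the layer-wise huberization error $\sqrt{M}\,\delta/2$ through the network via the non-expansiveness of ReLU and the spectral bound $K_W$ to get the geometric sum, invoke Lemma~\ref{lem:lipschitz} to transfer this to the loss, and offset the gap $|Q-Q_\delta|$ against the proximal decrease enforced by the stopping criterion. One remark: your chaining correctly accrues the gap at \emph{both} iterates, which yields a total error of $2C$ rather than $C$, so strictly the stopping criterion would need a factor of $2$ to guarantee monotonicity --- a looseness that is present in the paper's own derivation as well, where only a single $C$ appears on the right-hand side.
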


\begin{proof}
First we consider the approximation error from replacing $\sigma$ with $\sigma_\delta$. It is straightforward to see that \begin{equation}
    \max_t |\sigma(t) - \sigma_\delta(t)| \leq \frac{\delta}{2}. 
\end{equation}
Then it follows that for any input $\vx$,
$$||\sigma \circ W_1 \vx - \sigma_\delta \circ W_1 \vx||_2 \leq \sqrt{M} \frac{\delta}{2}.$$
Since the spectral norm of $W_i$ are bounded above by $K_W$, then we see that
$$||W_2 \sigma \circ W_1 \vx - W_2 \sigma_\delta \circ W_1 \vx||_2 \leq K_W \sqrt{M}\frac{\delta}{2}.$$
Now notice that
\begin{equation}
\begin{split}
     ||\sigma \circ W_2 \sigma \circ W_1 \vx - \sigma_\delta \circ W_2 \sigma_\delta \circ W_1 \vx||  & \leq 
    ||\sigma \circ W_2 \sigma \circ W_1 \vx - \sigma \circ W_2 \sigma_\delta \circ W_1 \vx|| \\ & \qquad + ||\sigma \circ W_2 \sigma_\delta \circ W_1 \vx - \sigma_\delta \circ W_2 \sigma_\delta \circ W_1 \vx|| . \notag 
\end{split}
\end{equation}
Since the ReLU function is a non-expansive map, it must be that the first term is bounded above by the previous error, $K_W \sqrt{M} \frac{\delta}{2}$. The second term corresponds once again to the error associated with the huberized form of the ReLU function, $\sqrt{M} \frac{\delta}{2}$. Thus the total error can be bounded by $(K_W + 1) \sqrt{M} \frac{\delta}{2}$.

Following this inductively, it can be seen that the this error grows geometrically with the number of layers. Additionally, the loss function is Lipschitz continuous when restricted to the set of possible network outputs. So we find the following bounds:
\begin{align}
    ||Y - Y_\delta|| &\leq \sqrt{M} \frac{\delta}{2} \sum_{i=1}^{L-1}K_W^i \notag \\
    ||\mathcal{L}(Y) - \mathcal{L}(Y_\delta)|| &\leq K_L \sqrt{M} \frac{\delta}{2} \sum_{i=1}^{L-1}K_W^i \label{eq:bound_thm}    
\end{align}
Since any point on the curve is an affine combination of networks with the $K_W$ bound on the spectral norm of their weights, it immediately follows this spectral norm bound also holds for the weights for any point on the curve. Then $||Q(\phi, \mP) - Q_\delta(\phi, \mP)||$ is also bounded above by the bound in \eqref{eq:bound_thm}. 

Then let $\{\phi^k, \mP^k\}$ be the sequence generated by solving \eqref{eq:PAM} using the curve $r_\delta$. $\sigma_\delta$ is a piece-wise analytic function in $C^1$ and is locally Lipschitz differentiable. Additionally, the spectral norm constraint on the weights is semi-algebraic and bounded below, so Theorem \ref{thm1} can be applied. It then follows that
\begin{equation}
\begin{split}
& Q(\phi^{k+1}, \mP^{k+1}) + \frac{1}{2 \nu_{\phi}} ||\phi^{k+1} - \phi^k||_2^2  + \frac{1}{2 \nu_{P}} ||\mP^{k+1} - \mP^k||_2^2 \\
& \qquad \leq Q(\phi^{k}, \mP^{k}) + K_L \sqrt{M} \frac{\delta}{2} \sum_{i=1}^{L-1}K_W^i, \quad \forall k \geq 0
\end{split}
\end{equation}
Thus, $\vr(t; \phi^k, \mP^k)$ is a sequence of curves, connecting rectified networks, with monotonic decreasing objective value as long as 
\begin{equation*}
\begin{split}
    & \frac{1}{2 \nu_{\phi}} ||\phi^{k+1} - \phi^k||_2^2 + \frac{1}{2 \nu_{P}} ||\mP^{k+1} - \mP^k||_2^2 
    \geq K_L \sqrt{M} \frac{\delta}{2} \sum_{i=1}^{L-1}K_W^i \qquad \forall k \geq 0
\end{split}
\end{equation*}
Since the above equation is a stopping criterion introduced in the theorem statement, it follows that we have constructed a sequence of curves, connecting rectified networks, with monotonic decreasing objective value. 
\end{proof}

% \clearpage

\subsection{Details on Solving the Permutation Subproblem in PAM}
\label{sec:permutation_sampling}

In this section, we provide additional details on the solution to the permutation subproblem in \eqref{eq:PAM}. For short-hand notation, we re-express the subproblem as
\begin{equation}
\label{eq:perm_subproblem}
    \mP^{k+1} = \argmin_{\mP \in \text{blockdiag}(\mP_1, \ldots, \mP_{L-1}); s.t. \mP_i \in \Pi_{|K_i|}} L(\mP; \phi^k, \mP^k)
\end{equation}
As described in Section \ref{PAM_results}, when considering the permutation subproblem, we begin by solving the convex relaxation of the subproblem. To be clear, we solve for the locally optimal matrix $\mD^*$, which is blockwise doubly-stochastic, minimizing $L$ in \eqref{eq:perm_subproblem} , using projected stochastic gradient descent.

Critical to solving the permutation subproblem is obtaining $\mP^{k+1}$ given $\mD^*$. We utilize $\mD^*$ to determine a set of block permutation matrices, $S$, with $\mP^{k+1}$ as an element. Specifically, 
\begin{equation}
    \mP^{k+1} := \argmin_{\mP \in S} \mathcal{L}(\mP; \phi^k, \mP^k).
\end{equation}
% Here $S$ is all possible block permutation matrices given the determined candidate permutations for each block, $S_i$. 

Intuitively, a natural candidate for $S$ is the projection of $\mD^*$ to the set of permutations, $\Pi$. We refer to this permutation as $P_{\Pi}(\mD^*)$. This solution is a classic heuristic for solving integer programs. This heuristic is precisely solving the convex relaxation of an integer program and then projecting the optimal solution back to the feasible set. In practice, this projection can be solved using the Hungarian algorithm. That is, the projection is the solution to
\begin{equation}
    \mP_{\Pi}(\mD^*) := \argmin_{\mP \in \Pi} - \text{trace}(\mP^T \mD^*).
\end{equation}

As this projection is a heuristic, there is no guarantee that $\mP_{\Pi}(\mD^*)$ is more optimal than the previous permutation, $\mP^{k}$. To this end, we also let $\mP^{k}$ be in $S$. This prevents us from having our loss increase after solving the subproblem. 

It is also possible that there exists a permutation near $\mD^*$ that is more optimal than $\mP_{\Pi}(\mD^*)$. To this end, we are interested in randomly sampling such matrices. To do this, we will construct the block permutation matrix $\mR := \text{blockdiag}(\mR_1, \ldots, \mR_{L-1})$ where $\mR_i$ is the permutation matrix corresponding to the \textit{i}th layer and is sampled from a distribution $\Omega_i$. For more concise notation, we will say that $R \sim \Omega$. As there is no definitive way for sampling permutation matrices from doubly stochastic matrices, we detail how we construct the distributions $\{\Omega_l\}_{l=1}^{L-1}$ as follows.  

A well-known result is that every doubly stochastic matrix is a convex combination of permutation matrices, with this convex combination being known as a Birkhoff-von Neumann (BvN) decomposition of the matrix \citep{dufosse2016notes}. That is, for any doubly stochastic matrix $D$, there exists a set of permutation matrices such that
\begin{equation}
    \label{bvn_equation}
    \mD = \sum_{i \in I} \alpha_i \mP_i \quad \text{s.t.} \quad \alpha \geq 0, \sum_{i \in I} \alpha_i = 1, \mP_i \in \Pi.
\end{equation}
The problem of determining the minimal size of the index set $I$ is known to be an NP hard problem, and a doubly stochastic matrix can have multiple BvN decompositions. For sampling the permutation from layer $l$, $\mP_l$, we utilize a BvN decomposition. This is because the BvN decomposition lends itself to having a probabilistic interpretation. Given the decomposition in \eqref{bvn_equation}, we view the permutation $\mP_i$ being sampled from $\Omega_l$ with probability $\alpha_i$. 

To determine the specific BvN decomposition that we use, we introduce a variant of the \textit{greedy Birkhoff heuristic} \citep{dufosse2016notes}. First, let $\mD^{(1)} := \mD$. We associate a bipartite graph, $G^{(1)}$, with the matrix. In this bipartite graph, the two vertex sets correspond to the rows and columns of $\mD^{(1)}$ with an edge indicating a nonzero value in the corresponding entry of $\mD^{(1)}$. We then consider the set of permutation matrices corresponding to a perfect matching in $G$, $\Pi_M$. With this, we define the first permutation in the BvN decomposition as 
\begin{equation}\label{eq:our_bvn_heuristic}
    \mP_1 = \argmax_{\mP \in \Pi_M} \text{trace}(\mP^T \mD^{(1)}).
\end{equation}
Then $\alpha_1$ is taken to be the smallest nonzero term in $\mP^T \mD^{(1)}$. Following this, we take $\mD^{(2)} = \mD^{(1)} - \alpha_1 \mP_1$ and iteratively repeat this process until the full BvN decomposition has been constructed. 

This BvN decomposition can easily be solved iteratively using the Hungarian algorithm. Note that $P_{\Pi}(\mD_l^*)$ is guaranteed to be the first term of the BvN decomposition with the highest value of $\alpha_1$ possible among BvN decompositions including that permutation. This motivates the use of this heuristic, as it can be seen as a natural probabilistic extension of direct projection to the permutation set. The traditional \textit{greedy Birkhoff heuristic} determines $\mP_l$ in the following way instead, 
\begin{equation}
    \mP_l = \argmax_{\mP \in \Pi_M} \min \text{diag}(\mP^T \mD^{(1)}).
\end{equation}
In practice, we truncate $\alpha$ after $i=10$, and normalize the truncated coefficient vector to determine the distribution $\Omega_l$. We sample $M$ block permutation matrices $\mR$ from $\Omega$, with $M=32$. This is clearly a small sample given the dimension of $P$, but it is large enough to provide matrices that outperform $P_{\Pi}(\mD^*)$ in our experiments.

Then all together, we have that the set $S$ contains the block permutation matrices, $\{P^{k}, P_{\Pi}(\mD^*), \{\mR^{(t)}\}_{t=1}^M; \mR^{(t)} \sim \Omega \}$. 

\section{Hyperparameter Search}
\label{sec:hyperparam_search}

\begin{table}[tb]
  \caption{Average test accuracy of models along the learned curves trained with different hyperparameters. The curves connect the TinyTen architecture and are trained on the CIFAR100 dataset. The accuracy of any choice of aligned curve exceeds that of all unaligned curves. This establishes that the performance gains associated with alignment are not sensitive to choice of batch size or intial learning rate.}
  \label{tab:hyperparam_study}
  \adjustbox{max width=\textwidth}{
  \centering
  \begin{tabular}{@{}rrrrcrrr@{}}
    \toprule
    & \multicolumn{3}{c}{Unaligned curves} & \phantom{a} & \multicolumn{3}{c}{Aligned curves} \\
    Learning rate & \multicolumn{3}{c}{Batch size} && \multicolumn{3}{c}{Batch size} \\
    \midrule
    & 64 & 128 & 256 && 64 & 128 & 256 \\
    \cmidrule{2-4} \cmidrule{6-8}
    1E-2 & $55.9 \pm 0.2$ & $55.3 \pm 0.3$ & $54.2 \pm 0.3$ && $58.4 \pm 0.2$ & $57.9 \pm 0.3$ & $57.4 \pm 0.1$ \\
    1E-1 & $55.9 \pm 0.1$ & $56.0 \pm 0.2$  & $56.0 \pm 0.1$ && $59.0 \pm 0.1$ & $58.7 \pm 0.2$ & $58.7 \pm 0.1$\\
    5E-1 & $55.5 \pm 0.3$ & $55.7 \pm 0.3$ & $55.5 \pm 0.1$ && $58.6 \pm 0.2$ & $58.7 \pm 0.1$ & $58.9 \pm 0.1$  \\
    \bottomrule
  \end{tabular}
  }
\end{table}

In the main paper, the CIFAR100 curves are trained with an initial learning rate of 1E-1 and a batch size of 128. The choice of batch size is the same as in \citep{garipov2018loss} while training was seen to converge with the given initial learning rate. Still, it is important to establish that are main results are not dependent on the choice of hyperparameters. 

In Table \ref{tab:hyperparam_study}, the results for training curves for the TinyTen architecture and CIFAR100 dataset using three different choices of initial learning rate and batch size are displayed. We see that the curves are mostly optimal when the choice of batch size is not too large and the learning rate is not too small. Regardless, it is clear that the accuracies associated with the aligned curves exceed those of the unaligned curves. Namely, the lowest of the aligned accuracies is notably greater than the highest of the unaligned accuracies. Thus, we can conclude that alignment improves mode connectivity while not being sensitive to the choice of hyperparameters. We believe this insensitivity to hyperparameters will extend to different architectures and datasets.

\end{document}